\documentclass[11pt,twoside]{article}

\usepackage{fullpage}
\usepackage{epsf}
\usepackage{fancyhdr}
\usepackage{graphics}
\usepackage{graphicx}
\usepackage{psfrag}
\usepackage{caption}
\usepackage{subcaption}
\usepackage{color}
\usepackage{amssymb}
\usepackage{amsthm}
\usepackage{amsfonts}
\usepackage{amsmath}
\usepackage{amssymb}
\usepackage{hyperref}
\usepackage{algorithm}
\usepackage{algorithmic}
\usepackage{bbm}
\usepackage{tikz}
\usepackage{caption}



\theoremstyle{definition}

\newtheorem{theos}{Theorem}
\newtheorem{props}{Proposition}
\newtheorem{lems}{Lemma}
\newtheorem{cors}{Corollary}
\newtheorem{defn}{Definition}

\newlength{\widebarargwidth}
\newlength{\widebarargheight}
\newlength{\widebarargdepth}



\makeatletter
\long\def\@makecaption#1#2{
        \vskip 0.8ex
        \setbox\@tempboxa\hbox{\small {\bf #1:} #2}
        \parindent 1.5em  
        \dimen0=\hsize
        \advance\dimen0 by -3em
        \ifdim \wd\@tempboxa >\dimen0
                \hbox to \hsize{
                        \parindent 0em
                        \hfil
                        \parbox{\dimen0}{\def\baselinestretch{0.96}\small
                                {\bf #1.} #2
                                }
                        \hfil}
        \else \hbox to \hsize{\hfil \box\@tempboxa \hfil}
        \fi
        }
\makeatother


\long\def\comment#1{}


\newcommand{\inprod}[2]{\ensuremath{\langle #1, \, #2 \rangle}}



\newcommand{\numobs}{\ensuremath{n}}
\newcommand{\usedim}{\ensuremath{d}}


%

\newcommand{\Yspace}{\ensuremath{\mathcal{Y}}}




\newcommand{\fone}{\ensuremath{g}}                     

\newcommand{\ftwo}{\ensuremath{h}}                     

\newcommand{\nonsmoothf}{\ensuremath{\varphi}}                 

\newcommand{\range}{\ensuremath{\mathcal{C}}}
\newcommand{\convmajor}{\ensuremath{q}}

\newcommand{\grad}[1]{\ensuremath{\nabla #1}}
\newcommand{\subgrad}[1]{\ensuremath{\partial #1 }}
\newcommand{\M}{\ensuremath{M}}
\newcommand{\sparsity}{s}

\newcommand{\genericconvex}{\ensuremath{p}}
\newcommand{\gradftwo}{\ensuremath{u^k}}
\newcommand{\x}[1]{\ensuremath{x^{#1}}}

\newcommand{\gradg}[1]{\ensuremath{v^{#1}} }

\newcommand{\curvatureconst}[1]{\ensuremath{\mathcal{C}_{#1}}}

\newcommand{\C }[1]{\ensuremath{C_{#1} }}
\newcommand{\fwgap}[1]{\ensuremath{g^{#1}}}

\newcommand{\minfwgap}[1]{\ensuremath{\bar{g}^{#1}}}

\newcommand{\y}[1]{\ensuremath{y_{#1}}}

\newcommand{\frechet }[1]{\ensuremath{\widehat{\partial}{#1}}}

\newcommand{\rate }{\ensuremath{r}} \newcommand{\f}{\ensuremath{f}}

\newcommand{\reanaone}{\ensuremath{h}} 
\newcommand{\reanatwo}{\ensuremath{g}} 
\newcommand{\nbhd}{\ensuremath{V}} 

\newcommand{\genericset}{\ensuremath{S}}
\newcommand{\myconvexset}{\ensuremath{\mathcal{X}}}
\newcommand{\indicator}[1]{\ensuremath{\mathbbm{1}_{#1}}}
\newcommand{\xNormalojGrad}[2]{\ensuremath{\grad f_{#2}( {#1} ) }}

\renewcommand{\j}{\ensuremath{j}}

\newcommand{\CompactSet}{S}

\newcommand{\avg}[1]{\ensuremath{\textup{Avg}\left( #1 \right)}}

\newcommand{\Gavg}[1]{\ensuremath{\textup{GAvg}\left( #1 \right)}}

\newcommand{\limiting}[1]{\ensuremath{\partial_{L}#1 }}

\newcommand{\deltak}[2]{\ensuremath{\Delta^{#1}_{#2}}}

\newcommand{\Bigo}[1]{\ensuremath{\mathcal{O}\left( #1 \right) }}
\newcommand{\LinModMat}{\ensuremath{B}}

\newcommand{\intr}{\operatorname{int}}
\newcommand{\mydefn}{\ensuremath{: \, =}}
\newcommand{\fstar}{\ensuremath{f^*}}
\newcommand{\real}{\ensuremath{\mathbb{R}}}
\newcommand{\Level}{\ensuremath{\mathcal{L}}}
\newcommand{\order}{\ensuremath{\mathcal{O}}}
\newcommand{\Xspace}{\ensuremath{\mathcal{X}}}

\newcommand{\dom}{\ensuremath{\operatorname{dom}}}
\newcommand{\diam}{\ensuremath{\operatorname{diam}}}
\newcommand{\dist}{\ensuremath{\operatorname{dist}}}
\newcommand{\sgn}{\ensuremath{\operatorname{sign}}}

\newcommand{\graph}{\ensuremath{\operatorname{graph}}}
\newcommand{\xstar}{\ensuremath{x^{*}}}


\newcommand{\Intensity}[1]{\ensuremath{I}_{#1}}
\newcommand{\xNormal}[1]{\ensuremath{p_{#1}}}
\newcommand{\yNormal}[1]{\ensuremath{q_{#1}}}
\newcommand{\Dir}{\ensuremath{L}}
\newcommand{\vertex}[1]{\ensuremath{v_{#1}}}
\newcommand{\NORMAL}[1]{\ensuremath{N_{#1}}}
\newcommand{\xCoord}{\ensuremath{x}}
\newcommand{\yCoord}{\ensuremath{y}}
\newcommand{\zCoord}{\ensuremath{z}}
\newcommand{\Poly}{\ensuremath{P}}

\newcommand{\step}[1]{\ensuremath{t}^{#1}}
\newcommand{\imRow}{\ensuremath{r}}
\newcommand{\imCol}{\ensuremath{c}}
\newcommand{\mixFun}{\ensuremath{\zeta}}


\newenvironment{carlist}
 {\begin{list}{$\bullet$}
 {\setlength{\topsep}{0in} \setlength{\partopsep}{0in}
  \setlength{\parsep}{0in} \setlength{\itemsep}{\parskip}
  \setlength{\leftmargin}{0.15in} \setlength{\rightmargin}{0.08in}
  \setlength{\listparindent}{0in} \setlength{\labelwidth}{0.08in}
  \setlength{\labelsep}{0.1in} \setlength{\itemindent}{0in}}}
 {\end{list}}

 \newcommand{\bcar}{\begin{carlist}}
\newcommand{\ecar}{\end{carlist}}


\usepackage{array}
\newcolumntype{P}[1]{>{\centering\arraybackslash}p{#1}}

\newcommand{\ROB}{\ensuremath{\Psi}}


\begin{document}


\begin{center}

{\bf{\LARGE{Convergence guarantees for a class of \\ non-convex and
      non-smooth optimization problems}}}

\vspace*{.2in}

{\large{
\begin{tabular}{ccc}
Koulik Khamaru $^{\dagger}$ & & Martin J. Wainwright
$^{\dagger,\star}$ \\
\end{tabular}
}}

\vspace*{.2in}

\begin{tabular}{c}
  Department of Electrical Engineering and Computer Sciences$^\star$,
  and \\ Department of Statistics$^{\dagger}$, \\ UC Berkeley,
  Berkeley, CA 94720
\end{tabular}

\vspace*{.2in}

\today

\vspace*{.2in}

\begin{abstract}
  We consider the problem of finding critical points of functions that
  are non-convex and non-smooth. Studying a fairly broad class of such
  problems, we analyze the behavior of three gradient-based methods
  (gradient descent, proximal update, and Frank-Wolfe update). For
  each of these methods, we establish rates of convergence for general
  problems, and also prove faster rates for continuous sub-analytic
  functions. We also show that our algorithms can escape strict saddle
  points for a class of non-smooth functions, thereby generalizing
  known results for smooth functions. Our analysis leads to a
  simplification of the popular CCCP algorithm, used for optimizing
  functions that can be written as a difference of two convex
  functions.  Our simplified algorithm retains all the convergence
  properties of CCCP, along with a significantly lower cost per
  iteration. We illustrate our methods and theory via applications to
  the problems of best subset selection, robust estimation, mixture
  density estimation, and shape-from-shading reconstruction.
\end{abstract}

\end{center}


\section{Introduction}
\label{SecIntro}

Non-convex optimization problems arise frequently in statistical
machine learning; examples include the use of non-convex penalties for
enforcing sparsity~\cite{FanLi91,loh2013regularized}, non-convexity in
likelihoods in mixture modeling~\cite{yan2017convergence}, and
non-convexity in neural network training~\cite{li2017convergence}.  Of
course, minimizing a non-convex problem is NP-hard in general, but
problems that arise in machine learning applications are not
constructed in an adversarial manner. Moreover, there have been a
number of recent papers demonstrating that all first (and/or second)
order critical points have desirable properties for certain
statistical problems (e.g. see the
papers~\cite{loh2013regularized,ge2017no}). Given results of this
type, it is often sufficient to find critical points that are
first-order (and possibly second-order) stationary. Accordingly,
recent years have witnessed an explosion of research on different
algorithms for non-convex problems, with the goal of trying to
characterize the nature of their fixed points, and their convergence
properties.

There is a lengthy literature on non-convex optimization, dating back
more than six decades, and rapidly evolving in the present (e.g., see
the books and papers~\cite{Tuy95,hartman-59,Horst00,
  Lanckreit09,Yuille03,Jason-16,bolte2014proximal,
  panageas2016gradient, lipp2016variations,
  cartis2010complexity,attouch2010proximal,gotoh2017dc}). Perhaps the
most straightforward approach to obtaining a first-order critical
point is via gradient descent. Under suitable regularity conditions
and step size choices, it can be shown that gradient descent can be
used to compute first-order critical points.  Moreover, with a random
initialization and additional regularity conditions, gradient descent
converges almost surely to a second-order stationary point
(e.g.,~\cite{Jason-16, panageas2016gradient}).  These results, like
much of the currently available theory for (sub)-gradient methods for
non-convex problems, involve smoothness conditions on the underlying
objectives.  In practice, many machine learning problems have
non-smooth components; examples include the hinge loss in support
vector machines, the rectified linear unit in neural networks, and
various types of matrix regularizers in collaborative filtering and
recommender systems. Accordingly, a natural goal is to develop
subgradient-based techniques that apply to a broader class of
non-convex functions, allowing for non-smoothness.

The main contribution of this paper is to provide precisely such a
set of techniques, along with non-asymptotic guarantees on their
convergence rates. In particular, we study algorithms that can be
used to obtain first-order (and in some cases, also second-order)
optimal solutions to a relatively broad class of non-convex functions,
allowing for non-smoothness in certain portions of the problem. For
each sequence $\{x^k\}_{k \geq 0}$ generated by one of our algorithms,
we provide non-asymptotic bounds on the convergence rate of the
gradient sequence $\{\|\grad \f(x^k)\|_2\}_{k \geq 0}$. Moreover, for
functions that satisfy a form of the Kurdaya-\L ojasiewicz inequality,
we show that our methods achieve faster rates.

Our work has important points of contact with a recent line of papers
on algorithms for non-convex and non-smooth problems, and we discuss a
few of them here. Bolte et al.~\cite{bolte2014proximal} developed a
proximal-type algorithm applicable to objective functions formed as a
sum of smooth (possibly non-convex) and a convex (possibly
non-differentiable) function. Some recent work~\cite{xu2017globally}
extended these ideas and provided analysis for block co-ordinate
descent methods for non-convex functions. In other recent work, Hong
et al.~\cite{hong2016convergence} provided analysis of ADMM method for
non-convex problems. In few recent
papers~\cite{an2017convergence,wen2018proximal} the authors proposed a
proximal-type method for non-convex functions which can be written as
a sum of a smooth function, a concave continuous function and a convex
lower semi-continuous function; we also analyze this class in one of
our results (Theorem~\ref{ThmProx}).

Our results also relate to another interesting sub-area of non-convex
optimization, namely functions that can be represented as a difference
of two convex functions, popularly known as DC functions.  We refer
the reader to the papers~\cite{Tuy95,hartman-59,Lanckreit09,Yuille03}
for more details on DC functions and their properties. One of the most
popular DC optimization algorithms is the Convex Concave Procedure, or
CCCP for short; see the papers~\cite{Yuille03,lipp2016variations} for
further details. This is a double loop algorithm that minimizes a
convex relaxation of the non-convex objective function at each
iteration. While the CCCP algorithm has some attractive convergence
properties~\cite{Lanckreit09}, it can be slow in many situations due
to its double loop structure. One outcome of the analysis in this
paper is a single-loop proximal-method that retains all the
convergence guarantees of CCCP while---as shown in our experimental
results---being much faster to run.


\subsection*{Overview of our results}
\bcar
  \item Our first main result (Theorem~\ref{ThmGradient}) provides
    guarantees for a subgradient algorithm as applied to the
    minimization problem~\eqref{ProbGradient} defined over a closed
    convex set $\range$.  We provide convergence bounds in terms of
    the Euclidean norm of the subgradient and show that our rates are
    unimprovable in general.  We also illustrate some consequences of
    Theorem~\ref{ThmGradient} by deriving a convergence rate for our
    algorithm when applied to non-smooth coercive functions; this
    result has interesting implications for polynomial programming. We
    also provide a simplification of the CCCP algorithm, along with
    convergence guarantees.  In Corollary~\ref{CorSaddlePoint}, we
    argue that our algorithm can escape strict saddle points for a
    large class of non-smooth functions, thereby generalizing known
    results for smooth functions.
  \item Our second main result (Theorem~\ref{ThmProx}) provides
    convergence rates for a proximal-type algorithm for
    problem~\eqref{EqnGenProb} (see below).  In
    Section~\ref{SecSmoothSparse}, we demonstrate how this
    proximal-type algorithm can be used to minimize a smooth convex
    function subject to a sparsity constraint. We demonstrate the
    performance of this algorithm through the example of best subset
    selection.

  \item In Theorem~\ref{ThmFW}, we provide a Frank-Wolfe type
    algorithm for solving optimization problem~\eqref{ProbFW}, and we
    provide a rate of convergence in terms of the associated
    \mbox{Frank-Wolfe gap}.
  \item Finally, in Theorems~\ref{ThmGradSubanal}
    and~\ref{ThmProxSubanal}, we prove that
    Algorithms~\ref{AlgoGradient} and~\ref{AlgoProx}, when applied to
    functions that satisfy a variant of the Kurdaya-\L ojasiewicz
    inequality, have faster convergence rates.  In particular, the
    convergence rate in terms of gradient norm is at least
    $\order(1/k)$ -- whereas the worst case rate for general
    non-convex functions is $\order(\frac{1}{\sqrt{k}})$. We also
    provide examples of functions for which the convergence rate is
    \mbox{$\order(1/k^r)$ with $r > 1$}. In Theorem 6, we characterize
    the class of functions that can be written as a difference of a
    smooth function and a differentiable convex function.
\ecar

\vspace*{0.25cm}

\noindent Section~\ref{SecApplication} is devoted to an illustration
of our methods and theory via applications to the problems of best
subset selection, robust estimation, mixture density estimation and 
shape-from-shading reconstruction.


\paragraph{Notation:}
Given a set $\range \subset \real^{\usedim}$, we use $\intr(\range)$
to denote its interior. We use $\|x\|_2$ , $\|x\|_1$ and $\|x\|_0$ to
denote the Euclidean norm, $\ell_1$-norm and $\ell_0$ norm
respectively, of a vector $x \in \real^\usedim$. We say that a
continuously differentiable function $\fone$ is $\M_{\fone}$-smooth if
the gradient $\grad \fone$ is $\M_{\fone}$-Lipschitz continuous.  In
many examples considered in this paper, the objective function $f$ is
a linear combination of a differentiable function $\fone$ and one or
more convex functions $\ftwo$ and $\nonsmoothf$.  With a slight abuse
of notation, for a function $\f = \fone - \ftwo + \nonsmoothf$, we
refer to a vector of the form $\nabla \fone(x) - u(x) + v(x)$, where
$u(x) \in \partial \ftwo(x)$ and $v(x) \in \partial \nonsmoothf(x)$,
as a gradient of the function $f$ at point $x$ --- and we denote it by
$\grad \f(x)$; here, $\partial \ftwo(\cdot)$ and $\partial
\nonsmoothf(\cdot)$ denote the subgradient sets of the convex
functions $\ftwo$ and $\nonsmoothf$ respectively.  We say a point $x$
is a \textit{critical} point of the function $\f$ if $0 \in \grad
\f(x)$.  For a sequence $\big\{ a^{k} \big\}_{k \geq 0}$, we define
the running arithmetic mean $\avg{a^{k}}$ as $\avg{a^{k}} \mydefn
\frac{1}{k}\sum_{\ell = 0}^{\ell = k+1} a^{\ell}$.  Similarly, for a
non-negative sequence $\big\{ a^{k} \big\}_{k \geq 0}$, we use
$\Gavg{a^{k}} \mydefn ( \prod \limits_{\ell = 0}^{k}
a^{\ell})^{\frac{1}{k+1}}$ to denote the running geometric
mean. Finally, for real-valued sequences $\{a^k\}_{k \geq 0}$ and
$\{b^k\}$, we say $a^k = \Bigo{b^k}$, if there exists a positive
constant $C$, which is independent of $k$, such that $a^k \leq C b^k$
for all $k \geq 0$.  We say $a^k = \Omega(b^k)$ if $a^k = \Bigo{b^k}$
and $b^k = \Bigo{a^k}$.


\section{Problem setup}

In this paper, we study the problem of minimizing a non-convex and
possibly non-smooth function over a closed convex set.  More
precisely, we consider optimization problems of the form
\begin{align}
 \label{EqnGenProb}
 \min_{x \in \range} \Big \{ \underbrace{\fone(x) - \ftwo(x) +
   \nonsmoothf(x)}_{f(x)} \Big \},
\end{align}
where the domain $\range$ is a closed convex set. In all cases, we
assume the function $\f$ is bounded below over domain $\range$, and that the
function $\ftwo$ is continuous and convex. Our aim is to derive
algorithms for problem~\eqref{EqnGenProb} for various types of
functions $\fone$ and $\nonsmoothf$.\\

\vspace*{.2in}

\paragraph{Structural assumption on functions $\fone~\text{and}~ \ftwo$}
\label{StructutalAssumption}

\begin{itemize}
\item[(a)] Theorems~\ref{ThmGradient} and~\ref{ThmGradSubanal} are
  based on the assumption that the function $\fone$ is continuously
  differentiable and smooth, and that the function $\nonsmoothf \equiv
  0$.
\item[(b)] In Theorems~\ref{ThmProx}~and~\ref{ThmProxSubanal}, we
  assume that the function $\fone$ is continuously differentiable and
  smooth, and that the function $\nonsmoothf$ is convex, proper and
  lower semi-continuous.\footnote{Taking the function 
  $\nonsmoothf \equiv 0$ yields
    part (a) as a special case, but it is worthwhile to point out that
    the assumptions in Theorem~\ref{ThmGradient} are weaker 
    than the assumptions of
    Theorem~\ref{ThmProx}. Furthermore, we can prove some interesting
    results about saddle points when the function $\nonsmoothf \equiv 0$; see
    Corollary~\ref{CorSaddlePoint}.}
  \item[(c)] Theorem~\ref{ThmFW} focuses on the case in which the
    function $\fone$ is continuously differentiable, and the function
    $\nonsmoothf \equiv 0$.
\end{itemize}

The class of non-convex functions covered in part (a) includes, as a
special case, the class of differences of convex (DC) functions, for
which the first convex function is smooth and the second convex
function is continuous.  Note that we only put a mild assumption of
continuity on the convex function $\ftwo$, meaning that the difference
function $\fone - \ftwo$ can be non-smooth and non-differentiable in
general. In particular, for any continuously differentiable function
$\ftwo$ and any smooth function $\fone$, the difference function
$\f = \fone - \ftwo$ is non-smooth. 
Furthermore, if we take the function $\ftwo \equiv 0$,
then we recover the class of smooth functions as a special case.


\section{Main results}
\label{SecMainResults}

Our main results are analyses of three algorithms for this class of
non-convex non-smooth problems; in particular, we derive
non-asymptotic bounds on their rates of convergence. The first
algorithm is a (sub)-gradient-type method, and it is mainly suited for
unconstrained optimization; the second algorithm is based on a
proximal operator and can be applied to constrained optimization
problems. The third algorithm is a Frank-Wolfe-type algorithm, which
is also suitable for constrained optimization problems, but it applies
to a more general class of non-convex optimization problems.


\subsection{Gradient-type method}
\label{SecGradMethod}

In this section, we analyze a (sub)-gradient-based method for solving
a certain class of non-convex optimization problems. In particular,
consider a pair of functions $(\fone, \ftwo)$ such that: \\

\noindent {\bf{Assumption GR:}}
\begin{itemize}
\item[(a)] The function $\fone$ is continuously differentiable and
  \mbox{$\M_{\fone}$-smooth.}
\item[(b)] The function $\ftwo$ is continuous and convex.
\item[(c)] There is a closed convex set $\range$ such that the difference
  function $f \mydefn \fone -\ftwo$ is bounded below on the set $\range$.
\end{itemize}
Under these conditions, we then analyze the behavior of a
(sub)-gradient method in application to the following problem
\begin{align}
\label{ProbGradient}
\fstar = \min_{x \in \range} f(x) \; = \; \min_{x \in \range} \big \{
\fone(x) - \ftwo(x) \big \}.
\end{align}
With a slight abuse of notation, we refer to a vector of the form
$\nabla \fone(x) - u(x)$ with $u(x) \in \partial \ftwo(x)$ --- where
$\partial \ftwo(x)$ denote the subgradient 
set of the convex function $\ftwo$ at the point $x$ --- as a
gradient of the function $f$ at the point $x$.

\begin{algorithm}
\caption{ $\;\;\;\;\;$ Subgradient-type method}\label{AlgoGradient}
\begin{algorithmic}[1]
\STATE{Given an initial point $\x{0} \in \intr(\range)$ and step size
  \mbox{$\alpha \in (0, \frac{1}{\M_{\fone}}]$:} }
\FOR{ $k = 0, 1, 2, \ldots$ }
\STATE{Choose subgradient $\gradftwo \in \subgrad\ftwo(\x{k})$.}
\STATE{Update $x^{k+1}=x^{k}-\alpha \big( \grad{\fone{(\x{k})}} -
  \gradftwo \big)$.}
\ENDFOR
\end{algorithmic}
\end{algorithm}
\noindent In our analysis, we assume that the initial vector $\x{0}
\in \intr(\range)$ is chosen such that the associated level set
\begin{align*}
\Level(\f(\x{0})) \mydefn \big \{ x\in \real^d \mid \f(x) \leq
\f(\x{0}) \big \}
\end{align*}
is contained within $\intr(\range)$. This condition is standard in
the analysis of non-convex optimization methods~(e.g., see Nesterov and
Polyak~\cite{nesterov2006cubic}). When $\range =
\real^{\usedim}$, it holds trivially.
With this set-up, we have the following guarantees on the convergence
rate of Algorithm~\ref{AlgoGradient}.
\begin{theos}
\label{ThmGradient}
Under Assumption GR, any sequence $\{x^k \}_{k \geq 0}$ produced by
Algorithm~\ref{AlgoGradient} has the following properties:
\begin{enumerate}
\item[(a)] Any limit point is a critical point of the function $f$, and the
  sequence of function values $\{f(\x{k}) \}_{k \geq 0}$ is strictly
  decreasing and convergent.
\item[(b)] For all $k = 0, 1, 2, \ldots$, we have
  \begin{align}
    \label{EqnAlgoGradientGradBound}
\avg{ \| \grad \f(\x{k}) \|_2^2 } & \leq \frac{2 \big(\f(\x{0})- \fstar
  \big)}{\alpha (k+1)}.
  \end{align}
\end{enumerate}
\end{theos}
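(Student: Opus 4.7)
The plan is to derive a one-step descent inequality that will simultaneously yield the monotone decrease in part (a) and the telescoping bound in part (b). Writing $d^k \mydefn \grad \fone(\x{k}) - \gradftwo$ for the chosen gradient of $f$ at $\x{k}$, I would first apply the $\M_\fone$-smoothness of $\fone$ to the step $x^{k+1} = x^k - \alpha d^k$ to obtain
\begin{equation*}
\fone(x^{k+1}) \;\leq\; \fone(x^k) + \langle \grad\fone(x^k),\, x^{k+1}-x^k\rangle + \tfrac{\M_\fone}{2}\|x^{k+1}-x^k\|_2^2,
\end{equation*}
and then use the subgradient inequality for the convex function $\ftwo$ at $x^k$ with subgradient $\gradftwo$, namely $\ftwo(x^{k+1}) \geq \ftwo(x^k) + \langle \gradftwo,\, x^{k+1} - x^k\rangle$. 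Subtracting and substituting $x^{k+1}-x^k = -\alpha d^k$ gives
\begin{equation*}
\f(x^{k+1}) \;\leq\; \f(x^k) \;-\; \alpha\bigl(1 - \tfrac{\alpha \M_\fone}{2}\bigr)\|d^k\|_2^2.
\end{equation*}
The assumption $\alpha \in (0, 1/\M_\fone]$ forces $\alpha(1 - \alpha \M_\fone/2) \geq \alpha/2$, yielding the clean descent inequality $\f(x^{k+1}) \leq \f(x^k) - \tfrac{\alpha}{2}\|d^k\|_2^2$.

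For part (b), I would telescope this descent inequality from $\ell = 0$ to $k$ and use $\f(x^{k+1}) \geq \fstar$ to get $\sum_{\ell=0}^{k}\|\grad\f(x^\ell)\|_2^2 \leq \tfrac{2(\f(x^0) - \fstar)}{\alpha}$, which after dividing by $k+1$ is exactly the stated bound~\eqref{EqnAlgoGradientGradBound}. For part (a), the descent inequality immediately gives monotonicity, and since $\f$ is bounded below on $\range$ the sequence $\{\f(x^k)\}$ converges. Strict decrease holds whenever $d^k \neq 0$, i.e., as long as $x^k$ is not already a critical point. Taking $k \to \infty$ in the telescoped sum yields $\sum_{\ell=0}^\infty \|d^\ell\|_2^2 < \infty$, hence $\|d^k\|_2 \to 0$.

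For the critical-point claim, I would take any subsequence $x^{k_j} \to x^\star$. The level-set assumption $\Level(\f(x^0)) \subset \intr(\range)$ together with monotone decrease guarantees each $x^{k_j}$ stays in $\intr(\range)$, so the limit $x^\star$ lies in $\range$ and the relevant subdifferentials are nonempty. Continuity of $\grad\fone$ gives $\grad\fone(x^{k_j}) \to \grad\fone(x^\star)$, and since $d^{k_j} = \grad\fone(x^{k_j}) - \gradftwo[k_j] \to 0$, the chosen subgradients satisfy $\gradftwo[k_j] \to \grad\fone(x^\star)$. The main technical step is then to invoke the outer semicontinuity (closed-graph property) of the subdifferential of the continuous convex function $\ftwo$: from $x^{k_j} \to x^\star$ and $\gradftwo[k_j] \in \subgrad\ftwo(x^{k_j})$ with $\gradftwo[k_j] \to \grad\fone(x^\star)$, we conclude $\grad\fone(x^\star) \in \subgrad\ftwo(x^\star)$, that is $0 \in \grad\f(x^\star)$.

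The step I expect to require the most care is justifying this closed-graph property of $\subgrad\ftwo$ at $x^\star$ and, relatedly, ensuring that the chosen subgradients $\gradftwo[k_j]$ are in fact bounded so that passing to the limit makes sense. Since $\ftwo$ is continuous and convex on a neighborhood of $x^\star$ (which sits in $\intr(\range)$), local Lipschitz continuity gives local boundedness of $\subgrad\ftwo$, and the standard outer-semicontinuity result for subdifferentials of convex functions then closes the argument without further effort.
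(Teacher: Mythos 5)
Your proposal is correct and follows essentially the same route as the paper: the same one-step descent inequality obtained from the smoothness bound on $\fone$ and the subgradient inequality for $\ftwo$ (the paper isolates this as Lemma~\ref{LemDescentStep}, which also records that the iterates stay in $\intr(\range)$ via the level-set condition), followed by telescoping for part (b) and, for part (a), convergence of the chosen subgradients $\gradftwo$ to $\grad\fone(\bar{x})$ combined with the closed-graph/outer-semicontinuity property of the convex subdifferential of $\ftwo$. No substantive differences or gaps.
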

\noindent See Appendix~\ref{AppThmGradient} for a proof of this
theorem.


\subsubsection{Comments on convergence rates }

Note that the bound~\eqref{EqnAlgoGradientGradBound} guarantees that
the gradient norm sequence $\min_{j \leq k} \|\nabla f(\x{j})\|_2$
converges to zero at the rate $\order(1/\sqrt{k})$. It is natural to
wonder whether this convergence rate can be improved. Interestingly,
the answer is no, at least for the general class of functions covered
by Theorem~\ref{ThmGradient}.  Indeed, note that the class of
$M$-smooth functions is contained within the class of functions
covered by Theorem~\ref{ThmGradient}. It follows from past work by
Cartis et al.~\cite{cartis2010complexity} that for gradient descent on
$\M{}$-smooth functions, with a step size chosen according to the
Goldstein-Armijo rule, the convergence rate of the gradient sequence
$\{ \|\nabla f(x^k)\|_2 \}_{k \geq 0}$ can be lower bounded---for
appropriate choices of the function $f$---as $\Omega(1/\sqrt{k})$.  It
is not very difficult to see that the same construction also provides
a lower bound of $\Omega(1/\sqrt{k})$ for gradient descent with a
constant step size. We also note that very recently, Carmon et
al.~\cite{carmon2017lower1} proved an even stronger result---namely,
for the class of smooth functions, the rate of convergence of any
algorithm given access to only the function gradients and function
values cannot be better than $\Omega(1/\sqrt{k})$.  Finally, observe
that in the special case $\ftwo \equiv 0$,
Algorithm~\ref{AlgoGradient} reduces to the ordinary gradient descent
with fixed step size $\alpha$. Putting together the pieces, we
conclude that for the class of functions which can be written as a
difference of smooth and a continuous convex function, Algorithm 1 is
\emph{optimal} among all algorithms which has access to the function
gradients (and/or the sub-gradient) and the function values.


\subsection{Consequences for differentiable functions}

In the special case when the function $\ftwo$ is convex and
differentiable, \mbox{Algorithm~\ref{AlgoGradient}} reduces to an
ordinary gradient descent on the difference function $\f = \fone -
\ftwo$. However, note that the step size choice required in
Algorithm~\ref{AlgoGradient} does \emph{not} depend on the smoothness
of the function $\ftwo$; consequently, the algorithm can be applied to
objective functions $f$ that are not smooth. As a simple but concrete
example, suppose that we wish to apply gradient descent to minimize
the function \mbox{$f(x) \mydefn g(x) - \|x\|_2^q$,} where $g$ is any
\mbox{$\mu$-strongly} convex and \mbox{$M_{\fone}$-smooth} function,
and $q \in (1,2)$ is a given parameter.  Classical guarantees on
gradient descent, which require the smoothness of the function $f$,
would not apply here since the function $f$ itself is not
smooth. However, Theorem~\ref{ThmGradient} guarantees that standard
gradient descent would converge for any step size $\alpha \in \big( 0,
\frac{1}{M_\fone} \big]$.

More generally, given an arbitrary continuously differentiable
function $f$, we can define its \emph{effective smoothness constant} as
\begin{align}
\label{EqnSmoothConst}
\M^*_{\f} \mydefn \inf_{h} \big \{ L \; \mid \; \mbox{$(\f +
  \ftwo)$ is $L$-smooth} \big \},
\end{align}
where the infimum ranges over all convex and continuously
differentiable functions $h$. Suppose that this infimum is achieved
by some function $h^*$, then gradient descent on the function $f$ can
be viewed as applying Algorithm~\ref{AlgoGradient} to
the decomposition $f = g^* - h^*$, where the function
$g^* \mydefn f + h^*$ is
guaranteed to be $\M^*_{\f}$-smooth.  To be clear, the algorithm
itself does \emph{not} need to know the decomposition $(g^*, h^*)$,
but the existence of the decomposition ensures the success of a
backtracking procedure.  Putting together the pieces, we arrive at the
following consequence of Theorem~\ref{ThmGradient}:

\begin{cors}
\label{CorBacktracking}
Given a closed convex set $\range$, consider a continuously
differentiable function $\f$ with effective smoothness $\M_{\f}^* <
\infty$ that is bounded below on $\range$.  Then for any sequence
$\{\x{k}\}_{k \geq 0}$ obtained by applying the gradient update with
step size $\alpha \in \big(0, \frac{1}{\M^*_\f} \big)$, we have:
\begin{subequations}
        \begin{align}
        \label{EqnFastGradBound}
          \avg{ \|\grad \f(\x{k}) \|_2^2 } & \leq
          \frac{2\big(f(\x{0})- \fstar \big) }{\alpha (k+1)}.
        \end{align}
Moreover, if we choose step size by backtracking\footnote{A detailed
  description of gradient descent with backtracking is provided in
  Algorithm~\ref{AlgoBacktracking}.} with parameter $\beta \in (0,1)$,
then for all \mbox{$k = 0,1,2,\ldots$,} we have
        \begin{align}
        \label{EqnBacktrackingBdd}
            \avg{ \|\grad \f(\x{k}) \|_2^2 } & \leq
            \frac{2\max\big\{ 1, \M^*_{\f} \big\}
              \big(f(\x{0})- \fstar \big) }{\beta^2 (k+1)}.
        \end{align}
\end{subequations}
\end{cors}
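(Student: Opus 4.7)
My plan is to reduce both parts of the corollary to Theorem~\ref{ThmGradient} by working with a decomposition $\f = \fone - \ftwo$ drawn from (but not necessarily attaining) the infimum in the definition~\eqref{EqnSmoothConst} of $\M^*_{\f}$. For part (a), I would fix $\alpha \in (0, 1/\M^*_{\f})$ and use the definition of the infimum to produce a convex, continuously differentiable function $\ftwo$ such that $\fone \mydefn \f + \ftwo$ is $L$-smooth for some $L \leq 1/\alpha$. Writing $\f = \fone - \ftwo$, this decomposition satisfies Assumption GR: $\fone$ is $L$-smooth with $L \leq 1/\alpha$, $\ftwo$ is convex and continuous, and $\f$ is bounded below on $\range$ by hypothesis. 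Because $\ftwo$ is differentiable, the subgradient choice in Algorithm~\ref{AlgoGradient} is forced to be $\gradftwo = \grad{\ftwo(\x{k})}$, so the update reduces exactly to ordinary gradient descent on $\f$. Applying Theorem~\ref{ThmGradient}(b) to this decomposition then delivers the bound~\eqref{EqnFastGradBound}.

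For part (b), the key ingredient I would first establish is a descent inequality for $\f$ itself: for any decomposition $\f = \fone - \ftwo$ in which $\fone$ is $L$-smooth and $\ftwo$ is convex and differentiable, combining the quadratic upper bound for $\fone$ with the subgradient inequality $\ftwo(y) \geq \ftwo(x) + \inprod{\grad{\ftwo(x)}}{y - x}$ yields
\begin{align*}
\f(y) \leq \f(x) + \inprod{\grad{\f(x)}}{y - x} + \frac{L}{2}\|y - x\|_2^2
\end{align*}
for all $x, y$, where $\grad{\f(x)} = \grad{\fone(x)} - \grad{\ftwo(x)}$. This guarantees that the Armijo-type sufficient-decrease test used inside backtracking is satisfied whenever the trial step obeys $\alpha \leq 1/L$. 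Starting from the initial trial value $\alpha_0 = 1$ and shrinking by $\beta$ until acceptance, the returned step size therefore satisfies $\alpha^k \geq \min\{1, \beta/L\}$ at every iteration. Now I would pick the decomposition so that $L \leq \M^*_{\f}/\beta$, which is feasible by the infimum definition; this yields the uniform lower bound $\alpha^k \geq \beta^2/\max\{1, \M^*_{\f}\}$. Summing the per-iteration inequality $\f(\x{k+1}) \leq \f(\x{k}) - (\alpha^k/2) \|\grad \f(\x{k})\|_2^2$, telescoping, and substituting this lower bound in place of $\alpha$ produces~\eqref{EqnBacktrackingBdd}.

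I expect the main obstacle to be tracking how the possible non-attainment of the infimum in~\eqref{EqnSmoothConst} propagates into the constants. For part (a), the strict inequality $\alpha < 1/\M^*_{\f}$ provides exactly the slack needed to exhibit a workable decomposition at no cost. For part (b), this non-attainment is precisely the source of the extra factor of $\beta$ in the bound, beyond the one already produced by any standard backtracking analysis; pinning down the correct pairing of these two $\beta$ factors is the delicate step. A secondary subtlety worth noting is that the algorithm itself never needs to \emph{construct} the decomposition $(\fone, \ftwo)$: the Armijo test is phrased in terms of $\f$ alone, and its success is guaranteed merely by the \emph{existence} of a sufficiently smooth decomposition, not by its explicit knowledge.
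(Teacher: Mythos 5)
Your proposal is correct and follows essentially the same route as the paper: for~\eqref{EqnFastGradBound} you use the strict inequality $\alpha < 1/\M^*_{\f}$ to extract a decomposition $\f = \fone - \ftwo$ with $\fone$ being $L$-smooth for some $L < 1/\alpha$ and invoke Theorem~\ref{ThmGradient}, and for~\eqref{EqnBacktrackingBdd} you choose a decomposition with $L \leq \M^*_{\f}/\beta$, use the descent inequality to lower-bound the accepted backtracking step by $\min\{1,\beta/L\} \geq \beta^2/\max\{1,\M^*_{\f}\}$, and telescope --- exactly the argument in Appendix~\ref{AppCorBacktracking}. Your closing remarks correctly identify the two delicate points (non-attainment of the infimum and the provenance of the two factors of $\beta$), which the paper handles in the same way.
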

\noindent See Appendix~\ref{AppCorBacktracking} for proof of the above
corollary.\\

Let us reiterate that the advantage of backtracking gradient descent
is that it works without knowledge of the scalar $\M^*_{\f}$.  The
parameter $\beta$ mentioned in equation~\eqref{EqnBacktrackingBdd} is
the backtracking parameter and is a user defined fraction in the
backtracking method (see Algorithm~\ref{AlgoBacktracking} for
details). In particular, substituting $\beta = \frac{1}{\sqrt{2}}$ in
equation~\eqref{EqnBacktrackingBdd} yields
\begin{align*}
\avg{ \|\grad \f(\x{k}) \|_2^2 } \leq \frac{4\max\big\{ 1,
  \M^*_{\f} \big\} \big(f(\x{0}) - \fstar \big)}{(k+1)},
\end{align*}
which differs from the rate obtained in
equation~\eqref{EqnFastGradBound} only by a factor of two, and a
possible multiple of $\M^*_f$.


\subsubsection{Consequences for coercive functions}

As a consequence of Corollary~\ref{CorBacktracking}, we can obtain a rate
of convergence of the backtracking gradient descent algorithm
(Algorithm~\ref{AlgoBacktracking}) for a class of non-smooth coercive
functions. Consider any twice continuously differentiable coercive
function $\f : \real^{\usedim} \mapsto \real$, which is bounded below.
Recall that a function $\f$ is \emph{coercive} if
\begin{align}
  \label{EqnDefnCoercive}
  \f(x^\ell) \rightarrow \infty \quad \mbox{for any sequence
    $\{x^\ell\}_{\ell \geq 0}$ such that} \quad \|x^\ell\|_2
  \rightarrow \infty.
\end{align}
Let ${\Level(\f(\x{0})) \mydefn \big\{ x \in \real^{\usedim} : \f(x)
  \leq \f(\x{0}) \big\}}$ denote the level set of the function $\f$ at
point $\x{0}$.  It can be verified that for any coercive function
$\f$, the set $\Level(\f(\x{0}))$ is bounded above for all $\x{0} \in
\real^{\usedim}$. This property ensures that for any descent algorithm
and any starting point $\x{0}$, the set of iterates $\big\{ \x{k}
\big\}_{k \geq 0}$ obtained from the algorithm remains within a
bounded set---viz. the level set $\Level(\f(\x{0}))$ in this
case. Since the function $\f$ is twice continuously differentiable, we
have that $\f$ is smooth over bounded set $\Level(\f(\x{0}))$; this
fact ensures that $\f$ has a finite effective smoothness constant in
the set $\Level(\f(\x{0}))$, which we denote by
$\M^*_{\f,\x{0}}$. Finally, note that Algorithm~\ref{AlgoBacktracking}
is a descent algorithm; as a result, a simple application of
Corollary~\ref{CorBacktracking} yields the following rate of
convergence for the backtracking gradient descent algorithm
(Algorithm~\ref{AlgoBacktracking}):
\begin{cors}
\label{CorCoerciveGradRate}
Consider the unconstrained minimization problem of a twice
continuously differentiable coercive function $\f$ that is bounded
below on $\real^{\usedim}$.  Then for any initial point $\x{0}$, the
sequence $\{\x{k}\}_{k \geq 0}$ obtained by applying
Algorithm~\ref{AlgoBacktracking} satisfies the following property:
\begin{align}
\label{EqnCoerciveGradRate}
\avg{ \|\grad \f(\x{k}) \|_2^2 } & \leq \frac{2\max\big\{ 1,
  \M^*_{\f,\x{0}} \big\} \big(f(\x{0})- \fstar \big) }{\beta^2 (k+1)}
\quad \text{for all $k = 0,1,2,\ldots$},
\end{align}
where $\beta \in (0, 1)$ is the backtracking parameter.
\end{cors}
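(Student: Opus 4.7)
The plan is to reduce this corollary to Corollary~\ref{CorBacktracking} by showing that, although the coercive function $\f$ need not have a finite global effective smoothness constant, the iterates of Algorithm~\ref{AlgoBacktracking} are confined to a bounded set on which such a constant does exist. Concretely, I will use coercivity to trap the iterates in the sub-level set $\Level(\f(\x{0}))$, use twice continuous differentiability to produce a finite effective smoothness constant $\M^*_{\f,\x{0}}$ on that set, and then invoke Corollary~\ref{CorBacktracking} verbatim.

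First, I would note that by the definition of coercivity in~\eqref{EqnDefnCoercive}, the sub-level set $\Level(\f(\x{0}))$ is bounded: if it were unbounded, we could extract a sequence $\{x^\ell\}$ with $\|x^\ell\|_2 \to \infty$ and $\f(x^\ell) \leq \f(\x{0})$, directly contradicting~\eqref{EqnDefnCoercive}. Since $\f$ is continuous, $\Level(\f(\x{0}))$ is also closed, hence compact. Next, I would verify the descent property of Algorithm~\ref{AlgoBacktracking}: by construction of the backtracking rule, each accepted step satisfies $\f(\x{k+1}) \leq \f(\x{k})$, so by induction $\f(\x{k}) \leq \f(\x{0})$ for all $k \geq 0$, and hence $\{\x{k}\}_{k \geq 0} \subset \Level(\f(\x{0}))$.

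Second, I would establish the existence of a finite effective smoothness constant on this compact set. Since $\f$ is twice continuously differentiable, $\nabla^2 \f$ is continuous and hence bounded on the compact set $\Level(\f(\x{0}))$. On any convex open neighborhood of this set where $\|\nabla^2 \f\|_{\mathrm{op}} \leq L$, the gradient $\nabla \f$ is $L$-Lipschitz, and one can define a convex quadratic $h^*(x) = \tfrac{L}{2}\|x\|_2^2$ such that $\f + h^*$ is smooth with constant at most $2L$. This yields the finite effective smoothness constant $\M^*_{\f,\x{0}}$ on $\Level(\f(\x{0}))$ used in the statement.

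Third, I would apply Corollary~\ref{CorBacktracking} restricted to the invariant compact set $\Level(\f(\x{0}))$. The backtracking step size produced by Algorithm~\ref{AlgoBacktracking} from any $\x{k} \in \Level(\f(\x{0}))$ only queries the function and its gradient at points in (a neighborhood of) this set, so $\M^*_{\f,\x{0}}$ plays exactly the role of $\M^*_\f$ in the hypotheses of Corollary~\ref{CorBacktracking}. Substituting this constant into~\eqref{EqnBacktrackingBdd} gives the bound~\eqref{EqnCoerciveGradRate}. The main subtlety, and the step I would be most careful about, is justifying that the trial iterates examined by the backtracking routine stay in a set on which the smoothness bound $\M^*_{\f,\x{0}}$ is valid; this is handled by noting that rejected trial points are only closer to $\x{k}$ than accepted ones and that one may slightly enlarge $\Level(\f(\x{0}))$ to an open bounded set with the same qualitative smoothness properties, thereby preserving the backtracking analysis used to derive~\eqref{EqnBacktrackingBdd}.
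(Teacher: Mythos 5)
Your proposal is correct and follows essentially the same route as the paper: coercivity gives a bounded (hence compact) sub-level set $\Level(\f(\x{0}))$, the descent property of backtracking traps the iterates there, twice continuous differentiability yields a finite effective smoothness constant $\M^*_{\f,\x{0}}$ on that set, and Corollary~\ref{CorBacktracking} is then invoked. Your extra care about the trial points queried by the backtracking line search (which may momentarily leave the sub-level set) addresses a subtlety the paper glosses over, and your fix---enlarging to a slightly bigger bounded neighborhood---is the right one.
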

%


\paragraph{Implications for polynomial programming:}

Corollary~\ref{CorCoerciveGradRate} has useful implications for
problems that involve minimizing polynomials.  Such problems of
polynomial programming arise in various applications, including phase
retrieval and shape-from-shading~\cite{wang2014efficient}, and we
illustrate our algorithms for the latter application in
Section~\ref{SecSimShading}.  For minimization of a coercive
polynomial, Corollary~\ref{CorCoerciveGradRate} shows that
Algorithm~\ref{AlgoBacktracking} achieves a near-optimal rate.

It is worth noting that any even degree polynomial can be represented
as a difference of convex (DC) function; hence, such problems are
amenable to DC optimization techniques like CCCP, which we discuss at
more length in Section~\ref{SecConnectCCCP}.  However, obtaining a
good DC decomposition, which is crucial to the success of CCCP, is
often a formidable task. In particular, obtaining an optimal
decomposition for a polynomial with degree greater than four is
NP-hard ---the main reason for this phenomenon is that deciding the
convexity of an even degree polynomial with degree greater than four
is NP-hard~\cite{ahmadi2013np,wang2014efficient}.  Even for a fourth
degree polynomial with dimension larger than three, there is no known
algorithm for finding an optimal DC
decomposition~\cite{ahmadi2013complete}.  An advantage of
Algorithm~\ref{AlgoBacktracking} is that it obviates the need to find
a DC decomposition.


\subsubsection{Escaping strict saddle points}

One of the obstacles with gradient-based continuous optimization
method is possible convergence to saddle points.  Here we show that
with a random initialization this undesirable outcome does not occur
for the class of strict saddle points.  Recall that for a twice
differentiable function $\f$, a point $x$ is called a strict saddle
point of the function $\f$ if ${\lambda_{\min}(\nabla^2 \f(x)) < 0}$,
where $\lambda_{\min}(\nabla^2 \f(x))$ denotes the minimum eigenvalue
of the Hessian matrix $\nabla^2 \f(x)$.  The following corollary shows
that such saddle points are \emph{not} troublesome:
\begin{cors}
  \label{CorSaddlePoint}
Suppose that, in addition to the conditions on $(\fone, \ftwo,
\range)$ from Theorem~\ref{ThmGradient}, the functions $(\fone,
\ftwo)$ are twice continuously differentiable.  If
Algorithm~\ref{AlgoGradient} is applied with step size $\alpha \in
\big(0 , \frac{1}{\M_{\fone}} \big)$, then the set of initial points
for which it converges to a strict saddle point has measure zero.
\end{cors}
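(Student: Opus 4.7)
The plan is to cast Algorithm~\ref{AlgoGradient} under the added $C^2$ hypothesis as a deterministic dynamical system $\x{k+1} = T(\x{k})$ driven by the $C^1$ map $T(x) \mydefn x - \alpha(\grad \fone(x) - \grad \ftwo(x))$, and then to follow the now-standard stable-manifold strategy pioneered by Lee, Simchowitz, Jordan and Recht (and later refined by Panageas and Piliouras). The three ingredients needed are (i) $T$ is a local $C^1$ diffeomorphism on $\real^{\usedim}$, (ii) every strict saddle of $\f$ is an unstable fixed point of $T$, and (iii) a countable covering argument that promotes the local center-stable-manifold estimates to a global measure-zero conclusion.

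The only non-trivial step is (i), since $\f = \fone - \ftwo$ need not be globally smooth; this is where the DC structure of the problem pays off. Differentiating the update, I will write
\begin{align*}
DT(x) \; = \; I \; - \; \alpha\, \nabla^2 \fone(x) \; + \; \alpha\, \nabla^2 \ftwo(x).
\end{align*}
The $\M_{\fone}$-smoothness of $\fone$ yields $\|\nabla^2 \fone(x)\|_{\mathrm{op}} \leq \M_{\fone}$, hence $I - \alpha\,\nabla^2 \fone(x) \succeq (1 - \alpha \M_{\fone})\, I$; the convexity of $\ftwo$ gives $\nabla^2 \ftwo(x) \succeq 0$, so this second term enters with the advantageous sign. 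Adding the two contributions yields $DT(x) \succeq (1 - \alpha \M_{\fone})\, I \succ 0$ for every $\alpha \in (0, 1/\M_{\fone})$, so $DT(x)$ is invertible everywhere and the inverse function theorem makes $T$ a local $C^1$ diffeomorphism. For step (ii), at any strict saddle $\xstar$ we have $\lambda_{\min}(\nabla^2 \f(\xstar)) < 0$, so $1 - \alpha\, \lambda_{\min}(\nabla^2 \f(\xstar))$ is an eigenvalue of $DT(\xstar)$ strictly greater than $1$, i.e., $\xstar$ possesses a nontrivial unstable subspace.

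With (i) and (ii) in hand, the center-stable manifold theorem produces, around each strict saddle $\xstar$, a local $C^1$ embedded disk $W^{cs}_{\mathrm{loc}}(\xstar)$ of codimension at least one that locally absorbs every orbit converging to $\xstar$. The Lebesgue measure of this disk is zero, and because $T$ is a diffeomorphism so is that of $\bigcup_{k \geq 0} T^{-k}(W^{cs}_{\mathrm{loc}}(\xstar))$. I expect the main obstacle to be handling a possibly uncountable set of strict saddles: I will address this exactly as in Lee et al., by covering $\real^{\usedim}$ with a countable collection of open balls on which the center-stable charts can be constructed uniformly, using the uniform lower bound $DT(x) \succeq (1 - \alpha \M_{\fone})\, I$ to control the constants appearing in the stable manifold theorem. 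Any trajectory converging to some strict saddle must eventually lie entirely inside one of these countably many measure-zero sets, so their union is measure zero as well, which is the conclusion of Corollary~\ref{CorSaddlePoint}.
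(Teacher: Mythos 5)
Your proposal is correct and follows essentially the same route as the paper: both reduce the claim to the stable-manifold machinery of Lee et al.\ by computing $DT(x) = I - \alpha \nabla^2 \fone(x) + \alpha \nabla^2 \ftwo(x)$ and combining the $\M_{\fone}$-smoothness of $\fone$ with $\nabla^2 \ftwo(x) \succeq 0$ to conclude that the Jacobian of the update map is invertible for every $\alpha \in (0, 1/\M_{\fone})$, with the unstable eigenvalue $1 - \alpha \lambda_{\min}(\nabla^2 \f(\xstar)) > 1$ at any strict saddle. The only difference is one of packaging: the paper additionally proves that the update map is a \emph{global} diffeomorphism (injectivity via monotonicity of $\grad \ftwo$ plus the Lipschitz bound on $\grad \fone$, surjectivity via a strongly convex auxiliary problem) so that Theorem 4 of Lee et al.\ can be cited verbatim, whereas you stop at the local-diffeomorphism property and sketch the countable-cover/center-stable-manifold argument yourself; both are valid, since preimages of null sets under a local $C^1$ diffeomorphism remain null.
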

\noindent See Appendix~\ref{AppCorSaddlePoint} for the proof of this
corollary. \\

\noindent We note that similar guarantees of avoidance of strict
saddlepoints are known when the function $f = \fone - \ftwo$ is twice
continuously differentiable and $\M$-smooth (e.g.,~\cite{Jason-16,
  panageas2016gradient}).  The novelty of
Corollary~\ref{CorSaddlePoint} is that the same guarantee holds
without imposing a smoothness condition on the entire function $f$.


\subsection{Connections to the convex-concave procedure}
\label{SecConnectCCCP}

As a consequence of Algorithm~\ref{AlgoGradient}, we show that 
one can obtain a convergence rate of the Euclidean norm of 
the gradient for CCCP (convex-concave procedure),
which is a heavily used \mbox{algorithm}
in Difference of Convex (DC) optimization problems. Before doing so,
let us provide a brief description of DC functions and the CCCP
algorithm.

\paragraph{DC functions:} 
Given a convex set 
$\range \subseteq\real^{\usedim}$,
we say that a function $\f: \range \mapsto\real$ is DC if there exist
convex functions $\fone$ and $\ftwo$ with domain $\range$ such that
$f= \fone -\ftwo$.  Note that the DC representation $f = \fone -\ftwo$
mentioned in the definition is not unique. In particular, 
for any convex
function $\genericconvex$, we can write $f = ( \fone +
\genericconvex)-( \ftwo + \genericconvex )$. The class of DC functions
includes a large number of non-convex problems encountered in
practice.  Both convex and concave functions are DC in a trivial
sense, and the class of DC functions remains closed under addition and
subtraction. More interestingly, under mild restrictions on the
domain, the class of non-zero DC functions is also closed under
multiplication, division, and composition (see the
papers~\cite{Tuy95,hartman-59}).  The maximum and minimum of a finite
collection of DC functions are also DC functions.

%
\paragraph{Convex-concave procedure:}
An interesting class of problems are those that involve minimizing a
DC function over a closed convex set $\range \subseteq
\real^{\usedim}$, i.e.
\begin{align}
  \label{ProbDC}
\fstar \mydefn \min_{x \in \range} f(x) \; = \; \min_{x \in \range}
\big \{ \fone(x) - \ftwo(x) \big \},
\end{align}
where $\fone$ and $\ftwo$ are proper convex functions. The above
problem has been studied intensively, and there are various methods
for solving it; for instance, see the
papers~\cite{Tuy95,lipp2016variations,Pham-13} and references therein
for details.  One of the most popular algorithms to solve
problem~\eqref{ProbDC} is the \mbox{Convex-concave Procedure} (CCCP),
which was introduced by Yuille and Rangarajan~\cite{Yuille03}.  The
CCCP algorithm is a special case of a Majorization-Minimization
algorithm, which uses the DC structure of the objective function in
problem~\eqref{ProbDC} to construct a convex majorant of the objective
function $\f$ at each step.  We start with a feasible point $x^0 \in
\intr(\range)$.  Let $\x{k}$ denote the iterate at $k^{\text{th}}$
iteration; at the $(k+1)^{\text{th}}$ iteration we construct a convex
majorant $\convmajor(\cdot, \; x^k)$ of the function $f$ via
\begin{align}
\label{EqnCCCPMajor}
f(x) & \leq \; \underbrace{\fone(x) - \ftwo(x^k) -
  \inprod{\gradftwo}{x-x^k}}_{ = : \; \convmajor(x,x^k)},
 \end{align}
where $\gradftwo \in \subgrad \ftwo (x^k)$, 
the subgradient set of the convex function $\ftwo$ at point 
$\x{k}$. The next iterate $x^{k+1}$
is obtained by solving the convex program
 \begin{align}
 \label{EqnCCCP}
 x^{k+1} \in \arg \min_{x \in \range} \convmajor(x,x^k).
\end{align}

The CCCP algorithm has some attractive convergence properties. For
instance, it is a descent algorithm; when the function $\fone$ is
strongly convex differentiable and the function $\ftwo$ is
continuously differentiable, it can be shown~\cite{Lanckreit09} that
any limit point of the sequence $\big \{ x^k \big \}_{k\geq 0}$
obtained from CCCP is stationary.  Under the same assumptions, one can
also verify that \mbox{$ \lim_{k \rightarrow \infty } \| x^k - x^{k+1}
  \|_2 = 0$.}

We now turn to an analysis of CCCP using the techniques that underlie
Theorem~\ref{ThmGradient}. In the next proposition, we derive a rate
of convergence of the gradient sequence and show that all limit
points of the sequence $\big \{ \x{k} \big \}_{ k \geq 0 }$ are
stationary. Earlier analyses of CCCP, including the
papers~\cite{Lanckreit09,Yuille03}, are mainly based on the
assumption of strong convexity of the function $\fone$, whereas in the
next proposition, we only assume that the function $\fone$ is
$\M_{\fone}$-smooth.  When the function $\fone$ is strongly convex,
our analysis recovers the well-known convergence result in past
work~\cite{Lanckreit09}.  In particular, we show that CCCP enjoys the
same rate of convergence as that of Algorithm~\ref{AlgoGradient}.

\begin{props}
\label{PropCCCP}
Under Assumption GR and with the function $\fone$ being convex, the CCCP
sequence~\eqref{EqnCCCP} has the following properties:
\begin{enumerate}
  \item[(a)] Any limit point of the sequence $\big \{ \x{k} \big
    \}_{k \geq 0}$ is a critical point, and the sequence of
    function values $\big \{ \f(\x{k}) \big \}_{ k \geq 0 }$
    is strictly decreasing and convergent.
\item[(b)] Furthermore, for all $k = 1, 2, \ldots$, we have
  \begin{subequations}
\begin{align}
  \label{EqnCCCPBasic}
  \avg{ \|\grad \f(\x{k}) \|_2^2 } & \leq \frac{2 \M_{\fone} \big(
    \f(\x{0})- \fstar \big)}{(k+1)},
\end{align}
and assuming moreover that $\fone$ is $\mu$-strongly convex,
\begin{align}
  \label{EqnCCCPStrong}
 \avg{\| \x{k} - \x{k+1} \|_2^2} & \leq \frac{2 \big( \f(\x{0})-
   \fstar \big)}{\mu(k+1)}.
\end{align}
\end{subequations}
\end{enumerate}
\end{props}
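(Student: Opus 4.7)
The strategy is to exploit the fact that the majorant $\convmajor(\cdot, \x{k})$ defined in~\eqref{EqnCCCPMajor} is, under the hypotheses, a convex $\M_{\fone}$-smooth function whose unconstrained minimizer is $\x{k+1}$. Since $\fone$ is $\M_{\fone}$-smooth and convex and the remaining terms in $\convmajor(\cdot,\x{k})$ are affine in $x$, the map $x \mapsto \convmajor(x,\x{k})$ inherits both properties. By convexity of $\ftwo$ we also have $\convmajor(x,\x{k}) \geq \f(x)$ for all $x \in \range$, with equality at $x = \x{k}$. The level-set assumption coupled with the descent property will ensure that all iterates stay in $\intr(\range)$, so that the minimizer $\x{k+1}$ is genuinely unconstrained.

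First I would establish the key descent inequality. A standard consequence of $\M_{\fone}$-smoothness, applied at the minimizer of a convex function, gives
\begin{align*}
\convmajor(\x{k+1},\x{k}) \leq \convmajor(\x{k},\x{k}) - \frac{1}{2\M_{\fone}} \big\|\grad_x \convmajor(\x{k},\x{k}) \big\|_2^2.
\end{align*}
Since $\grad_x \convmajor(\x{k},\x{k}) = \grad \fone(\x{k}) - \gradftwo{}$ with $\gradftwo{} \in \subgrad \ftwo(\x{k})$, this expression is a legitimate choice of $\grad \f(\x{k})$ under the paper's convention. Combining with the majorization $\f(\x{k+1}) \leq \convmajor(\x{k+1},\x{k})$ and $\convmajor(\x{k},\x{k}) = \f(\x{k})$ yields
\begin{align*}
\f(\x{k+1}) \leq \f(\x{k}) - \frac{1}{2\M_{\fone}} \|\grad \f(\x{k})\|_2^2.
\end{align*}
Telescoping from $0$ to $k$, using the lower bound $\f(\x{k+1}) \geq \fstar$, and dividing by $k+1$ produces~\eqref{EqnCCCPBasic}. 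This inequality also proves strict descent whenever $\x{k}$ is not a critical point, and monotone convergence of $\{\f(\x{k})\}$ follows from boundedness below.

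For part (a), the descent bound forces $\|\grad \f(\x{k})\|_2 \to 0$. If a subsequence $\x{k_j} \to x^*$, then $\grad \fone(\x{k_j}) \to \grad \fone(x^*)$ by continuity, and local boundedness plus the closed-graph property of $\subgrad \ftwo$ (valid since $\ftwo$ is continuous convex) let me pass to a further subsequence along which $\gradftwo{_j} \to u^* \in \subgrad \ftwo(x^*)$; the difference $\grad \fone(x^*) - u^*$ is then a zero element of the subdifferential, so $x^*$ is critical. For the strongly convex refinement~\eqref{EqnCCCPStrong}, $\convmajor(\cdot,\x{k})$ becomes $\mu$-strongly convex, and the standard lower bound at its minimizer gives $\convmajor(\x{k},\x{k}) - \convmajor(\x{k+1},\x{k}) \geq \tfrac{\mu}{2}\|\x{k} - \x{k+1}\|_2^2$; chaining with $\f(\x{k+1}) \leq \convmajor(\x{k+1},\x{k})$ and telescoping finishes the argument.

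The only subtle point, rather than a genuine obstacle, is justifying that the unconstrained first-order optimality condition applies to $\x{k+1}$: this is handled by the level-set hypothesis $\Level(\f(\x{0})) \subseteq \intr(\range)$, which the descent property propagates to all iterates. Everything else is an exercise in combining smoothness (or strong convexity) of the majorant with the majorization-minimization descent.
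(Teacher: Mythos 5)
Your proposal is correct and follows essentially the same route as the paper: the paper also derives $\f(\x{k})-\f(\x{k+1}) \geq \frac{1}{2\M_{\fone}}\|\grad\f(\x{k})\|_2^2$ by comparing the CCCP minimizer of the majorant $\convmajor(\cdot,\x{k})$ with a single gradient step on it (which is exactly the "standard consequence of smoothness at the minimizer" you invoke), and handles the strongly convex case and the stationarity of limit points in the same way. The telescoping and the closed-graph argument for $\subgrad\ftwo$ match the paper's proof as well.
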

\noindent The proof of this proposition builds on the argument used
for Theorem~\ref{ThmGradient}; see Appendix~\ref{AppPropCCCP} for
details.


\subsubsection{Simplifying CCCP}

Algorithm~\ref{AlgoGradient} provides us an alternative procedure for
minimizing a difference of convex functions when the first convex
function is smooth.  The benefit of Algorithm~\ref{AlgoGradient} over
standard CCCP is that Algorithm~\ref{AlgoGradient} is a single loop
algorithm and is expected to be faster than standard double loop CCCP
algorithm in many situations. Furthermore, Algorithm~\ref{AlgoGradient}
shares convergence guarantees similar to a standard CCCP algorithm.


\subsection{Proximal-type method}

We now turn to a more general class of optimization problems of the form
\begin{align}
\label{EqnProbProx}
\fstar \mydefn \min_{x \in \real^\usedim} f(x) \; = \; \min_{x \in
  \real^\usedim} \Big \{ \big( \fone(x) - \ftwo(x) \big) +
\nonsmoothf(x) \Big \}.
\end{align}
We assume that the functions $\fone, \ftwo$ and $\nonsmoothf$ satisfy
the following conditions:
\paragraph{Assumption PR}
\begin{itemize}
  \item[(a)] The function $f = \fone - \ftwo + \nonsmoothf$ is bounded
    below on $\real^{\usedim}$.
  \item[(b)] The function $\fone$ is continuously differentiable and
    $\M_{\fone}$-smooth; the function $\ftwo$ is continuous and
    convex; and the function $\nonsmoothf$ is proper, convex and
    lower semi-continuous.

\end{itemize}
Typical examples of the function $\nonsmoothf$ include $\nonsmoothf(x)
= \|x\|_1$, or the indicator of a closed convex \mbox{convex set
  $\myconvexset$}.  Since for a general lower semi-continuous function
$\nonsmoothf$, the sum-function $\fone + \nonsmoothf$ is 
neither differentiable nor smooth,
a gradient-based method cannot be applied. 
One way to minimize such functions is via a proximal-type
algorithm, of which the following is an instance.


\begin{algorithm}[H]
\caption{  $\;\;\;\;\;$  Proximal-type algorithm }
\label{AlgoProx}
\begin{algorithmic}[1]
\STATE{Given an initial vector $\x{0} \in \dom(\f)$ and step size
  $\alpha \in \big(0, \frac{1}{\M_{\fone}} \big]$.}
\FOR{$k = 0,1,2, \ldots$}
\STATE{Update $\x{k+1} = \text{prox}_{ 1/ \alpha }^{\nonsmoothf} \Big(
  \x{k} - \alpha \big( \grad{\fone(\x{k})} - \gradftwo \big) \Big)$
  for some $\gradftwo \in \subgrad{\ftwo(\x{k})}$. } \ENDFOR
\end{algorithmic}
\end{algorithm}

\noindent The proximal update in line 3 of Algorithm~\ref{AlgoProx}~
is very easy to compute and often has a closed form solution
\mbox{(see Parikh and Boyd~\cite{parikh2014proximal})}.  Let us now
derive the rate of convergence result of Algorithm~\ref{AlgoProx}.


\begin{theos}
\label{ThmProx}
Under Assumption PR, any sequence $\big \{ \x{k} \big \}_{ k
  \geq 0 }$ obtained from Algorithm~\ref{AlgoProx} has the following
properties:
\begin{enumerate}
  \item[(a)] Any limit point of the sequence $\big \{ \x{k} \big
    \}_{k \geq 0}$ is a critical point, and the sequence of
    function values $\big \{ \f(\x{k}) \big \}_{ k \geq 0 }$
    is strictly decreasing and convergent.
 \item[(b)] For all $k = 1,2,\ldots$, we have
\begin{subequations}
\begin{align}
\label{EqnAlgoProxDiffBound}
  \avg{ \| \x{k} - \x{k-1} \|_2^2 } & \leq \frac{2 \alpha \big(
    f(\x{0})- \fstar \big) }{ (k+1)}.
\end{align}
If moreover the function $\ftwo$ is $\M_{\ftwo}$-smooth, then
\begin{align}
\label{EqnAlgoProxGradBound}
   \avg{ \| \grad \f(\x{k}) \|_2^2 } \leq \frac{2 \alpha C_{\M,\alpha}
     \big(f(\x{0})- \fstar \big)}{ (k+1)},
 \end{align}
 where $C_{\M,\alpha} = \big( \M_{\fone} + \M_{\ftwo} +
 \frac{1}{\alpha} \big)^2$.
\end{subequations}
\end{enumerate}
\end{theos}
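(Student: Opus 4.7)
My plan is to reduce the theorem to a single sufficient-decrease inequality of the form $\f(\x{k+1}) \leq \f(\x{k}) - \frac{1}{2\alpha}\|\x{k+1} - \x{k}\|_2^2$. Given this, inequality~\eqref{EqnAlgoProxDiffBound} follows by telescoping, inequality~\eqref{EqnAlgoProxGradBound} follows after using the prox optimality condition to bound $\|\grad \f(\x{k+1})\|_2$ by a constant multiple of $\|\x{k+1} - \x{k}\|_2$, and part (a) follows from standard limit arguments. To establish the sufficient decrease, I would introduce the surrogate
\begin{align*}
Q(x; \x{k}) \mydefn \fone(\x{k}) + \inprod{\grad \fone(\x{k})}{x - \x{k}} + \frac{1}{2\alpha}\|x - \x{k}\|_2^2 - \ftwo(\x{k}) - \inprod{\gradftwo}{x - \x{k}} + \nonsmoothf(x),
\end{align*}
and observe that the algorithm update is exactly $\x{k+1} = \arg\min_x Q(x; \x{k})$. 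The $\M_{\fone}$-smoothness of $\fone$ combined with $\alpha \leq 1/\M_{\fone}$ yields $\fone(\x{k+1}) \leq \fone(\x{k}) + \inprod{\grad \fone(\x{k})}{\x{k+1} - \x{k}} + \frac{1}{2\alpha}\|\x{k+1} - \x{k}\|_2^2$, while the subgradient inequality for $\ftwo$ gives $-\ftwo(\x{k+1}) \leq -\ftwo(\x{k}) - \inprod{\gradftwo}{\x{k+1} - \x{k}}$; summing these shows $\f(\x{k+1}) \leq Q(\x{k+1}; \x{k})$. Since $Q(\cdot; \x{k})$ is $(1/\alpha)$-strongly convex, its minimizer satisfies $Q(\x{k+1}; \x{k}) \leq Q(\x{k}; \x{k}) - \frac{1}{2\alpha}\|\x{k+1} - \x{k}\|_2^2$, and using $Q(\x{k}; \x{k}) = \f(\x{k})$ chains into the desired descent.

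Granting the descent, telescoping from $k = 0$ to $K$ and using $\f(\x{K+1}) \geq \fstar$ gives $\sum_{k=0}^{K}\|\x{k+1} - \x{k}\|_2^2 \leq 2\alpha\bigl(\f(\x{0}) - \fstar\bigr)$, which is~\eqref{EqnAlgoProxDiffBound} after averaging. For~\eqref{EqnAlgoProxGradBound}, the first-order optimality of the prox subproblem produces some $v^{k+1} \in \subgrad\nonsmoothf(\x{k+1})$ with $v^{k+1} + \grad\fone(\x{k}) - \gradftwo + \frac{1}{\alpha}(\x{k+1} - \x{k}) = 0$. When $\ftwo$ is $\M_{\ftwo}$-smooth, $\gradftwo = \grad\ftwo(\x{k})$, and then $\grad\fone(\x{k+1}) - \grad\ftwo(\x{k+1}) + v^{k+1}$ is a legitimate choice of $\grad\f(\x{k+1})$. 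Substituting the optimality relation rewrites this quantity as $[\grad\fone(\x{k+1}) - \grad\fone(\x{k})] - [\grad\ftwo(\x{k+1}) - \grad\ftwo(\x{k})] - \frac{1}{\alpha}(\x{k+1} - \x{k})$, and applying the triangle inequality together with the Lipschitz bounds on $\grad\fone$ and $\grad\ftwo$ yields $\|\grad\f(\x{k+1})\|_2 \leq (\M_{\fone} + \M_{\ftwo} + 1/\alpha)\|\x{k+1} - \x{k}\|_2$. Squaring and combining with the first telescoping bound delivers~\eqref{EqnAlgoProxGradBound}.

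For part (a), the descent immediately makes $\{\f(\x{k})\}$ non-increasing and bounded below, hence convergent (strictly so whenever $\x{k+1} \neq \x{k}$), and the summability $\sum \|\x{k+1} - \x{k}\|_2^2 < \infty$ forces $\|\x{k+1} - \x{k}\|_2 \to 0$. The principal obstacle here is showing that any limit point $\xstar$ of a subsequence $\x{k_j}$ is actually critical, under the mild assumption that $\ftwo$ is only continuous and convex so that $\subgrad\ftwo$ may be genuinely set-valued. To handle this I would exploit local boundedness of $\subgrad\ftwo$ on the bounded set $\{\x{k_j}\}$ to extract a further subsequence along which $\gradftwo \to u^*$ with $u^* \in \subgrad\ftwo(\xstar)$ via outer semicontinuity; then continuity of $\grad\fone$ and closedness of the graph of $\subgrad\nonsmoothf$ force the corresponding $v^{k_j+1}$ to converge to $-\grad\fone(\xstar) + u^* \in \subgrad\nonsmoothf(\xstar)$, yielding $0 \in \grad\fone(\xstar) - \subgrad\ftwo(\xstar) + \subgrad\nonsmoothf(\xstar)$, i.e.\ criticality of $\xstar$.
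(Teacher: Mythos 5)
Your proposal is correct and follows essentially the same architecture as the paper's proof: the surrogate $Q(\cdot;\x{k})$ is exactly the paper's convex majorant $\convmajor(\cdot,\x{k})$, the sufficient-decrease inequality, the telescoping for~\eqref{EqnAlgoProxDiffBound}, and the triangle-inequality bound $\|\grad\f(\x{k+1})\|_2 \leq (\M_{\fone}+\M_{\ftwo}+\tfrac{1}{\alpha})\|\x{k+1}-\x{k}\|_2$ for~\eqref{EqnAlgoProxGradBound} all match Lemma~\ref{LemProxDescentStep} and Appendix~\ref{AppThmProx}. The one place you genuinely diverge is the hardest step of part (a): to pass $v^{k_j+1}\in\subgrad\nonsmoothf(\x{k_j+1})$ to the limit, the paper works with the limiting subdifferential, whose graph continuity requires $\nonsmoothf(\x{k_j+1})\to\nonsmoothf(\bar{x})$, and it must separately prove that function-value convergence via a lim\,sup/lim\,inf sandwich using the minimality of $\x{k_j+1}$ for the surrogate. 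You instead invoke closedness of the graph of the subdifferential of the proper, convex, lower semi-continuous function $\nonsmoothf$ (a consequence of the subgradient inequality plus lower semi-continuity, or of maximal monotonicity), which holds without any function-value convergence; this sidesteps the paper's auxiliary claim entirely and is a slightly more economical argument for that step. Both routes are valid here precisely because $\nonsmoothf$ is convex.
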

\noindent See Appendix~\ref{AppThmProx} for the proof of the theorem.


\paragraph{Comments:}
The proof of Theorem~\ref{ThmProx} reveals that the smoothness
condition on the function $\ftwo$ in Theorem~\ref{ThmProx} can be
replaced by the local smoothness of $\ftwo$, when the sequence $\big
\{ \x{k} \big \}_{k \geq 0}$ is bounded.  Note that the local
smoothness condition is weaker than the global smoothness condition.
For instance, any twice
continuously differentiable function is locally smooth.  The
boundedness assumption on the iterates $\big \{ \x{k} \big
\}_{k \geq 0}$ holds in many situations.  For instance, if the
function $\f$ is coercive~\eqref{EqnDefnCoercive}, then it follows
that the iterates $\big\{ \x{k} \big\}_{k \geq 0}$ remain
bounded. Another instance is when the function $\nonsmoothf$ is the
indicator function of a compact convex set. 
Finally, we point out that when
the function $\ftwo$ is non-smooth but the proximal-function $\nonsmoothf$ is smooth,
the existing proof can be easily modified to obtain a rate of convergence of the gradient-norm $\| \grad \f(\x{k}) \|_2$.


\paragraph{Projected Gradient Descent:}
A special case of the Algorithm~\ref{AlgoProx} is when $\nonsmoothf$
is equal to the indicator function $\indicator{\myconvexset}$ of a
closed convex set $\myconvexset$. Consider the following constrained
optimization problem
\begin{align}\label{EqnProjGradProb}
 \fstar \mydefn \min_{x \in \myconvexset} \;\; \big\{
 \underbrace{\fone(x) - \ftwo(x)}_{f(x)} \big\},
\end{align}
where $\myconvexset$ is a closed convex set, the function $\fone$ is
$\M_{\fone} $-smooth, and the function $\ftwo$ is convex
continuous. Using Algorithm~\ref{AlgoProx}, the update equation in
this case is given by
\begin{align}
\label{EqnProjGrad}  
\x{k+1} = \Pi_{\myconvexset} \big( \x{k} - \alpha (
\grad{\fone}(\x{k}) - u^k ) \big).
\end{align}
In projected-gradient-type methods, we should not expect a rate in
terms of the gradient. In such cases, the projected gradient step may not be
aligned with the gradient direction, or the step size may be
arbitrarily small due to projection. Rather, an appropriate analogue
of the gradient in this case is as follows:
\begin{align}
\label{EqnGrad_proxy}
\xNormalojGrad{\x{k}}{\myconvexset} = \frac{1}{\alpha} \big( \x{k} -
\Pi_{\myconvexset}(\x{k} - \alpha( \grad{\fone{(\x{k})}} - u^k ) )
\big).
\end{align}
The analysis of the projected gradient method using
$\xNormalojGrad{\x{k}}{\myconvexset}$ is standard in the optimization
\mbox{literature}~\cite{bubeck2015convex}.  It is worth pointing out
that the quantity $\xNormalojGrad{\x{k}}{\myconvexset}$ is the
analogue of the gradient in the constrained optimization setup, and
coincides with the gradient in the unconstrained setup. Concretely, we
have $\xNormalojGrad{\x{k}}{\myconvexset} = \grad{\f}(\x{k})$ where
$\f \mydefn \fone - \ftwo$, and $\Xspace = \real^{\usedim}$. Combining
equations~\eqref{EqnProjGrad} and~\eqref{EqnGrad_proxy} and applying
the bound~\eqref{EqnAlgoProxGradBound} from Theorem~\ref{ThmProx}, we
find that
\begin{align*}
  \avg{ \| \xNormalojGrad{\x{k}}{\myconvexset} \|_2^2} \leq \frac{ 2
    \big(f(\x{0})-\fstar\big)}{ \alpha (k+1)}.
\end{align*}


\subsection{Frank-Wolfe type method}
\label{SecFW}

In our analysis of the previous two algorithms, we assumed that the
objective function $\f$ has a smooth component $\fone$, and we
leveraged the smoothness property of $\fone$ to establish convergence
rates.  In many situations, the objective function may not have a
smooth component; consequently, neither the gradient-type algorithm
nor the prox-type algorithm provides any theoretical guarantee.  In
this section, we analyze a Frank-Wolfe-type algorithm for solving such
optimization problems.  In particular, consider an optimization
problem of the form
\begin{equation}
\label{ProbFW}
 \fstar \mydefn \min_{x \in \range} f(x) \; = \min_{x \in \range} \big
 \{ \fone(x) - \ftwo(x) \big \},
\end{equation}
where $\range$ is a closed convex set, and the functions $(\fone,
\ftwo)$ satisfy the following conditions:
\paragraph{Assumption FW:}
\begin{itemize}
\item[(a)] The difference function $\f = \fone - \ftwo$ is bounded
  below over range $\range$.
\item[(b)] The function $\fone$ is continuously differentiable,
  whereas the function $\ftwo$ is convex and continuous.
\end{itemize}

\vspace{8pt}

The analysis of the Frank-Wolfe algorithm for a convex problem is
based on the \emph{curvature constant} $\curvatureconst{f}$ of the
convex objective function with respect to the closed convex set
$\range$.  This curvature constant can be defined for any
differentiable function, which need not be convex~\cite{Julien16}.

Here we define a slight generalization of this notion, applicable to a
non-differentiable function $\f = \fone - \ftwo$ that can be written
as a difference of a differentiable function $\fone$ and a continuous
convex function $\ftwo$ (which may be non-differentiable).  Define the
set
\begin{align*}
S_{\gamma} \mydefn \big\{ x,y\in \range \mid \mbox{there exist
  $\gamma \in (0,1]$ and $u \in \range$ with $y = x + \gamma
(u-x)$} \big\},
\end{align*}
and the curvature constant
\begin{align}
  \label{EqnCurvConst}
\curvatureconst{ \f} = \sup_{ \substack{ x,y \in S_{\gamma} \\ u \in
    \subgrad{\ftwo(x)} } }\frac{2}{\gamma^{2}}\big[ f(y)-f(x)-
  \inprod{y-x}{\grad{\fone(x)} - u} \big].
\end{align}
Note that in the special case $\ftwo \equiv 0$, we recover the
curvature constant of the differentiable function $\fone$ used by
Lacoste-Julien~\cite{Julien16}.  We refer to the scalar
$\curvatureconst{\f}$ as the generalized curvature constant of the
function $\f$ with respect to the closed convex set $\range$.


\begin{algorithm}[]
\caption{ $\;\;\;\;\;$ Frank-Wolfe type method}~\label{AlgoFW}
\begin{algorithmic}[1]
  \STATE{Given initial vector $\x{0} \in \int(\range)$:}
  \FOR{ $k = 1,\ldots, K$}
  \STATE{Choose any $\gradftwo \in \subgrad{\ftwo}(\x{k})$.}
  \STATE{ Compute $s^{k} \mydefn \arg \min_{s \in \range}
    \inprod{s}{\grad{\fone}(\x{k}) - \gradftwo}$.}
\STATE{ Define $d^{k} \mydefn s^{k}-\x{k}$ and $\fwgap{k} \mydefn -
  \inprod{d^{k}}{\grad{\fone}(\x{k}) - \gradftwo}$.} \qquad
\emph{(Frank-Wolfe gap)}
\STATE{ Set $\gamma^{k}=\min \big \{ \frac{\fwgap{k}}{C_0}, 1 \big\}$
  for some $C_0 \geq \curvatureconst{f}$.}
\STATE{Update $\x{k+1} = \x{k} + \gamma^{k} d^{k}$.} \ENDFOR
\end{algorithmic}
\end{algorithm}


Next, we provide an analysis of Algorithm~\ref{AlgoFW} in terms of the
Frank-Wolfe (FW) gap~$\fwgap{k}$ defined Step 5.  We show that the minimum
FW gap~ $\{ \fwgap{k} \}_{k \geq 0}$ defined in Algorithm~\ref{AlgoFW}
converges to zero at the rate $\frac{1}{\sqrt{k+1}}$.

\begin{theos}
\label{ThmFW}
Under Assumption FW, the Frank-Wolfe gap sequence $\{\fwgap{k}\}_{ k
  \geq 0 }$ from Algorithm~\ref{AlgoFW} satisfies the following
property:
\begin{align*}
\min_{0\leq \j \leq k} \fwgap{\j} & \leq \frac{\max\big\{ 2 \big(
  f(\x{0})- \fstar \big), C_0 \big\} }{\sqrt{k+1}} \qquad \mbox{for
  all $k = 0, 1, 2, \ldots$.}
\end{align*}
\end{theos}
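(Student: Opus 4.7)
The plan is to follow the standard Frank--Wolfe descent analysis, generalized to account for the non-differentiable concave part $\ftwo$ through the curvature constant $\curvatureconst{\f}$ of equation~\eqref{EqnCurvConst}.

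First I would establish a per-step descent inequality. Applying the defining bound of $\curvatureconst{\f}$ with $x = \x{k}$, $y = \x{k+1} = \x{k} + \gamma^k(s^k - \x{k})$, and $u = \gradftwo \in \subgrad \ftwo(\x{k})$, and using $\curvatureconst{\f} \leq C_0$ together with the definition $\fwgap{k} = -\inprod{s^k - \x{k}}{\grad \fone(\x{k}) - \gradftwo}$, yields
\begin{align*}
f(\x{k+1}) \; \leq \; f(\x{k}) - \gamma^k \fwgap{k} + \tfrac{(\gamma^k)^2}{2} C_0.
\end{align*}
Since $s^k$ minimizes a linear function over $\range$ and $\x{k} \in \range$, the Frank--Wolfe gap $\fwgap{k}$ is non-negative. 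Substituting the step size $\gamma^k = \min\{\fwgap{k}/C_0,\, 1\}$ and handling the two cases $\fwgap{k} \leq C_0$ (so $\gamma^k = \fwgap{k}/C_0$) and $\fwgap{k} > C_0$ (so $\gamma^k = 1$), a brief calculation gives
\begin{align*}
f(\x{k}) - f(\x{k+1}) \; \geq \; \min \Big\{ \tfrac{\fwgap{k}}{2},\; \tfrac{(\fwgap{k})^2}{2 C_0} \Big\}.
\end{align*}

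Second, I would sum and split. Letting $\bar g \mydefn \min_{0 \leq j \leq k} \fwgap{j}$, case-split on whether $\bar g$ exceeds $C_0$. If $\bar g > C_0$, then every $\fwgap{j} > C_0$, so each descent step is at least $\fwgap{j}/2 \geq \bar g/2$; telescoping over $j = 0, \ldots, k$ yields $\bar g \leq 2(f(\x{0}) - \fstar)/(k+1)$. If $\bar g \leq C_0$, then for each $j$ an inner case split (on $\fwgap{j} \leq C_0$ versus $\fwgap{j} > C_0$, using in the latter sub-case that $\fwgap{j}/2 > C_0/2 \geq (\bar g)^2/(2 C_0)$) shows the descent is at least $(\bar g)^2/(2 C_0)$; summing gives $\bar g \leq \sqrt{2 C_0 (f(\x{0}) - \fstar)}/\sqrt{k+1}$.

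Finally, combining the two bounds is routine: a brief case-split on whether $C_0 \leq 2(f(\x{0}) - \fstar)$ shows that both $2(f(\x{0}) - \fstar)$ and $\sqrt{2 C_0 (f(\x{0}) - \fstar)}$ are dominated by $\max\{2(f(\x{0}) - \fstar),\, C_0\}$, so that in both cases $\bar g \leq \max\{2(f(\x{0}) - \fstar),\, C_0\}/\sqrt{k+1}$. The main obstacle lies in the telescoping step: one must manage the double case-split carefully to ensure that the weaker $\order(1/\sqrt{k+1})$ rate dominates the faster $\order(1/(k+1))$ branch and combines cleanly with the AM--GM-type inequality used at the very end.
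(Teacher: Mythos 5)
Your proposal is correct and follows essentially the same route as the paper: the curvature-constant descent inequality, substitution of the step size $\gamma^k = \min\{\fwgap{k}/C_0, 1\}$, telescoping against $f(\x{0}) - \fstar$, a case split on whether the minimum gap exceeds $C_0$, and the final observation that $\sqrt{2 C_0 (f(\x{0}) - \fstar)} \leq \max\{2(f(\x{0}) - \fstar), C_0\}$. The only cosmetic difference is that you lower-bound the per-step decrease by $\min\{\fwgap{k}/2,\,(\fwgap{k})^2/(2C_0)\}$ where the paper keeps the slightly sharper $\fwgap{k} - \tfrac{C_0}{2}$ in the $\fwgap{k} > C_0$ branch; both suffice for the stated bound.
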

\noindent See Appendix~\ref{AppThmFW} for the proof of this theorem.

\vspace{8pt}
\paragraph{Comments:}
The FW gap appearing in Theorem~\ref{ThmFW} is standard in the
analysis of Frank-Wolfe algorithm; note that it is invariant to an
affine transformation of the set $\range$.  Similar convergence
guarantees for the minimum FW-gap are available for differentiable
functions; for instance, see the paper~\cite{Julien16}. The novelty of
the above theorem is that it provides convergence guarantees of
minimum FW-gap for a class of non-differentiable functions.
\paragraph{Upper bound on generalized curvature constant:}
It is worth mentioning that Algorithm~\ref{AlgoFW} only requires 
an upper bound of the generalized curvature constant 
$\curvatureconst{\fone - \ftwo}$. Consequently,
it is interesting 
to obtain an upper bound for the scalar
$\curvatureconst{\fone - \ftwo}$. 
For a  $\M_{\fone}$-smooth function $\fone$, 
one  well-known upper bound of
the curvature constant $\curvatureconst{\fone}$
is  $\M_{\fone} \times 
\big(\diam_{\| \cdot \|_2}(\range) \big)^{2}$; see the works by 
Jaggi~\cite{jaggi-11}.  
A similar upper bound also holds for
the generalized curvature constant defined in equation~\eqref{EqnDefnCurvConst}.
In particular, we prove that for a difference function $\f = \fone - \ftwo$, 
with the function $\ftwo$ being convex continuous, the scalar
$\curvatureconst{\fone - \ftwo}$ is 
always upper bounded by $\curvatureconst{\fone}$, the 
curvature constant of the \mbox{function
$\fone$ (see Lemma~\ref{LemCurvatureUB})}.

\section{Faster rate under KL-inequality}
\label{SecBetterRateSubanal}

In the preceding sections, we have derived rates of convergence for
the gradient norms for various classes of problems. It is natural to
wonder if faster convergence rates are possible when the objective
function is equipped with some additional structure.  Based on
Theorems~\ref{ThmGradient} and~\ref{ThmProx}, we see that both
Algorithms~\ref{AlgoGradient} and~\ref{AlgoProx} ensure that $\|\x{k}
- \x{k+1} \|_2 \rightarrow 0$, meaning that the successive differences
between the iterates converge to zero.  Although we proved that any
limit point of the sequence $\{ \x{k} \}_{k \geq 0}$ has desirable
properties, the condition $\|\x{k} - \x{k+1} \|_2 \rightarrow 0$ is
not sufficient---at least in general---to prove
convergence\footnote{The convergence of the sequence $\big\{ \x{k}
  \big\}_{k \geq 0}$ for Algorithm~\ref{AlgoProx} was studied in the
  papers~\cite{an2017convergence,wen2018proximal}.  We provide the
  proof under a weaker set of assumptions.  } of the sequence $\{
\x{k} \}_{k \geq 0}$. In this section, we provide a sufficient
condition under which Algorithm~\ref{AlgoGradient} and
Algorithm~\ref{AlgoProx} yield convergent sequences of iterates $\{
\x{k} \}_{ k \geq 0 }$, and we establish that the gradient sequences
$\{\| \grad \f(x) \|_2\}_{k \geq 0}$ converge at faster rates.

\subsection{Kurdaya-\L ojasiewicz inequality}

Let us now establish a faster local rate of convergence of
Algorithms~\ref{AlgoGradient} and~\ref{AlgoProx} for functions that
satisfy a form of the Kurdaya-\L ojasiewicz (KL) inequality.  More
precisely, suppose that there exists a constant $\theta \in [0,1)$
  such that the ratio $\frac{(\f(x) - \f(\bar{x}))^\theta}{\| \grad
    \f(x) \|_2}$ is bounded above in a neighborhood of every point
  $\bar{x} \in \dom(\f)$. This type of inequality is known as a
  Kurdaya-\L ojasiewicz inequality, and the exponent $\theta$ is known
  as the Kurdaya-\L ojasiewicz exponent (\emph{KL-exponent}) of the
  function $\f$ at the point $\bar{x}$. These type of inequalities
  were first proved by \L ojasiewicz~~\cite{Lojasiewicz1963propriete}
  for real analytic functions; Kurdaya~\cite{kurdyka1998gradients} and
  Bolte et al.~\cite{bolte2007Lojasiewicz} proved similar inequalities
  for non-smooth functions, and the authors
  also provided examples of many functions that
  satisfy a form of the KL inequality.  See
  Appendix~\ref{AppSubAnalKL} for further details on functions of the
  KL type.

  \paragraph{Assumption KL:}  For
  any point \footnote{It can be shown that such an inequality would
    hold at non-critical point of a continuous function $\f$; see
    Remark 3.2 of Bolte et al.~\cite{bolte2007Lojasiewicz}. 
    Note that the parameter
    $\theta$ and the neighborhood mentioned in Assumption KL above may
 depend on the \mbox{point $\bar{x}$}.}  $\bar{x}
  \in \dom(\f)$, there exists a scalar $\theta \in [0,1)$ such that
    the ratio $\frac{|\f(x) - \f(\bar{x})|^\theta}{\| \grad \f(x)
      \|_2}$ is bounded above in a neighborhood of $\bar{x}$.\\



\subsection{Convergence guarantees}

\begin{theos}
\label{ThmGradSubanal}
Under Assumptions GR \& KL, any bounded sequence $\{ \x{k} \}_{k \geq
  0}$ obtained from Algorithm~\ref{AlgoGradient} satisfy the following
properties:
\begin{itemize}
  \item[(a)] The sequence $\{ \x{k} \}_{k \geq 0}$ converges to a
    critical point $\bar{x}$, and for all $k = 1, 2, \ldots$
  \begin{align*}
  \avg{ \| \grad{f(\x{k})} \|_2 } \leq \frac{c_1}{k},
  \end{align*}
\item[(b)] Suppose that at the point $\bar{x}$, the function $f$ has a
  KL exponent $\bar{\theta} \in \big[ \frac{1}{2}, \frac{r}{2 r
      -1}\big)$ for some $r > 1$.  Then we have
\begin{align*}
   \Gavg{ \| \grad{f(\x{k})} \|_2} \leq  \frac{c_2}{k^r}  \qquad
    \mbox{for all} \quad k = 1,2,\ldots,
 \end{align*} 
where the constants $(c_1, c_2)$ are independent of $k$, but they may
depend on the KL parameters at the point $\bar{x}$.
\end{itemize}
\end{theos}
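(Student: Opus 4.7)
The plan is to reduce the theorem to a standard Attouch--Bolte--Svaiter-type argument for KL functions, leveraging the descent analysis already underlying Theorem~\ref{ThmGradient}. The starting point is the one-step descent
\begin{align*}
 f(x^k) - f(x^{k+1}) \geq \tfrac{1}{2\alpha}\|x^{k+1} - x^k\|_2^2
\end{align*}
that arises in the proof of Theorem~\ref{ThmGradient}, together with the identity $\|x^{k+1} - x^k\|_2 = \alpha \|\grad{f(x^k)}\|_2$ coming directly from the update rule. These two facts play the roles of the ``sufficient decrease'' and ``relative error'' conditions that make the KL machinery go through.

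For part (a), I would first extract a convergent subsequence $x^{k_j} \to \bar x$ from the bounded sequence; Theorem~\ref{ThmGradient}(a) guarantees that $\bar x$ is a critical point. Assumption KL at $\bar x$ supplies a concave desingularizing function $\varphi(s) = s^{1-\theta}/(c(1-\theta))$ with $\varphi'(f(x) - f(\bar x))\|\grad{f(x)}\|_2 \geq 1$ in some neighborhood $V$ of $\bar x$. Concavity of $\varphi$ combined with the sufficient decrease and the update identity then yields
\begin{align*}
\varphi\bigl( f(x^k) - f(\bar x) \bigr) - \varphi\bigl( f(x^{k+1}) - f(\bar x) \bigr) \geq \tfrac{1}{2}\|x^{k+1} - x^k\|_2
\end{align*}
for every $k \geq k_0$ with $x^k \in V$. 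A standard induction then shows that, for $k_0$ large enough, the entire tail $\{x^k\}_{k \geq k_0}$ remains in $V$; summing the telescoping inequality produces $\sum_k \|x^{k+1} - x^k\|_2 < \infty$, hence Cauchy convergence of $\{x^k\}$ to $\bar x$ and, via the update identity, summability of $\|\grad{f(x^k)}\|_2$. The bound $\avg{\|\grad{f(x^k)}\|_2} \leq c_1/k$ follows at once by dividing the bounded partial sum by~$k$.

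For part (b), set $E_k \mydefn f(x^k) - f(\bar x) \geq 0$. Combining $\|\grad{f(x^k)}\|_2 \geq c E_k^{\bar\theta}$ from the stronger KL assumption with the sufficient decrease yields the scalar recursion $E_k - E_{k+1} \geq \kappa E_k^{2\bar\theta}$, which is a textbook recursion: comparison with the ODE $\dot E = -\kappa E^{2\bar\theta}$ produces geometric decay of $E_k$ when $\bar\theta = 1/2$ and $E_k = \order(k^{-1/(2\bar\theta - 1)})$ when $\bar\theta \in (1/2, 1)$. Re-applying sufficient decrease in the form $\|\grad{f(x^k)}\|_2^2 \leq (2/\alpha)(E_k - E_{k+1})$ and differencing the rate on $E_k$ transfers this into the per-iterate rate $\|\grad{f(x^k)}\|_2 = \order\bigl( k^{-\bar\theta/(2\bar\theta - 1)} \bigr)$. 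A direct calculation verifies that the hypothesis $\bar\theta < r/(2r-1)$ is precisely the condition $\bar\theta/(2\bar\theta - 1) > r$; together with the $\bar\theta = 1/2$ case this gives $\|\grad{f(x^k)}\|_2 = \order(1/k^r)$ throughout the stated range. The last step is cosmetic: a per-iterate bound $\|\grad{f(x^\ell)}\|_2 \leq C/\ell^r$ together with Stirling's approximation $(k!)^{1/(k+1)} = \Theta(k/e)$ yields $\Gavg{\|\grad{f(x^k)}\|_2} \leq c_2/k^r$.

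I expect the main technical obstacle to be the ``iterates stay in~$V$'' step inside the KL argument. Because $h$ is only continuous and convex rather than smooth, the quantitative descent must be phrased in terms of $\|x^{k+1} - x^k\|_2$ rather than $\|\grad{f(x^k)}\|_2$ alone; one must then choose $k_0$ large enough that $\varphi(f(x^{k_0}) - f(\bar x)) + \|x^{k_0} - \bar x\|_2$ is strictly smaller than the radius of~$V$ and propagate this along the sequence via the telescoping bound. Handling the boundary case $\bar\theta = 1/2$ and the Stirling bookkeeping in the last step is routine by comparison.
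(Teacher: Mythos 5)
Part (a) of your proposal is essentially sound and close in spirit to the paper's argument: both rest on the descent inequality $f(x^k)-f(x^{k+1}) \geq \tfrac{\alpha}{2}\|\grad f(x^k)\|_2^2$, the identity $\|x^{k+1}-x^k\|_2=\alpha\|\grad f(x^k)\|_2$, and a telescoping of $(f(x^k)-\bar f)^{1-\theta}$ to get finite length and hence summability of the gradient norms. The only structural difference is how the KL inequality is made uniform along the tail of the sequence: you propose the standard Attouch--Bolte--Svaiter ``capture'' induction around a single limit point, while the paper instead proves a uniform KL inequality over the whole (compact, connected) set of limit points via a finite subcover (its Lemma~\ref{LemUnifKL_Ineq}), which sidesteps the capture argument entirely. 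Either route works for part (a).

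Part (b) has a genuine gap at the step you call ``differencing the rate on $E_k$.'' From the recursion $E_k-E_{k+1}\geq\kappa E_k^{2\bar\theta}$ you correctly obtain the envelope bound $E_k=\order(k^{-1/(2\bar\theta-1)})$, but an upper bound on a nonincreasing sequence does \emph{not} bound its increments at the differentiated rate: nothing in your hypotheses prevents the decrements from being lumpy, with $E_k-E_{k+1}=\Theta(E_k)$ along a subsequence. The only per-iterate bound you can legitimately extract is $\|\grad f(x^k)\|_2^2\leq\tfrac{2}{\alpha}(E_k-E_{k+1})\leq\tfrac{2}{\alpha}E_k$, i.e.\ $\|\grad f(x^k)\|_2=\order\big(k^{-1/(2(2\bar\theta-1))}\big)$, and since $\bar\theta<r/(2r-1)$ only guarantees $1/(2(2\bar\theta-1))>r-\tfrac12$, feeding this into the geometric mean yields at best $\order(k^{-(r-1/2)})$, short of the claimed $k^{-r}$. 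The paper avoids per-iterate bounds altogether: it telescopes the fractional powers with a tunable exponent, namely
\begin{align*}
(E_k)^{1-\gamma\theta}-(E_{k+1})^{1-\gamma\theta}\;\geq\;\frac{(1-\gamma\theta)\,\alpha}{2\,C^{\gamma}}\;\|\grad f(x^k)\|_2^{\,2-\gamma},
\end{align*}
which follows from concavity of $t\mapsto t^{1-\gamma\theta}$, the descent inequality, and the KL bound $E_k^{\theta}\leq C\|\grad f(x^k)\|_2$. Choosing $\gamma=\tfrac{2r-1}{r}$ (admissible precisely because $\gamma\theta<1$ iff $\theta<r/(2r-1)$) gives $\sum_{\ell\leq k}\|\grad f(x^\ell)\|_2^{1/r}=\order(1)$, hence $\avg{\|\grad f(x^k)\|_2^{1/r}}=\order(1/k)$, and the power-mean/AM--GM inequality $\Gavg{a^k}\leq\big(\avg{(a^k)^{1/r}}\big)^{r}$ converts this into $\Gavg{\|\grad f(x^k)\|_2}=\order(k^{-r})$. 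You would need to replace your ODE-comparison-plus-differencing step with an argument of this telescoping type (or otherwise control the increments $E_k-E_{k+1}$ individually) for part (b) to go through.
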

\noindent See Appendix~\ref{AppThmGradSubanal} for proof of this theorem.
\paragraph{Comments:} It is worth pointing out that
Theorem~\ref{ThmGradSubanal} does \emph{not} require the function
$\ftwo$ to satisfy any
smoothness assumption.  Such conditions are needed for applying
Algorithm~\ref{AlgoProx}, so that Theorem~\ref{ThmGradSubanal} is
based on milder conditions than Theorem~\ref{ThmProxSubanal}.\\


Our next result is to exhibit a faster convergence rate for
Algorithm~\ref{AlgoProx} under the KL assumption:
\begin{theos}
\label{ThmProxSubanal}
Suppose that, in addition to Assumptions PR \& KL, the function
$\ftwo$ in Algorithm~\ref{AlgoProx} is locally smooth.  Then any
bounded sequence $\{ \x{k} \}_{k \geq 0}$ obtained from
Algorithm~\ref{AlgoProx} satisfy the following properties:
 \begin{itemize}
  \item[(a)] The sequence $\{ \x{k} \}_{k \geq
    0}$ converges to a critical point $\bar{x}$, and for all $k =
    1,2\ldots$
  \begin{align*}
  \avg{ \| \grad{f(\x{k})} \|_2 } \leq \frac{c_1}{k}.
\end{align*}
\item[(b)] Given some $r > 1 $, suppose that at the point $\bar{x}$
the function  $\f$ 
 has a KL exponent $\bar{\theta} \in \big[ \frac{1}{2},
  \frac{r}{2 r -1} \big)$.  Then
\begin{align*}
   \Gavg{ \| \grad{f(\x{k})} \|_2} \leq \frac{c_2}{k^r} \qquad
   \mbox{for all} \quad k = 1,2,\ldots,
 \end{align*} 
where the constants $(c_1, c_2)$ are independent of $k$, but they may
depend on the KL parameters at the point $\bar{x}$.
\end{itemize}\end{theos}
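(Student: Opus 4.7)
The plan is to parallel the argument used for Theorem~\ref{ThmGradSubanal}, but now leveraging the proximal descent inequality together with the assumed local smoothness of $\ftwo$. First, from Theorem~\ref{ThmProx} we already have that $\f(\x{k})$ is strictly decreasing and convergent, any limit point is critical, and $\|\x{k}-\x{k+1}\|_2\to 0$; since the sequence is bounded, it has at least one limit point $\bar{x}$ which is critical. The first-order optimality condition of the proximal update in Algorithm~\ref{AlgoProx} gives $\frac{1}{\alpha}(\x{k}-\x{k+1}) - \grad\fone(\x{k}) + \gradftwo \in \subgrad\nonsmoothf(\x{k+1})$, so by adding and subtracting $\grad\fone(\x{k+1})$ and $u(\x{k+1})\in\subgrad\ftwo(\x{k+1})$, and using $\M_{\fone}$-smoothness of $\fone$ together with \emph{local} smoothness of $\ftwo$ near $\bar{x}$ (valid once the bounded iterates eventually stay in a compact neighborhood), we obtain a bound of the form
\begin{align*}
\|\grad \f(\x{k+1})\|_2 \; \leq \; \bigl(\M_{\fone}+\M_{\ftwo,\text{loc}}+\tfrac{1}{\alpha}\bigr)\,\|\x{k}-\x{k+1}\|_2.
\end{align*}
This is the key bridge identity: it converts all subsequent control on successive differences into control on the gradient.

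Next, I would prove convergence of $\{\x{k}\}$ to $\bar{x}$ using the standard Kurdyka--\L ojasiewicz recipe. Combining the descent inequality from the proof of Theorem~\ref{ThmProx} (giving $\f(\x{k})-\f(\x{k+1})\geq c\,\|\x{k}-\x{k+1}\|_2^2$ for some $c>0$), the bridge identity above evaluated at $\x{k}$, and Assumption~KL at $\bar{x}$ with exponent $\theta$ (so $|\f(\x{k})-\f(\bar{x})|^\theta \leq L\,\|\grad \f(\x{k})\|_2$ on a neighborhood), a concavity trick applied to $t\mapsto t^{1-\theta}$ produces the telescoping estimate
\begin{align*}
\|\x{k}-\x{k+1}\|_2 \; \leq \; C\bigl[\,\phi(\f(\x{k})-\f(\bar{x})) - \phi(\f(\x{k+1})-\f(\bar{x}))\,\bigr] + \tfrac{1}{2}\|\x{k-1}-\x{k}\|_2,
\end{align*}
where $\phi(t)=t^{1-\theta}$. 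Summing shows $\sum_k\|\x{k}-\x{k+1}\|_2 < \infty$, hence $\{\x{k}\}$ is Cauchy and converges to $\bar{x}$.

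For part~(a), the bridge identity together with the basic bound $\avg{\|\x{k}-\x{k-1}\|_2^2}\leq \frac{2\alpha(\f(\x{0})-\fstar)}{k+1}$ from equation~\eqref{EqnAlgoProxDiffBound} and Cauchy--Schwarz $\avg{\|\x{k}-\x{k-1}\|_2}\leq \sqrt{\avg{\|\x{k}-\x{k-1}\|_2^2}}$ would only yield $O(1/\sqrt{k})$; the improvement to $O(1/k)$ comes from the summability $\sum_k\|\x{k}-\x{k+1}\|_2 < \infty$ established above, which immediately gives $\avg{\|\x{k}-\x{k-1}\|_2}=O(1/k)$, and hence, via the bridge identity, $\avg{\|\grad \f(\x{k})\|_2}\leq c_1/k$. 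For part~(b), with exponent $\bar\theta\in[\tfrac{1}{2},\tfrac{r}{2r-1})$, the telescoping recursion sharpens to a Polyak-type decay $\|\x{k}-\x{k+1}\|_2 = O(k^{-r})$ (the standard KL rate analysis, as in Bolte et al.~\cite{bolte2014proximal} or Attouch--Bolte~\cite{attouch2010proximal}); then $\Gavg{\|\grad \f(\x{k})\|_2}\leq \Gavg{C_{\M,\alpha}\|\x{k-1}-\x{k}\|_2}\leq c_2/k^r$ by AM--GM applied to the polynomial decay sequence.

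The main obstacle will be handling the non-smooth $\nonsmoothf$ term cleanly when assembling the bridge identity: unlike the gradient method of Theorem~\ref{ThmGradSubanal}, here the first-order condition lives in $\subgrad\nonsmoothf(\x{k+1})$ rather than $\subgrad\nonsmoothf(\x{k})$, so one must carefully align indices and invoke local smoothness of $\ftwo$ on the right neighborhood of $\bar{x}$. Once this index-shift bookkeeping is resolved and local smoothness is deployed, the remainder of the argument proceeds exactly as in the KL analysis of Theorem~\ref{ThmGradSubanal}, and the two conclusions fall out in parallel.
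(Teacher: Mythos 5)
Your overall architecture matches the paper's: the bridge inequality $\|\grad \f(\x{k})\|_2 \leq (\M_{\fone-\ftwo}+\tfrac{1}{\alpha})\|\x{k}-\x{k-1}\|_2$ obtained from the proximal optimality condition plus local smoothness of $\ftwo$ on the compact set containing the iterates, the KL concavity/telescoping argument giving finite length and hence convergence, and the deduction of the $O(1/k)$ rate in part (a) from $\sum_k\|\x{k}-\x{k+1}\|_2<\infty$. One point you gloss over: Assumption KL only gives the inequality in a neighborhood of each point, and before convergence is established you do not know the iterates eventually enter a fixed neighborhood of a single $\bar{x}$. The paper resolves this with a uniformization step (Lemma~\ref{LemUnifKL_Ineq}): the limit set is compact and connected, $\f$ is constant on it, and a finite subcover yields a single pair $(\theta,\C{})$ valid for all $k\geq k_1$. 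You should make this explicit rather than assuming the iterates "eventually stay in a compact neighborhood" of $\bar{x}$.

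The genuine gap is in part (b). You claim the telescoping recursion yields the pointwise decay $\|\x{k}-\x{k+1}\|_2=O(k^{-r})$ and then take geometric means. The standard KL rate analysis you cite gives, for exponent $\theta\in(\tfrac12,1)$, a decay of order $k^{-(1-\theta)/(2\theta-1)}$ on $\sum_{j\geq k}\|\x{j}-\x{j+1}\|_2$ (hence on the successive differences). Solving $\tfrac{1-\theta}{2\theta-1}=r$ gives $\theta=\tfrac{r+1}{2r+1}$, which is strictly smaller than $\tfrac{r}{2r-1}$; so for $\bar{\theta}\in\big(\tfrac{r+1}{2r+1},\tfrac{r}{2r-1}\big)$ the standard analysis only delivers a pointwise rate between $k^{-(r-1)}$ and $k^{-r}$, and your intermediate claim fails. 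The paper avoids this entirely: it runs the concavity step with the fractional power $\gamma=\tfrac{2r-1}{r}$ (so $2-\gamma=\tfrac1r$), combines it with the weighted AM--GM inequality to conclude $\sum_j\|\x{j}-\x{j+1}\|_2^{1/r}<\infty$, hence $\avg{\|\x{j}-\x{j+1}\|_2^{1/r}}=O(1/k)$, and then uses
\begin{align*}
\Gavg{\|\x{j}-\x{j+1}\|_2} \;=\; \Big(\Gavg{\|\x{j}-\x{j+1}\|_2^{1/r}}\Big)^{r} \;\leq\; \Big(\avg{\|\x{j}-\x{j+1}\|_2^{1/r}}\Big)^{r} \;=\; O(k^{-r}),
\end{align*}
which exploits the fact that the geometric mean only needs summability of the $(1/r)$-th powers, not pointwise decay. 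Your final AM--GM step is the right tool, but it must be fed the averaged quantity $\avg{\|\x{j}-\x{j+1}\|_2^{1/r}}$ rather than an (unavailable) pointwise bound.
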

\noindent See Appendix~\ref{AppThmProxSubanal} for the proof of this theorem. \\

\noindent \textbf{Comments:} Note that $\min_{1 \leq i \leq k} \|
\grad \f(\x{k}) \|_2$ is upper bounded by the quantities $\avg{\|\grad
  \f(\x{k})\|_2}$ and $\Gavg{\|\grad \f(\x{k})\|_2}$. It thus follows
that the sequence $\{ \| \grad{\f(\x{k})} \|_2 \}_{k \geq 0}$
converges to zero at a rate of at least $1/k$, thereby improving the
rate of convergence of $\|\grad \f(x)\|_2$ obtained in
Theorems~\ref{ThmGradient} and~\ref{ThmProx}. When $\theta <
\frac{1}{2}$, a simple modification of the proof (using $\gamma = 2$)
shows that, Algorithms~\ref{AlgoGradient} and~\ref{AlgoProx} converge
in a finite number of steps.
Finally, we point out that when
the function $\ftwo$ is non-smooth but the proximal-function 
$\nonsmoothf$ is smooth,
the existing proof can be easily modified to obtain a rate of convergence of the gradient-norm $\| \grad \f(\x{k}) \|_2$.
\section{Some illustrative applications}
\label{SecApplication}

In this section, we study four interesting classes of non-convex
problems that fall within the framework of this paper. We also discuss
various consequences of Theorems~\ref{ThmGradient}
--~\ref{ThmProxSubanal} as well as 
\mbox{Corollaries~\ref{CorBacktracking} --~\ref{CorSaddlePoint}} when
applied to these problems.


\subsection{Shape from shading}
\label{SecSimShading}
The problem of shape from shading is to reconstruct the
three-dimensional (3D) shape of an object based on observing a
two-dimensional (2D) image of intensities, along with some information
about the light source direction.  It is assumed that the observed 2D
image intensity is determined by the angle between the light source
direction and the surface normals of the
object~\cite{ecker2010polynomial}.

In more detail, suppose that both the object and its 2D image are
supported on a rectangular grid of size $\imRow \times \imCol$.  We
introduce the shorthand notation $[\imRow] = \{1, 2, \ldots, \imRow\}$
and $[\imCol] = \{1, 2, \ldots, \imCol\}$ for the rows and columns of
this grid.  For each pair $(i,j) \in [\imRow] \times [\imCol]$, we let
$\Intensity{ij} \in \real$ denote the observed intensity at location
$(i,j)$ in the image, and we let $\NORMAL{ij} \in \real^3$ denote the
surface normal at the vertex ${\vertex{ij} \mydefn
  (\xCoord_{ij},\yCoord_{ij},\zCoord_{ij})}$ of the object.  Based on
observing the 2-dimensional image, both the intensity $\Intensity{ij}$
and co-ordinate pair $(\xCoord_{ij},\yCoord_{ij})$ are known for each
pair $(i,j) \in [\imRow] \times [\imCol]$.  The goal of shape from
shading is to estimate the unknown coordinate $\zCoord_{ij}$, which
corresponds to the height of the object at location $(i,j)$.
Knowledge of these $z$-coordinates allows us to generate a 3D
representation of the object, as illustrated in Figure~\ref{FigSFS}.


\paragraph{Lambertian lighting model:}  

In order to reconstruct the $z$-coordinates, we require a model that
relates the observed intensity $\Intensity{ij}$ to the surface normal.
In a Lambertian model, for a given light source direction ${\Dir
  \mydefn (\ell_1, \ell_2,\ell_2)^\top} \in \real^3$, it is assumed
that the surface normal $\NORMAL{ij}$ and intensity $\Intensity{ij}$
are related via the relation
\begin{align}
\label{EqnLambertian}
    \Intensity{ij} = \frac{\inprod{\Dir}{\NORMAL{ij}}}{\| \NORMAL{ij}
      \|_2}.
\end{align}

\begin{figure}[h]
\makebox[\linewidth][c]{%
\begin{subfigure}{0.25\textwidth}\centering
\includegraphics[trim= 0cm 0cm 0cm 0cm, clip, width = \linewidth]
                {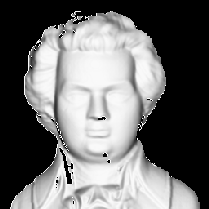} 
                \phantomcaption
\end{subfigure}
\hspace{1cm}
\begin{subfigure}{0.7\textwidth}\centering
\includegraphics[trim= 0cm 0cm 0cm 0cm, clip,width= \linewidth]
                {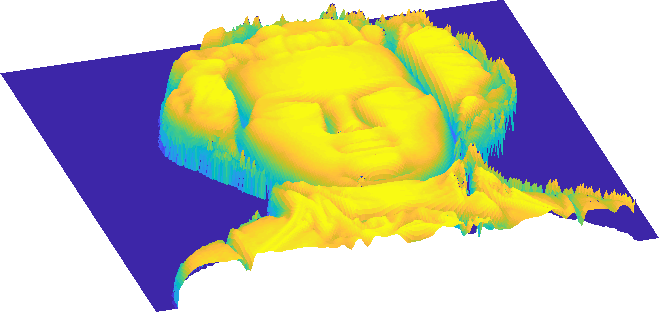} 
                \phantomcaption
\end{subfigure}
}

\makebox[\linewidth][c]{%
\begin{subfigure}{0.25\textwidth}\centering
\vspace*{1cm}
\includegraphics[trim= 0cm 0cm 0cm 0cm, clip,width= \linewidth]
                {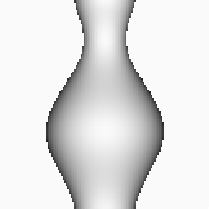} 
                \phantomcaption
\end{subfigure}
\hspace{1cm}
\begin{subfigure}{.7\textwidth}\centering
\vspace*{1cm} \includegraphics[trim= 0cm 0cm 0cm 0cm, clip,width=
  \linewidth] {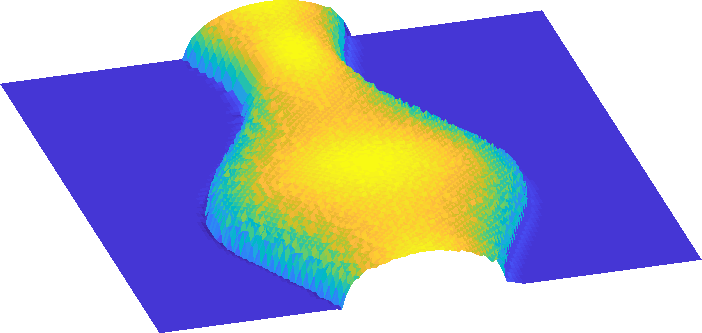} \phantomcaption
\end{subfigure}%
}
\vspace{1cm}
\caption{ Figure shows 3D shape reconstruction of \emph{Mozart} (first
  row) and \emph{Vase} (second row) from corresponding 2D images.  The
  gray-scale images in the left column are the 2D input images; the
  two colored images in the right column are the reconstructed 3D
  shapes.  The 3D shapes are constructed by solving the
  problem~\eqref{EqnSFSProb} using Algorithm~\ref{AlgoBacktracking}.  }
\label{FigSFS}
\end{figure}

In one standard model~\cite{wang2014efficient}, the surface normal
$\NORMAL{ij} \mydefn (\xNormal{ij}, \yNormal{ij}, 1)^\top$ is assumed
to be determiend by the triplet of vertices $(\vertex{ij},
\vertex{i+1,j}, \vertex{i,j+1})$ via the equations
\begin{align*}
  \xNormal{ij} & =  \frac{(\yCoord_{i,j+1} - \yCoord_{i,j})(\zCoord_{i+1,j} 
    - \zCoord_{ij}) - (\yCoord_{i+1,j} - \yCoord_{i,j})(\zCoord_{i,j+1} 
    - \zCoord_{ij})}{(\xCoord_{i,j+1} - \xCoord_{ij})(\yCoord_{i+1,j} - \yCoord_{ij}) -
    (\xCoord_{i+1,j} - \xCoord_{ij})(\yCoord_{i,j+1} - \yCoord_{ij})},  \\
  \yNormal{ij} & =  \frac{(\xCoord_{i,j+1} - \xCoord_{i,j})(\zCoord_{i+1,j} 
    - \zCoord_{ij}) - (\xCoord_{i+1,j} - \xCoord_{i,j})(\zCoord_{i,j+1} 
    - \zCoord_{ij})}{(\xCoord_{i,j+1} - \xCoord_{ij})(\yCoord_{i+1,j} - \yCoord_{ij}) -
    (\xCoord_{i+1,j} - \xCoord_{ij})(\yCoord_{i,j+1} - \yCoord_{ij})}.
\end{align*} 
Squaring both sides of equation~\eqref{EqnLambertian} and substituting
the expression for surface normal $\NORMAL{ij}$ yields the polynomial
equation
\begin{align}
    \big(\xNormal{ij}^2 + \yNormal{ij}^2 +1\big)\Intensity{ij} 
    - (\ell_1\xNormal{ij} + \ell_2\yNormal{ij} + \ell_3)^2 = 0,
\end{align}
which should be satisfied under the assumed model.

\noindent In practice, this equality will not be exactly satisfied, but we can
estimate the $z$-coordinates by solving the following non-convex
optimization problem in the $\imRow \times \imCol$ matrix $\zCoord$
 \mbox{with entries $\big\{ \zCoord_{ij} \mid (i,j) \in [\imRow] \times
[\imCol] \}$:}
\begin{align}
\label{EqnSFSProb}
    \underset{ \zCoord \in \real^{\imRow \times \imCol}}{\min} &
    \;\;\;\big\{ \underbrace{\sum \limits_{i = 1}^{\imRow} \sum
      \limits_{j = 1}^{\imCol} \big( (1 + \xNormal{ij}^2 +
      \yNormal{ij}^2)\Intensity{ij}^2 - (\ell_1\xNormal{ij} +
      \ell_2\yNormal{ij} + \ell_3)^2 \big)^2}_{\Poly(\zCoord)} \big\}.
\end{align}


\paragraph{Some reconstruction experiments:}

In order to illustrate the behavior of our method for this problem, we
considered two synthetic images for simulated experiments. The first
one is a $256 \times 256$ image of
\emph{Mozart}~\cite{zhang1999shape}, and the second one is a $128
\times 128$ image of \emph{Vase}.  The 3D shapes were constructed from
the 2D images by solving optimization problem~\eqref{EqnSFSProb} using
the backtracking gradient descent algorithm~\ref{AlgoBacktracking}.
The
reconstructed surfaces for \emph{Vase} and \emph{Mozart} are provided
in figure~\ref{FigSFS}.  We ran $500$ iterations of
Algorithm~\ref{AlgoBacktracking} for both the images. The runtime for
\emph{Mozart}-example was 87 seconds, whereas the runtime for
\emph{Vase}-example was 39 seconds. The implementation of
Algorithm~\ref{AlgoBacktracking} for Problem~\eqref{EqnSFSProb} is
parallelizable; hence, the runtime can be much lower than our
runtime with a parallel implementation.  It is worth mentioning that
the polynomial $\Poly$ is a fourth-degree polynomial with dimension
$\imRow \times \imCol$; polynomial $\Poly$ is coercive and bounded
below by zero. Consequently, we can apply
Corollary~\ref{CorCoerciveGradRate} to the
problem~\eqref{EqnSFSProb}
which guarantees that average of the squared gradient norm $\avg{\|
  \grad \Poly \|_2^2}$ converges to zero at a rate $\frac{1}{k}$.

One might also consider applying the CCCP method to this problem. In a
recent paper, Wang et al.~\cite{wang2014efficient} provided a DC
decomposition of the polynomial $\Poly$ using a sum of square (SOS)
optimization technique.  However, it is crucial to note that the DC
decomposition of polynomial $\Poly$ obtained from the SOS-optimization
method need not be optimal. In order to see this, note that the
dimension of the polynomial $\Poly$ is much larger than three.  In
particular, the variable $\zCoord_{ij}$ is used in the computation of
surface normals $\NORMAL{ij}, \NORMAL{i,j-1}$ and $\NORMAL{i-1,j}$,
hence is related to variables $(\zCoord_{i,j+1} , \zCoord_{i+1,j}
,\zCoord_{i-1,j}, \zCoord_{i,j-1})$ ---which are again related to the
other variables.  It was shown in the paper~\cite{ahmadi2013complete}
that SOS techniques for deriving a DC decomposition are
sub-optimal for a
fourth-degree polynomial when the dimension of the polynomial is
greater than 3.  Consequently, deriving an optimal DC decomposition
for the polynomial $\Poly$ will be computationally intensive.


\subsection{Robust regression using Tukey's bi-weight}
\label{SecTukey}

Next, we turn to the problem of robust regression with Tukey's
bi-weight penalty function. Suppose that we observe pairs $(\y{i}, z_i)
\in \real \times \real^d$ linked via the noisy linear model
\begin{align*}
   \y{i} = \inprod{z_{i}}{\mu^*} + \varepsilon_i \qquad \mbox{ for $i
     = 1,\ldots,n$.}
 \end{align*}
Here the vector $\mu^{*} \in \real^{\usedim}$ is the unknown parameter
of interest, whereas the variables $\{\varepsilon_i\}_{i=1}^\numobs$
correspond to additive noise.  In robust regression, we obtain an
estimate of the parameter vector $\mu^*$ by computing
\begin{align}
\label{EqnTukeyObj}  
\min_{\mu \in \real^{\usedim} } \underbrace{\big\{ \frac{1}{n}
  \sum_{i=1}^{n} \ROB \big( y_i - \inprod{z_i}{\mu} \big) \big\}}_{ =
  \,: \, f(\mu)}
\end{align}
where $\ROB$ is a known loss function with some robustness
properties. One popular example of the loss function $\ROB$ is Tukey's
bi-weight function, which is given by
\begin{align}
  \label{EqnTukeyFun}
\ROB(t) =
\begin{cases}
  1- (1- (t/\lambda) ^2)^3 & \mbox{ if $|t| \leq \lambda$} \\
  1 & \mbox{otherwise}
\end{cases},
 \end{align}
where $\lambda > 0$ is a tuning parameter. Note that $\ROB$ is a
smooth function, whence the function $f$ in the
objective~\eqref{EqnTukeyObj} is also smooth, implying that
Algorithm~\ref{AlgoGradient} is suitable for the problem. \\

\noindent With this set-up, applying Theorem~\ref{ThmGradient},
Theorem~\ref{ThmGradSubanal} and Corollary~\ref{CorSaddlePoint},
we obtain the following guarantee:
\begin{cors}
  \label{CorTukey}
Given a random initialization, any bounded sequence $\{ \mu^{k} \}_{k
  \geq 0}$ obtained by applying Algorithm~\ref{AlgoGradient} to the
objective~\eqref{EqnTukeyObj} has the following properties:
\begin{itemize}
  \item[(a)] Almost surely with respect to the random initialization,
    the sequence $\{ \mu^{k} \}_{k \geq 0}$ converges to a point
    $\bar{\mu}$ such that $\grad \f(\bar{\mu}) = 0$ and $\nabla^2
    \f(\bar{\mu}) \succeq 0$.
  \item[(b)] There is a universal constant $c_1$ such that
  \begin{align*}
       \avg{ \| \grad \f(\mu^{k}) \|_2 } \leq \frac{c_1}{k} \qquad
       \mbox{for all $k = 1, 2, \ldots$.}
  \end{align*}   
\end{itemize}
\end{cors}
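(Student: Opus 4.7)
The plan is to recognize the Tukey objective~\eqref{EqnTukeyObj} as fitting into the framework of Assumption GR with the trivial decomposition $\fone = f$, $\ftwo \equiv 0$, and $\range = \real^{\usedim}$, so that Algorithm~\ref{AlgoGradient} reduces to vanilla gradient descent with a constant step size. Before invoking the main results, I would carry out three preliminary verifications: (i) that $\ROB$ is globally $C^2$ with bounded second derivative --- a direct computation at the glueing points $t = \pm\lambda$ shows $\ROB'(\pm\lambda) = \ROB''(\pm\lambda) = 0$, matching the constant piece, so by compactness of $[-\lambda,\lambda]$ the function $\ROB$ is twice continuously differentiable and $\ROB'$ is globally Lipschitz; (ii) that as a consequence, $f(\mu) = n^{-1}\sum_{i=1}^{n}\ROB(y_i - \inprod{z_i}{\mu})$ is itself $C^2$ and $\M_{\fone}$-smooth with $\M_{\fone} \leq \|\ROB''\|_\infty \cdot n^{-1}\sum_{i=1}^{n} \|z_i\|_2^2$; and (iii) that $f$ is bounded below by $0$ since $0 \leq \ROB \leq 1$. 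Together these checks certify the hypotheses of Theorem~\ref{ThmGradient}, Theorem~\ref{ThmGradSubanal}, and Corollary~\ref{CorSaddlePoint} for this problem.

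For part (a) I would combine two of the earlier results. Under Assumption KL (verified below), Theorem~\ref{ThmGradSubanal}(a) guarantees that any bounded sequence $\{\mu^k\}_{k\geq 0}$ converges to a critical point $\bar\mu$ with $\grad f(\bar\mu) = 0$. Corollary~\ref{CorSaddlePoint} guarantees that the set of initializations from which Algorithm~\ref{AlgoGradient} converges to a strict saddle has Lebesgue measure zero. Combining the two, for any random initialization absolutely continuous with respect to Lebesgue measure, almost surely the limit $\bar\mu$ satisfies $\lambda_{\min}(\nabla^2 f(\bar\mu)) \geq 0$, i.e.\ $\nabla^2 f(\bar\mu) \succeq 0$, as claimed. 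Part (b) is then an immediate consequence of the $\avg{\|\grad f(\mu^k)\|_2} \leq c_1/k$ bound in Theorem~\ref{ThmGradSubanal}(a).

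The one non-routine step, and the main obstacle, is verifying Assumption KL for $f$. I would argue via semi-algebraicity: the piecewise definition of $\ROB$ partitions $\real$ into the two semi-algebraic sets $\{|t|\leq \lambda\}$ and $\{|t|>\lambda\}$, on each of which $\ROB$ is a polynomial, so $\ROB$ is a semi-algebraic function. Semi-algebraicity is preserved under composition with the affine map $\mu \mapsto y_i - \inprod{z_i}{\mu}$ and under finite summation, so $f$ is semi-algebraic on $\real^{\usedim}$. Invoking the KL inequality for semi-algebraic functions established by Bolte et al.~\cite{bolte2007Lojasiewicz} and summarized in Appendix~\ref{AppSubAnalKL}, Assumption KL holds at every point of $\dom(f)$, in particular at the limit $\bar\mu$. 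With KL in hand, the two parts of the corollary follow from the cited results exactly as outlined above.
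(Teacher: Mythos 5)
Your proposal is correct and follows essentially the same route as the paper: verify that $f$ is smooth by checking the $C^2$ gluing of Tukey's bi-weight at $t=\pm\lambda$, verify Assumption KL by observing that $\ROB$ is piecewise polynomial and hence semi-algebraic (the paper phrases this as semi-analytic, hence sub-analytic --- an equivalent path to Lemma~\ref{LemBolte}), and then combine Theorem~\ref{ThmGradSubanal} with Corollary~\ref{CorSaddlePoint}. No substantive differences to report.
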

\noindent We provide the proof in Appendix~\ref{AppCorTukey}.


\subsection{Smooth function minimization with sparsity constraints}
\label{SecSmoothSparse}

Moving beyond the robust regression problem, we now discuss another
interesting problem of minimizing a smooth function subject to
sparsity penalty. Consider the following optimization problem
\begin{align}
\label{EqnSmoothSparse}
\min_{ \substack{ x \in \real^{\usedim} \\ \| x \|_0 \leq s }
}\;\; \fone(x),
\end{align}
where $\fone$ is a smooth function, the $\ell_0$-``norm'' $\|x\|_0$
counts the number of non-zero entries in the vector $x$, 
and $s \in \{ 1,\ldots, d\}$ is a
  sparsity parameter.  The constraint set $\big \{ x \in
  \real^{\usedim} : \|x \|_0 \leq s \big \}$ is non-convex, and
  consequently, the optimization problem~\eqref{EqnSmoothSparse} is
  non-convex. However, the constraint set can be expressed as the
  level set of a certain DC function; see Gotoh et
  al.~\cite{gotoh2017dc}.  In particular, let $|x|_{(\usedim)} \geq
  |x|_{(\usedim-1)} \geq \cdots \geq |x|_{(1)}$ denote the values of
  $x \in \real^\usedim$ re-ordered in terms of their absolute
  magnitudes.  In terms of this notation, we have $\| x \|_1 \geq
  \sum_{i = \usedim - s+1 }^{ \usedim } |x|_{(i)}$ for all $x \in
  \real^{\usedim}$, with equality holding if and only if $x$ is
  $s$--sparse.  This fact ensures that
\begin{align}
  \label{EqnSparsityAsDC}
 \Big\{ x \in \real^{\usedim} : \|x \|_0 \leq s \Big\} =
 \big\{ x \in \real^{\usedim} : \| x \|_{1} - \sum_{i =
   \usedim - s+1 }^{ \usedim } |x|_{(i)} \leq 0 \big\}.
\end{align}
Since both of the functions $x \mapsto \| x \|_{1}$ and $x \mapsto
\sum_{i = \usedim - s+1 }^{\usedim} |x|_{(i)}$ are
convex~\cite{boyd2004convex}, this level set formulation is a DC
constraint.
Now using the representation~\eqref{EqnSparsityAsDC}, we can rewrite
problem~\eqref{EqnSmoothSparse} as $\min_{x \in \real^{\usedim}}
\fone(x)$ such that $\| x \|_{1} - \sum_{i = \usedim - s+1 }^{\usedim}
|x|_{(i)} \leq 0$.  For our experiments, it is more convenient to
solve the penalized analogue of the last problem, given by
\begin{align}
\label{EqnSmoothSparse2}
   \min_{x \in \real^{\usedim} } \;\; \big\{ \fone(x) +
   \lambda \Big( \| x \|_{1} - \sum_{i = \usedim-s+1 }^{\usedim}
   |x|_{(i)} \Big) \big\},
 \end{align}
where $\lambda > 0$ is a tuning parameter. The optimization
problem~\eqref{EqnSmoothSparse2} can be solved using
Algorithm~\ref{AlgoProx} with $\fone(x) = \fone(x)$, $\nonsmoothf(x) =
\lambda \| x \|_1$ and $\ftwo(x) = \lambda \sum_{i = \usedim-s+1
}^{\usedim} |x|_{(i)}$.  For the non-smooth component ${\nonsmoothf(x)
= \lambda \|x\|_1}$, there is a closed form expression of the proximal
update in Algorithm~\ref{AlgoProx}, so that the method is especially
efficient in this case.

\subsubsection{Best subset selection}

A special case of problem~\eqref{EqnSmoothSparse2} arises from best
subset selection in linear regression. Suppose that we observe a
vector $y \in \real^{\numobs}$ and a matrix $\LinModMat \in
\real^{\numobs \times \usedim }$ that are linked via the standard
linear model \mbox{$y = \LinModMat \xstar + \varepsilon$.}  Here the
vector $\varepsilon \in \real^\numobs$ corresponds to additive noise,
whereas $\xstar \in \real^\usedim$ is the unknown regression
vector. We wish to estimate the unknown parameter 
vector $\x{*}$ subject to a
sparsity constraint, and we do so by solving the following
optimization problem:
\begin{align}
\label{EqnBestSubset}
\min_{ \substack{ x \in \real^{\usedim} \\ \|x \|_0 \leq s } } \| y -
\LinModMat x \|_2^2.
\end{align}
Here the non-negative integer $s$ is a tuning parameter that controls
maximum number of allowable non-zero entries in the vector $x$.
Following the development leading to the
formulation~\eqref{EqnSmoothSparse2}, let us consider instead the
problem of minimizing
the function
\begin{align}
    \label{EqnLinRegDC}
  \f(x) & \mydefn \| y - \LinModMat x \|_2^2 + \lambda \Big( \| x
  \|_{1} - \sum_{i = \usedim-s+1 }^{\usedim} |x|_{(i)} \Big)
  \big\}.
\end{align}
Note that the function $\f$ can be decomposed as a difference of two convex
functions as follows:
\begin{align}
  \f(x) = \underbrace{\| y - \LinModMat x \|_2^2 + \lambda \| x
    \|_{1}}_{\mbox{convex}} - \underbrace{\lambda \sum_{i =
      \usedim-s+1 }^{\usedim} |x|_{(i)} }_{\mbox{convex}}.
   \label{EqnDCDecomp}
\end{align}
Consequently, problem~\eqref{EqnLinRegDC} is a DC optimization
problem; hence, it is amenable to standard DC optimization techniques
like CCCP.  We can also apply Algorithm~\ref{AlgoProx} on
problem~\eqref{EqnLinRegDC} with ${\fone(x) = \|y - \LinModMat x
  \|^2_{2}}$, $\nonsmoothf(x) = \lambda \| x \|_1$ and $\ftwo(x) =
\lambda \sum_{i = \usedim-s+1 }^{\usedim} |x|_{(i)}$.


\subsubsection{Comparison of Algorithm~\ref{AlgoProx} and CCCP}

Let us compare the performance of our Algorithm 2 (prox-type method)
with the popular convex-concave procedure (CCCP) for minimizing
differences of convex functions.  We apply both algorithms to the best
subset selection problem~\eqref{EqnLinRegDC}.

Let us reiterate that problem~\eqref{EqnLinRegDC} can be written as 
a difference of two convex functions, and one 
can apply CCCP update~\eqref{EqnCCCP} to the
decomposition~\eqref{EqnDCDecomp}. The inner convex optimization
problem in update~\eqref{EqnCCCP} is solved by proximal methods for
minimizing the sum of a smooth convex function and a
$\ell_1$ regularizer.
We also apply Algorithm~\ref{AlgoProx} on problem~\eqref{EqnLinRegDC}
with ${\fone(x) = \| y - \LinModMat x \|_2^2}$, $\ftwo(x) = \lambda
\sum_{i = \usedim-s+1 }^{\usedim} |x|_{(i)}$ and $\nonsmoothf(x) =
\lambda \| x \|_{1}$.

\paragraph{Synthetic data generation:} We generated the rows of the 
 $\numobs \times \usedim$ matrix $\LinModMat$ from a
$\usedim$-dimensional Gaussian distribution with zero mean and an
equicovariance matrix $\Sigma$, where $\Sigma_{ii} = 1$ for all $i$, and
$\Sigma_{ij} = 0.7$ for all $i \neq j$.  The regression vector $\xstar
\in \real^{\usedim}$ (true value) was chosen to be a binary vector
with sparsity $\sparsity$ ($\sparsity \ll \usedim$). The location of
the nonzero entries of the vector $\xstar$ was chosen uniformly
without replacement form the set $\big\{ 1,\ldots, \usedim \big\}$.

\paragraph{Performance measures:} We use the
following two criteria to 
compare the performance of the prox-type method and CCCP.

\begin{itemize}
  \item[(a)]\emph{Total runtime:} Firstly, we compare the algorithms
    in terms of their total runtime.  The runtime was measured in units
    of seconds.

  \item[(b)]\emph{Estimation error:} Secondly, we use average
    estimation error of the algorithms as a measure of
    performance. Let us recall that if 
    $\bar{x} \in \real^{\usedim}$ is the
    estimated value of the unknown regression vector $\xstar$,
    then the average
    estimation error is defined as $\frac{\|\bar{x} -
      \xstar\|_2}{\sqrt{p}\|\bar{x}\|_2}$. Note that the average
    estimation error used here is invariant under scaling.
\end{itemize}

\begin{figure}[p]
\makebox[\linewidth][c]{%
\begin{subfigure}{0.70\textwidth}\centering
\includegraphics[trim= 0cm 6.5cm 3.cm 5.5cm, clip,width= 1\linewidth]
                {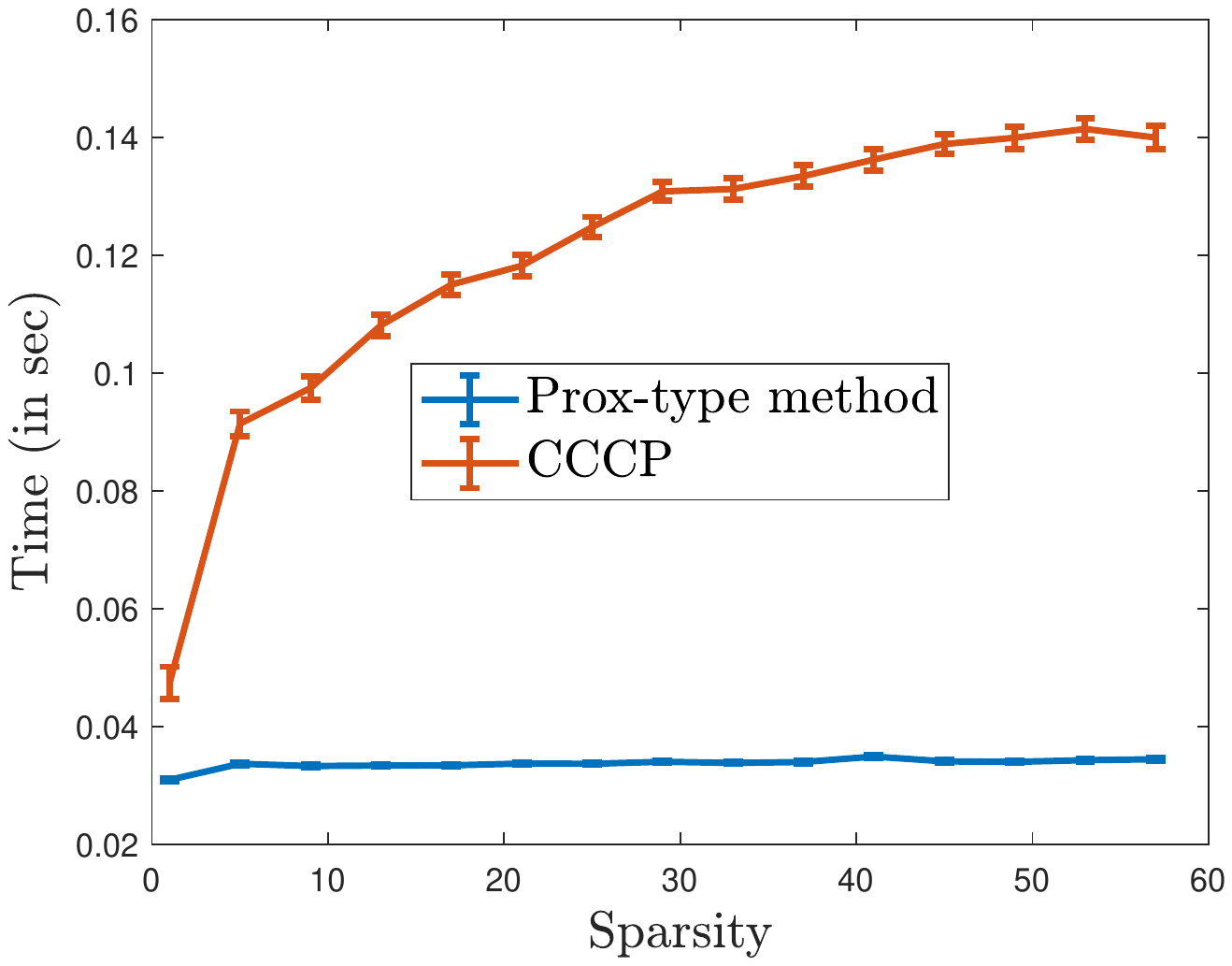} \phantomcaption
\end{subfigure}%
\begin{subfigure}{0.70\textwidth}\centering
\includegraphics[trim= 3cm 6.5cm 0.cm 5.5cm, clip,width = 1\linewidth]
                {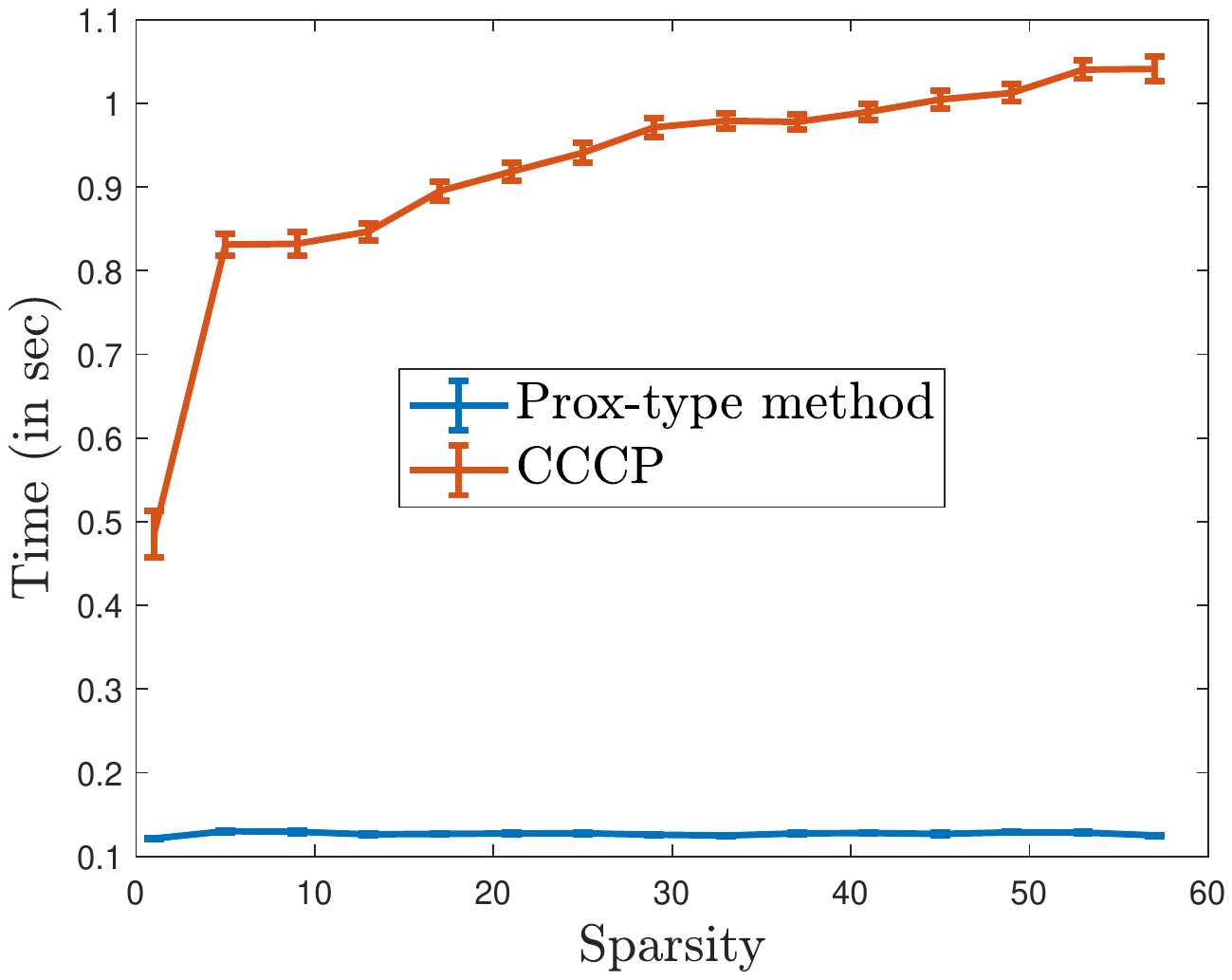} \phantomcaption
\end{subfigure}
}

\makebox[\linewidth][c]{%
\begin{subfigure}{0.70\textwidth}\centering
\includegraphics[trim= 0cm 7.5cm 3.cm 7.5cm, clip,width= 1\linewidth]
                {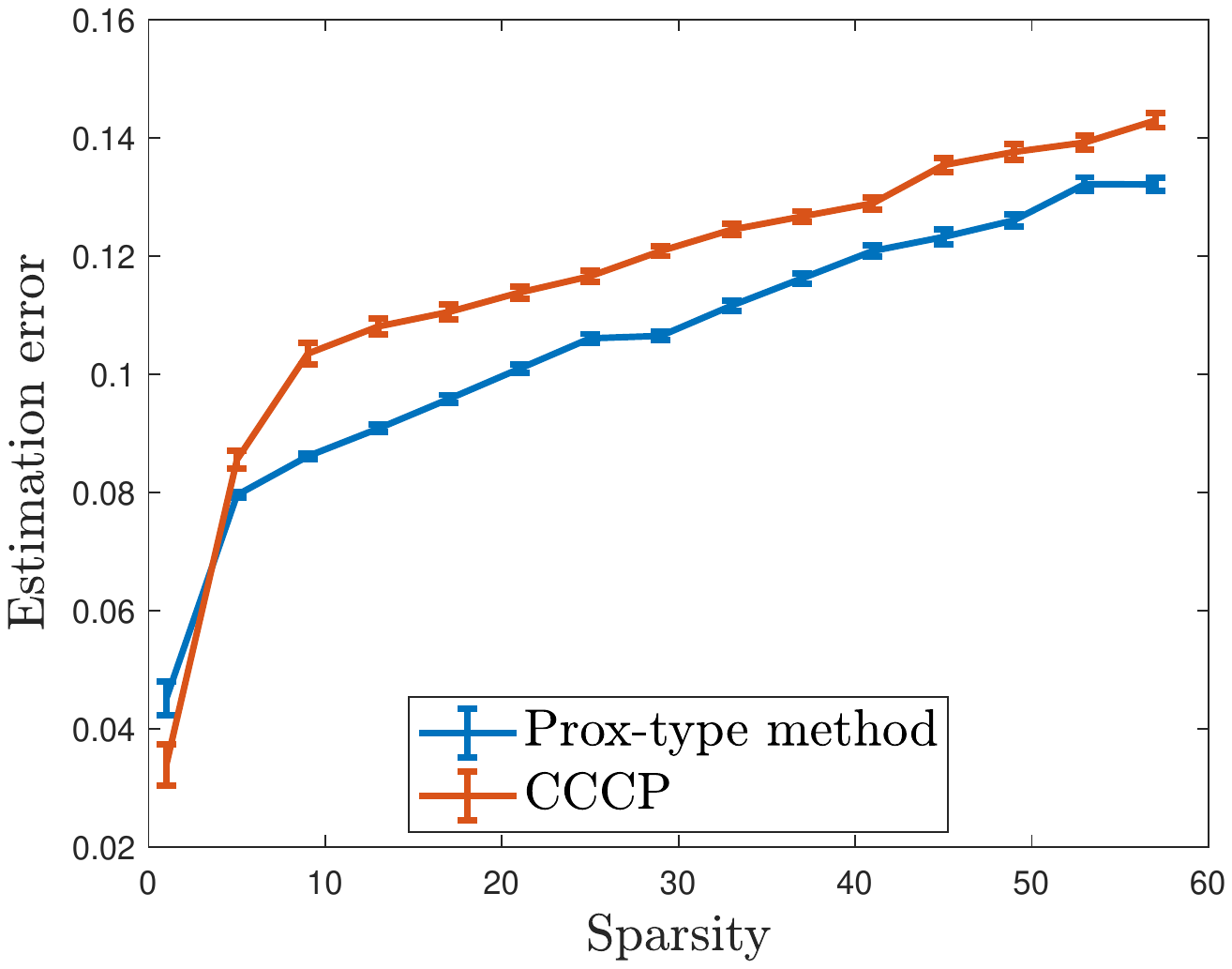}
                \phantomcaption
\end{subfigure}%
\begin{subfigure}{0.70\textwidth}\centering
\includegraphics[trim= 3cm 7.5cm 0.cm 7.5cm, clip,width = 1\linewidth]
                {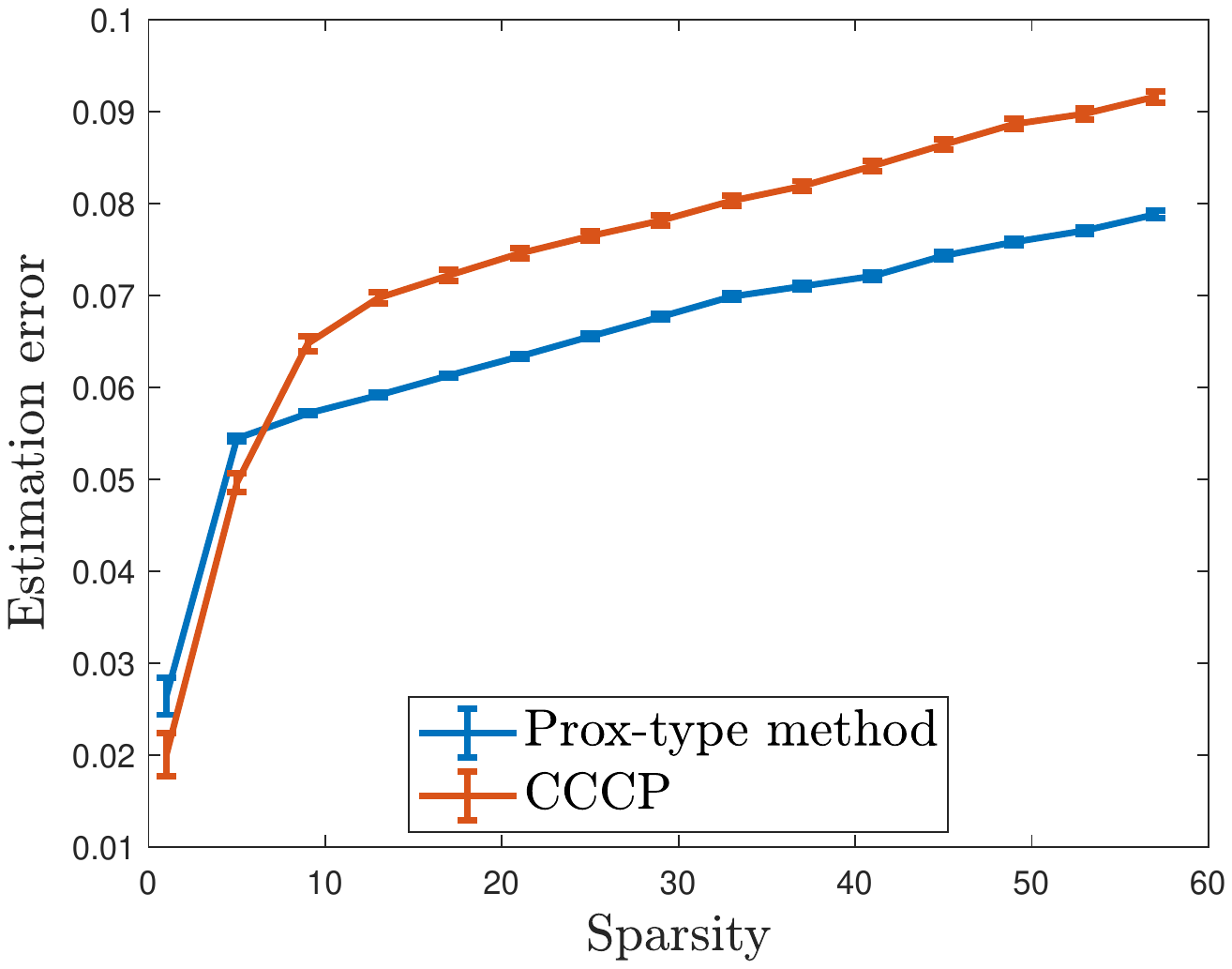}
                \phantomcaption
\end{subfigure}
}
\caption{ Figure showing performances of CCCP and
  Algorithm~\ref{AlgoProx} on best subset selection problem for
  synthetic data for different values of $(n,p)$. In the left column
  of the figure, the value of ${(n,p) = (190,300)}$, and in the right
  column, we show the plot with $(n,p) = (380,600)$. 
  Figures in the first
  row compare the performance in terms of 
  total runtime, whereas in the second
  row, we compare the algorithms in terms of estimation error.
  We see
  that Algorithm~\ref{AlgoProx} outperforms CCCP in terms of
  runtime. The performances of Algorithm~\ref{AlgoProx} and CCCP in
  terms of estimation error are similar for low values of sparsity,
  whereas Algorithm~\ref{AlgoProx} outperforms CCCP 
  when sparsity is moderate
  to large.  We initialized both the algorithms 
  from the same starting
  point. Results shown above are averaged over 
  100 replications, and we
  also provide pointwise error bars in the plots. }
\label{FigBestSubset}
\end{figure}

\paragraph{Comparison results:} 
Figure~\ref{FigBestSubset} shows the performances 
of the prox-type method
and CCCP for synthetic data simulated as above, with problem
parameters $(n,p)= (190,300)$ and $(n,p) = (380,600)$ and different
choices of sparsity $\sparsity$.

For both the algorithms, the tolerance level $\eta$ was set to $\eta =
10^{-8}$, whereas the maximum number of iterations was
$1000$. Figure~\ref{FigBestSubset} suggests that total runtime of the
prox-type method is significantly smaller than the runtime of
CCCP. Furthermore, the
estimation error for the prox-type
method is lower compared to CCCP, which
possibly suggests that prox-type method is finding better local minima
compared to
CCCP for the non-convex optimization problem~\eqref{EqnLinRegDC}.
In all our simulations we used same initializations for both the
algorithms. The simulation results shown in Figure~\ref{FigBestSubset}
are average over 100 replications, and we also provide the pointwise
error bar in the plots.


\subsubsection{Some theoretical guarantees}

Interestingly, it turns out that when applied to
problem~\eqref{EqnLinRegDC}, the convergence behavior of
Algorithm~\ref{AlgoProx} to a given stationary point $\bar{x}$ depends
on the behavior of a certain convex program defined in terms of
$\bar{x}$.  More precisely, for any point $\bar{x} \in
\real^{\usedim}$ with $|\bar{x}|_{(r)} > |\bar{x}|_{(r+1)}$, consider
the following convex relaxation of problem~\eqref{EqnLinRegDC}:
\begin{align}
   \label{EqnBestSubsetSubProb}
   \mathcal{P}(\bar{x}) & \mydefn \min_{ x \in \real^{\usedim} } \big
   \{ \| y - \LinModMat x \|_2^2 + \lambda \| x \|_{1} - \lambda
   \inprod{\grad \ftwo(\bar{x})}{x - \bar{x}} \big \}.
\end{align}
Note that $|\bar{x}|_{(r)} > |\bar{x}|_{(r+1)}$ implies the
differentiability of the function $\ftwo \mydefn
\lambda \sum_{i = \usedim-s+1 }^{\usedim} |x|_{(i)}$
which ensures that the above
problem is well-defined.


\begin{cors}
\label{CorBestSubset}
Let $\big \{ \x{k} \big \}_{k \geq 0}$ be any bounded sequence
obtained by applying Algorithm~\ref{AlgoProx} on
problem~\eqref{EqnLinRegDC}. Suppose there exists a limit point
$\bar{x}$ of the sequence $\big \{ \x{k} \big \}_{k \geq 0}$
satisfying $|\bar{x}|_{(r)} > |\bar{x}|_{(r+1)}$, and the convex
problem~\eqref{EqnBestSubsetSubProb} has unique solution. Then the sequence 
$\big\{ \x{k} \big\}_{k \geq 0}$ converges to the point $\bar{x}$, and
for all $k = 1,2,\ldots$, we have
\begin{align*}
\avg{ \| \grad \f(\x{k}) \|_2} \leq \frac{c_1}{k}, \quad \mbox{and}
\quad \|\x{k} - \bar{x} \|_2 \leq c q^k,
\end{align*}
where $q \in (0,1)$, and $(c, c_1)$ are positive constants independent of $k$.
\end{cors}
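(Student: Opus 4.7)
My plan is to identify problem~\eqref{EqnLinRegDC} as an instance of the setting covered by Theorems~\ref{ThmProx} and~\ref{ThmProxSubanal}, and then to upgrade the sublinear gradient-norm rate into a geometric iterate rate by reducing $f$ locally near $\bar x$ to the convex surrogate $\mathcal{P}(\bar x)$. I first verify Assumption PR for the decomposition~\eqref{EqnDCDecomp}: the function $g(x)=\|y-Bx\|_2^2$ is $2\|B\|_{\mathrm{op}}^2$-smooth; $h(x)=\lambda\sum_{i=d-s+1}^{d}|x|_{(i)}$ is continuous and convex, as a sum of order statistics of $|x|$; and $\varphi(x)=\lambda\|x\|_1$ is proper, convex and lower semicontinuous. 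Moreover $f\ge 0$ by the inequality $\|x\|_1\ge\sum_{i=d-s+1}^{d}|x|_{(i)}$, so $f$ is bounded below. Theorem~\ref{ThmProx}(a) therefore identifies $\bar x$ as a critical point and forces $\{f(x^k)\}$ to be strictly decreasing and convergent to $f(\bar x)$.

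Next I establish local smoothness of $h$ at $\bar x$ and the KL inequality, so that Theorem~\ref{ThmProxSubanal}(a) can be applied. The strict-gap hypothesis on the order statistics of $|\bar x|$ at the threshold separating the top $s$ from the bottom $d-s$ coordinates implies that the set $I\subset\{1,\dots,d\}$ of top-$s$ indices of $\bar x$ is unambiguously defined and that each $\bar x_i$ with $i\in I$ is bounded away from zero. Hence on some neighborhood $U$ of $\bar x$, the index set $I$ and the signs $\sigma_i=\mathrm{sign}(\bar x_i)$ for $i\in I$ are preserved, so that $h(x)=\lambda\sum_{i\in I}\sigma_i x_i=\lambda\langle\nabla h(\bar x),x\rangle$ is in fact $C^\infty$ on $U$; in particular $h$ is locally smooth near $\bar x$ in the sense required by Theorem~\ref{ThmProxSubanal}. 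Since $f$ is semi-algebraic (a sum of a quadratic, an $\ell_1$ norm, and a symmetric function of the order statistics of $|x|$), Assumption KL holds at every point of $\dom f$; see Appendix~\ref{AppSubAnalKL}. Applying Theorem~\ref{ThmProxSubanal}(a) to the bounded sequence $\{x^k\}$ at its limit point $\bar x$ yields the convergence $x^k\to\bar x$ together with the bound $\avg{\|\grad f(x^k)\|_2}\le c_1/k$.

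For the geometric iterate rate I will show that the KL exponent of $f$ at $\bar x$ equals $\bar\theta=1/2$. By local linearity of $h$ on $U$, the nonconvex objective $f$ agrees on $U$, up to an additive constant, with the convex function
\[
\Phi(x)\,\mydefn\,\|y-Bx\|_2^2+\lambda\|x\|_1-\lambda\langle\nabla h(\bar x),x-\bar x\rangle
\]
appearing in~\eqref{EqnBestSubsetSubProb}. Uniqueness of the minimizer of $\mathcal{P}(\bar x)$ is the standard sufficient condition for a quadratic-growth inequality $\Phi(x)-\Phi(\bar x)\ge\mu\|x-\bar x\|_2^2$ on $U$ via Luo--Tseng-type error bounds for LASSO; this in turn is equivalent to $f$ having KL exponent $1/2$ at $\bar x$. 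Plugging $\bar\theta=1/2$ into the descent and relative-error argument that underlies the proof of Theorem~\ref{ThmProxSubanal} then produces the geometric rate $\|x^k-\bar x\|_2\le c q^k$ for some $q\in(0,1)$. The main obstacle is precisely this last step: translating uniqueness of $\mathcal{P}(\bar x)$ into a quantitative quadratic-growth bound for $\Phi$, and then carrying that bound through the nonconvex iteration of Algorithm~\ref{AlgoProx}, requires careful bookkeeping, even though the underlying mechanism---that uniqueness of the LASSO solution forces linear independence of the active columns of $B$ and hence strong convexity along the active directions---is classical.
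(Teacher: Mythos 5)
Your reduction of the problem to Theorems~\ref{ThmProx} and~\ref{ThmProxSubanal} is the right starting point, but there is a genuine gap at the very first claim of the corollary: convergence of the \emph{whole} sequence to $\bar{x}$. The hypothesis only provides that $\bar{x}$ is \emph{a} limit point, and you cannot get convergence by ``applying Theorem~\ref{ThmProxSubanal}(a)'' because that theorem requires $\ftwo$ to be locally smooth, whereas $\ftwo(x)=\lambda\sum_{i=d-s+1}^{d}|x|_{(i)}$ fails to be differentiable wherever the order statistics tie at position $d-s$ or a top coordinate vanishes; you only establish smoothness on a neighborhood $U$ of $\bar{x}$, and the iterates are only guaranteed to visit $U$ infinitely often, not to stay there --- and even granting the theorem, it would deliver convergence to \emph{some} critical point, not to $\bar{x}$. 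Notice that your argument never uses the uniqueness of the solution of~\eqref{EqnBestSubsetSubProb} for this step, which is a signal something is missing: the paper uses exactly that hypothesis here. Its route is (i) $\|\x{k}-\x{k+1}\|_2\to 0$ forces the set of limit points $\bar{\mathcal{X}}$ to be connected; (ii) any limit point $x^*\in B(\bar{x},r)$ satisfies the same stationarity relation $\grad\fone(x^*)+\lambda u^*-\lambda\grad\ftwo(\bar{x})=0$ (since $\grad\ftwo$ is locally constant), so both $x^*$ and $\bar{x}$ minimize the convex problem $\mathcal{P}(\bar{x})$, and uniqueness forces $x^*=\bar{x}$; (iii) an isolated point of a connected set is the whole set, so $\bar{\mathcal{X}}=\{\bar{x}\}$. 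Without some version of this argument your proof does not get off the ground, since all subsequent steps presuppose that the tail of the sequence lies in $U$.

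The remaining pieces are closer to the paper but still diverge. For the KL exponent the paper does not go through quadratic growth of the convex surrogate: it writes $f=\min_{a\in\mathcal{A}}\{\|y-Bx\|_2^2+\lambda\|x\|_1-\lambda a^\top x\}$ over a finite set $\mathcal{A}$ of sign patterns and invokes the calculus of Li and Pong (each piece is quadratic plus polyhedral, hence KL with exponent $\tfrac12$, and a finite minimum preserves this), which requires no uniqueness and no error-bound machinery. Your Luo--Tseng route could be made to work but, as you note, is left unexecuted precisely at the point where the geometric rate must be extracted. The paper's extraction is in fact quite short once $\theta=\tfrac12$ and local smoothness of $\ftwo$ are in hand: taking $\gamma=1$ in~\eqref{EqnSumFinite} together with $\big(f(\x{k})-\bar{f}\big)^{1/2}\leq \C{}\,(\M+\M_{\ftwo}+1/\alpha)\|\x{k}-\x{k-1}\|_2$ yields $e_k\leq\bar{\C{}}\,(e_{k-1}-e_k)$ for $e_k=\sum_{\ell\geq k}\|\x{\ell}-\x{\ell+1}\|_2$, hence $e_k\leq\frac{\bar{\C{}}}{\bar{\C{}}+1}e_{k-1}$ and $\|\x{k}-\bar{x}\|_2\leq e_k\leq cq^k$. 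I would recommend adopting the connectedness-plus-isolation argument for convergence and the telescoping recursion for the linear rate rather than leaving these as ``bookkeeping.''
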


\paragraph{Comments on problem~\eqref{EqnBestSubsetSubProb}:}
 It can be shown that when the matrix $\LinModMat$ is of 
 full rank, the objective
 function in problem~\eqref{EqnBestSubsetSubProb} is strictly convex,
 and as a result, the problem~\eqref{EqnBestSubsetSubProb} has unique
 solution. In the proof of Corollary~\ref{CorBestSubset}, we show that
 the point $\bar{x}$ is always a minimizer of the convex
 problem~\eqref{EqnBestSubsetSubProb}, so that the uniqueness
 assumption implies that $\bar{x}$ is in fact the unique solution.

 
\subsection{Mixture density estimation}

As a final example, we consider the problem of estimating a
two-component mixture density, where each of the constituent densities
belong to an exponential family.  The density of an exponential family
(with respect to a fixed base measure, typically counting or Lebesgue)
takes the form
\begin{align}
  \label{EqnExpDensity}
p(y; \eta) & = g(y) \exp \big \{ \eta^\top T(y) - A(\eta) \big \}.
\end{align}
Here the function $T: \Yspace \rightarrow \real^d$ is a vector of
sufficient statistics, whereas the log-partition function
\begin{align*}
A(\eta) & \mydefn \log \Big( \int_{\Yspace} g(y) \exp \{ \eta^\top
T(y) \} dy \Big)
\end{align*}
serves to normalize the density.  The parameter vector $\eta \in
\real^d$ determines the choice of density within the family. See
Table~\ref{TabMixDensity} for some examples of $1$-dimensional
exponential families of this type.  It includes various familiar
examples, such as the Gaussian, Poisson and Beta families.
 \begin{center}
   \begin{table}
\begin{tabular}{ |P{3cm}||P{3cm}|P{5.2cm}|P{3cm}|  }
 \hline Distribution Name & $\eta$ & $A(\eta)$ & \mbox{Twice
   continuously} \mbox{differentiable and} \mbox{sub-analytic}
 \\ \hline Poisson $(\lambda)$ & $\ln(\lambda)$ & $\exp{\eta}$ &
 \checkmark \\
 \hline Geometric  $(p)$  &  $\ln(p)$  &$-\ln( 1 -
 \exp{\eta} )  $& \checkmark \\
\hline Gaussian$  (\mu,\sigma^2)  $&$ 
 \big( \frac{\mu}{\sigma^2}, -\frac{1}{2\sigma^2} \big)^\top$&
  $-\frac{\eta_{1}^2}{4\eta_2} - \frac{1}{2}\ln(-2\eta_2)$  &
 \checkmark \\
\hline Exponential  $(\lambda)$  &  $-\lambda$  &  $-\ln(-\eta)$  &
\checkmark \\
\hline Gamma  $(\alpha,\beta)$  &  $(\alpha -1, \beta)^\top$  &$\ln
\Gamma (\eta_1 + 1 ) - (\eta_1 + 1)\ln(\eta_2 )$& \checkmark
\\
\hline Weibull  $(\lambda, k \footnote{ \text{The shape parameter$  k$
    is known } })  $&$  -\frac{1}{\lambda^k}  $&$  \ln(-\eta) - \ln(k)$
& \checkmark \\
\hline Beta  $(\alpha,\beta)$  &  $(\alpha, \beta)^\top$  &$
\ln\Gamma(\eta_1 ) + \ln \Gamma(\eta_2 ) - \ln \Gamma(\eta_1 + \eta_2
)$& \checkmark \\
\hline
\end{tabular}
\captionof{table}{Table showing the natural parameter $\eta$ 
and the log-partition function $A$ for different densities of exponential
  family, which are twice continuously differentiable and
  sub-analytic. In Appendix~\ref{AppTabMixDensity} we prove 
  the log-partition functions $A$ mentioned in the above
  table are sub-analytic. }
\label{TabMixDensity}
   \end{table}
   \end{center}

 In the problem of mixture density estimation, one is interested in
 densities of the form
\begin{align}
\label{EqnMixDensity}
\mixFun(y; \underbrace{\pi, \eta_0, \eta_1}_{\theta} ) = \pi \, p(y; \eta_0)
+ (1 - \pi) p(y; \eta_1),
\end{align}
where $\pi \in (0,1)$ is an unknown mixing proportion, and $(\eta_0,
\eta_1)$ are the unknown parameters of the two underlying densities.

Given $n$ i.i.d. samples $\{y_i \}_{i=1}^\numobs$ drawn from a mixture
density of the form~\eqref{EqnMixDensity}, a standard goal is to
estimate the unknown parameter vector $\theta \mydefn (\pi, \eta_0,
\eta_1)$.  One way to do so is by computing the maximum likelihood
estimate (MLE), obtained via minimizing the negative log-likelihood of
parameter $\theta$ given by the data.
Frequently, a regularized form
of the MLE is used, say of the form
\begin{align}
  \label{EqnMixDensityLkhd}
  \min_{\theta} \Big \{ \underbrace{- \sum\limits_{i = 1 }^n \log
    \big( \mixFun( y_i; \theta )\big)}_{g(\theta)} \Big \} \qquad \mbox{such
    that $\eta_0, \eta_1 \in \real^\usedim$, $\pi \in [0,1]$, and $\|
    \eta_0 \|_2 \leq R_0$, $\| \eta_1 \|_2 \leq R_1$.}
\end{align}
Here $R_0 > 0$ and $R_1 > 0$ are tuning parameters providing upper
bound on the $\ell_2$-norms of the parameters $\eta_0$ and $\eta_1$
respectively, often chosen by a data-dependent procedure (such as
cross-validation).

By inspection, the objective function $g$ in
problem~\eqref{EqnMixDensityLkhd} is non-convex.  By standard theory
on exponential families, the function $A$ is always infinitely
differentiable on its domain, so that the objective function $g$ is
infinitely differentiable on the convex set
\begin{align*}
\Xspace & = \Big \{ \theta = (\eta_0, \eta_1, \pi) \, \mid \, \eta_j
\in \dom(A), \pi \in [0,1], \|\eta_j\|_2 \leq R_j \quad \mbox{for $j =
  0, 1$} \Big \}.
\end{align*}
Consequently, we may apply Algorithm~\ref{AlgoProx} with $\fone(\cdot)
= - \sum \limits_{i = 1 }^n \log \big( \mixFun( \cdot; y_i )\big)$, $\ftwo
\equiv 0$ and $\nonsmoothf(\cdot) = \mathbbm{1}_{
  \Xspace }(\cdot)$ and $\f = \fone -\ftwo +\nonsmoothf$.
  Interestingly, the log-partition function $A$ is
sub-analytic for many exponential family densities 
(see Table~\ref{TabMixDensity}), which ensures that
the function $\fone$ is also sub-analytic. 
In Appendix~\ref{AppSubanal}, we show
that continuous sub-analytic functions satisfy Assumption KL so that we can
apply Theorem~\ref{ThmProxSubanal} to obtain the following:

\begin{cors}
\label{CorMixDensity}
Any sequence $\{\theta^k \}_{k \geq 0} = \big \{\eta_0^k, \eta_1^k,
\pi^k \big \}_{k \geq 0 }$ obtained by applying
Algorithm~\ref{AlgoProx} to problem~\eqref{EqnMixDensityLkhd}
satisfies the following properties:
\begin{itemize}
  \item[(a)] It converges to a first order stationary point.
\item[(b)] For all $k = 1,2,\ldots$, we have $\avg{ \| \grad
  \f(\theta^k) \|_2 } \leq \frac{c_1}{k}$, where $c_1$ is a universal
  constant independent of $k$.
\end{itemize}
\end{cors}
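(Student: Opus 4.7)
The plan is to verify the hypotheses of Theorem~\ref{ThmProxSubanal} for the decomposition $\f = \fone - \ftwo + \nonsmoothf$ described immediately before the corollary, namely $\fone(\theta) = -\sum_{i=1}^n \log\mixFun(y_i;\theta)$, $\ftwo \equiv 0$, and $\nonsmoothf = \mathbbm{1}_{\Xspace}$. Once this is in place, part (a) of Theorem~\ref{ThmProxSubanal} delivers both conclusions of Corollary~\ref{CorMixDensity} at once: convergence of $\{\theta^k\}$ to a critical point, and the rate $\avg{\|\grad \f(\theta^k)\|_2} \le c_1/k$.

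I would first dispatch Assumption PR together with boundedness of the iterates. The set $\Xspace$ is closed, convex, and bounded (the Euclidean balls of radii $R_0,R_1$ intersected with $\dom(A)$, times $[0,1]$), so $\nonsmoothf$ is proper, convex, and lower semi-continuous, and the proximal update in Algorithm~\ref{AlgoProx} reduces to Euclidean projection onto $\Xspace$; hence $\theta^k \in \Xspace$ for every $k \ge 1$, giving boundedness for free. The trivial choice $\ftwo \equiv 0$ is convex, continuous, and locally (indeed globally) smooth, as required in Theorem~\ref{ThmProxSubanal}. For $\fone$, both components $p(y_i;\eta_0)$ and $p(y_i;\eta_1)$ are strictly positive continuous functions of $\theta$ on $\Xspace$, hence bounded away from zero there by compactness, so $\log\mixFun(y_i;\theta)$ remains in a region where $A$ is $C^\infty$ and $\log$ is smooth. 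Bounded second derivatives on $\Xspace$ then give $\M_{\fone}$-smoothness of $\fone$, and $\f$ is continuous on the bounded set $\Xspace$ and $+\infty$ off it, so it is bounded below.

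The main work lies in Assumption KL, which I would obtain by showing $\f$ is sub-analytic and then invoking the fact recorded in Appendix~\ref{AppSubanal} that continuous sub-analytic functions satisfy the KL inequality at every point of their domain. By Table~\ref{TabMixDensity} together with Appendix~\ref{AppTabMixDensity}, each log-partition function $A$ in the families listed is sub-analytic on $\dom(A)$. Consequently each density $p(y_i;\eta_j) = g(y_i)\exp(\eta_j^\top T(y_i) - A(\eta_j))$ is sub-analytic in $\eta_j$; the convex combination $\mixFun(y_i;\theta)$ is sub-analytic in $\theta$; composing with $\log$ on the open set where $\mixFun > 0$, which contains a neighborhood of $\Xspace$, preserves sub-analyticity; and finite summation preserves it, so $\fone$ is sub-analytic. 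Since $\Xspace$ is cut out by polynomial and sub-analytic inequalities, its indicator $\nonsmoothf$ is sub-analytic as well, and hence so is $\f$. The main obstacle I anticipate is precisely this bookkeeping step --- ensuring the sub-analyticity of $\log\mixFun(y_i;\cdot)$ on a full neighborhood of $\Xspace$ for each family in Table~\ref{TabMixDensity} and verifying closure under the operations used --- but once these details are pinned down, all hypotheses of Theorem~\ref{ThmProxSubanal} are verified and its part (a) yields both claims of the corollary.
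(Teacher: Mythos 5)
Your proposal is correct and follows essentially the same route as the paper: the same decomposition $\fone = -\sum_i \log \mixFun(y_i;\cdot)$, $\ftwo \equiv 0$, $\nonsmoothf = \mathbbm{1}_{\Xspace}$, boundedness of the iterates via compactness of $\Xspace$, smoothness of $\fone$ from twice continuous differentiability of $A$ on the compact domain, and verification of Assumption KL by establishing sub-analyticity of $\fone$ (through $A$, $\exp$, $\log$, and closure under linear combination and composition) and of $\mathbbm{1}_{\Xspace}$ (via semi-algebraicity of $\Xspace$), before invoking Theorem~\ref{ThmProxSubanal}. The paper's proof is the same argument with the bookkeeping you anticipated carried out explicitly (Lemma~\ref{LemSubanalKl2} and the explicit semi-algebraic representation of $\Xspace$).
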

\noindent See Appendix~\ref{AppCorMixDensity} for the proof of this
corollary.


\section{Discussion}
In this paper, we analyzed the behavior of three gradient-based
algorithms---namely gradient descent, a proximal method, and an
algorithm of the Frank-Wolfe type---for finding critical points of a
class of non-convex non-smooth optimization problems. For each of the
three algorithms, we provided non-asymptotic bounds on the rate of
convergence to a first-order stationary point.  We showed that our
algorithm can escape strict saddle point for a class of non-smooth
functions, thereby generalizing existing results for smooth functions.
As a consequence of our theory, we obtained a simplification of the
popular CCCP algorithm, and the simplified algorithm retains all the
convergence properties of CCCP. Finally, we showed that for a large
subclass of functions, which include continuous sub-analytic functions
as a special case, we can have a significant improvement in the rate
of convergence.

Our work leaves open a number of questions for future research.  For
instance, it would be interesting to characterize the class of
DC-based functions mentioned in problem~\eqref{ProbGradient} when the
convex function $\ftwo$ is non-differentiable.  Indeed, we then obtain
a larger non-class of non-differentiable functions, and we suspect
that Theorem~\ref{ThmNecSuffCond} can be suitably
generalized. Finally, we suspect that the proof techniques used here
can be leveraged in order to establish sharper results for other forms
of non-convex optimization problems.


\subsection*{Acknowledgements}

This work was partially supported by the Office of Naval Research
Grant DOD ONR-N00014 and National Science Foundation Grant NSF-DMS-1612948.



\appendix

\section{Technical background}

In this appendix, we collect some technical background on
subdifferentials and sub-analytic functions.

\subsection{Fr\'{e}chet and limiting subdifferential}
\label{AppFrechetSubdiff}

We first recall the definitions and some useful properties of
sub-differentials, which will be useful in subsequent sections.

\begin{defn}
  Let $f: \mathbb{R}^{\usedim} \mapsto \mathbb{R}$ be a lower
  semicontinuous function.  For any $x \in \dom(f)$, the Fr\'{e}chet
  subgradient of the function $f$ at point $x$ is defined as
  \begin{align*}
       \frechet {f(x)} = \Biggr \{ u \;\Big| \;\;\underset{y \neq x, y
         \rightarrow x }{\lim \inf} \frac{ f(y) - f(x) -\big\langle u,
         y-x \big\rangle }{\| y - x\|_2} \geq 0 \Biggr \}.
     \end{align*}   
\end{defn}

\begin{defn}
   Let  $f: \real^d \mapsto \real$  be a lower semi-continuous
   function. For any  $x \in \dom(f)$,  the limiting subdifferential
   of the function $f$  at point $x$  is defined as
  \begin{align*}
    \limiting f(x) = \Big \{ u \; \Big| \;\; \exists \;\; \x{k}
    \rightarrow x, u^k \rightarrow u \text{ with }f(x^k) \rightarrow
    f(x) \text{ and } u^k \in \frechet{ f(\x{k})} \text{ as } k
    \rightarrow \infty \Big \}.
  \end{align*}
\end{defn}

\paragraph{Properties:}
The following properties of Fr\'{e}chet and limiting sub-differential
are provided in Chapter 8 of the book Rockafeller and
Wets~\cite{rockafellar2009variational}.
\begin{itemize}
\item[(a)] For any proper convex function $\ftwo$, we have $\limiting
  \ftwo(x) = \frechet \ftwo(x)$ for all $x \in \dom(\ftwo)$, and both
  quantities agree with the usual subgradient of the convex function
  $\ftwo$.
\item[(b)] If a function $\fone$ is smooth in a neighborhood of a
  point $x$, then $\limiting \f(x) = \grad \f(x)$.
\item[(c)] Consider a function $\f$ of the form 
$\f = \fone + \nonsmoothf$,
  where the function $\fone$ is smooth in a neighborhood of a point
  $x$, and the function $\nonsmoothf$ is proper convex and
  finite at the point $x$.  Then the limiting sub-differential of the
  function $f$ at the point $x$ is given by ${\limiting{\f(x)} =
    \grad \fone(x) + \subgrad \nonsmoothf(x)}$.
\item[(d)] \emph{(Graph continuity:)} Consider a sequence
  $\big\{ \big( \x{k}, u^k \big) \big\}_{k \geq 1}$ in
  \graph($\partial_{L} f$) such that
the sequnece $\{ (\x{k}, u^k, f(\x{k}) \}_{k \geq 0}$ 
converges to a point $(x, u, \f(x))$.
Then $(x,u) \in \graph(\partial_{L} f)$. Recall that
  \graph($\partial_{L} f$) \mydefn $\big\{ (x,u) 
  \in \real^{\usedim} \times \real~|~u
\in \partial_{L}f(x) \big\}$.
\end{itemize}

.


\subsection{Sub-analytic functions satisfy KL-assumption}
\label{AppSubAnalKL}
In this appendix, we show that continuous sub-analytic functions
satisfy the KL-inequality. We also provide examples of functions which
are sub-analytic.

\paragraph{Comments on limiting sub-differential:}

In order to facilitate our discussion, we mention some simple facts on
limiting subdifferential of a function $\f$, where $\f$ is of the form
$\f = \fone - \ftwo$ (Theorems~\ref{ThmGradient}
and~\ref{ThmGradSubanal}) or $\f = \fone + \nonsmoothf - \ftwo$
(Theorems~\ref{ThmProx} and~\ref{ThmProxSubanal}).  The following
properties are direct consequences of properties of the limiting
subdifferential mentioned in Appendix~\ref{AppFrechetSubdiff}.

\begin{itemize}
\item Suppose the difference function $\f = \fone - \ftwo$ satisfies
  parts (a) and (b) of Assumption GR. Then we have
  \begin{align*}
{\limiting{(-\f)(x)} = \partial \ftwo(x) - \grad \fone(x)}, \quad
\mbox{and moreover} \| \grad \f(x) \|_2 \mydefn \|\grad
\fone(x) - \partial \ftwo(x)\|_2 = \| \limiting{(-\f)(x)} \|_2.
  \end{align*}

  \item Suppose the function $\f = \fone + \nonsmoothf - \ftwo$, where
    the function $\ftwo$ is locally smooth, and the function $\f$
    satisfies Assumption PR part (b).  Then $\limiting{\f(x)} = \grad
    \fone(x) - \grad \ftwo(x) + \partial \nonsmoothf(x)$.
    Consequently, we have that ${\| \grad \f(x) \|_2 =
      \|\limiting{\f(x)}\|_2}$.
\end{itemize}

We prove that continuous sub-analytic functions satisfy
Assumption KL by utilizing  a
previous work by Bolte et al.~\cite{bolte2007Lojasiewicz}.
In order to facilitate 
further discussion, we introduce few notations used in the 
paper~\cite{bolte2007Lojasiewicz}.
We use $m_{\f}(x)$ to denote the $\ell_2$ distance of the set
$\limiting{\f(x)}$ from zero; concretely,  
$m_{\f}(x) := \dist_{\| \cdot \|_2} \big(0, \limiting{\f(x)} \big)$. 
In Theorem 3.1 (for critical points of the function $\f$ ) 
and Remark 3.2 (for non-critical points of the function $\f$), 
Bolte et al. proved the following fact about sub-analytic functions.
\begin{lems}
\label{LemBolte}
 \emph{(Bolte et al.~\cite{bolte2007Lojasiewicz}):} Let $\f:
 \real^{\usedim} \mapsto \real \cup \{+\infty\}$ be a sub-analytic
 function with closed domain, and assume that $\f|_{\dom(\f)}$ is
 continuous.  Then for any $a \in \dom(f)$, there exists an exponent
 $\theta \in [0,1)$ such that, the function $\frac{|\f -
     \f(a)|^{\theta}}{m_{\f}}$ is bounded above in a neighborhood of
   $a$.
\end{lems}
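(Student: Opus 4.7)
The plan is to follow the strategy of Bolte et al.~\cite{bolte2007Lojasiewicz}, which extends the classical {\L}ojasiewicz inequality from real analytic functions to the sub-analytic, possibly non-smooth setting. The result naturally splits into two cases according to whether $a$ is a critical point of $\f$.

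First I would dispense with the non-critical case ($0 \notin \limiting{\f(a)}$), which is the content of Remark 3.2. Since $\f$ is continuous on its closed domain, and since the graph of the limiting subdifferential is closed (the graph-continuity property recorded in Appendix~\ref{AppFrechetSubdiff}), the map $x \mapsto m_{\f}(x) = \dist(0, \limiting{\f(x)})$ is lower semicontinuous. When $m_{\f}(a) > 0$, we can therefore find a neighborhood of $a$ on which $m_{\f}$ is bounded below by a positive constant, while $|\f - \f(a)|^{\theta}$ is bounded above by continuity. The ratio is then trivially bounded for any $\theta \in [0,1)$.

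The substantive case is when $a$ is a critical point, i.e., $0 \in \limiting{\f(a)}$. Here the plan is to reduce to the classical {\L}ojasiewicz inequality for real analytic functions via stratification and the curve selection lemma. The key structural input is that the graph of the limiting subdifferential of a sub-analytic function is itself a sub-analytic subset of $\real^{\usedim} \times \real^{\usedim}$; this follows by expressing $\limiting{\f}$ through first-order difference quotients and invoking closure properties of the sub-analytic class under projection, finite union, and taking closures. Granted this, the set $\{(x,r) : r = m_{\f}(x)\}$ is sub-analytic, and so is every sublevel set $\{ x : |\f(x) - \f(a)|^{\theta} \leq c \, m_{\f}(x)\}$.

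Next I would apply the sub-analytic stratification theorem to decompose a neighborhood of $a$ into finitely many analytic strata, and use the curve selection lemma to assert that any sequence $x^k \to a$ violating the desired inequality admits a sub-analytic (in fact, after desingularization, real analytic) curve $\gamma: [0, \varepsilon) \to \real^{\usedim}$ with $\gamma(0) = a$ along which both $|\f(\gamma(t)) - \f(a)|$ and $m_{\f}(\gamma(t))$ admit Puiseux-type expansions in $t$. Matching the leading exponents of these expansions produces a rational $\theta \in [0,1)$ for which $|\f - \f(a)|^{\theta}/m_{\f}$ stays bounded along every such curve. A compactness argument over the finitely many strata meeting a small ball around $a$ upgrades this to a uniform bound in a neighborhood of $a$, yielding the desired $\theta$.

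The hard part will be establishing the sub-analyticity of the graph of $\limiting{\f}$ and correctly invoking Hironaka-style desingularization to reduce the multi-dimensional inequality to analytic behavior along curves; this is precisely the machinery that Bolte et al. develop and that makes the extension beyond the smooth analytic case non-trivial. Once these tools are in hand, matching exponents along curves and passing to a uniform bound by stratum-finiteness is essentially routine.
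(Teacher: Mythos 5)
This lemma is not proved in the paper at all: it is imported verbatim from Theorem 3.1 and Remark 3.2 of Bolte et al.~\cite{bolte2007Lojasiewicz}, so there is no internal proof to compare your argument against. Judged on its own terms, your sketch is a reasonable roadmap of the Bolte--Daniilidis--Lewis argument. The non-critical case via lower semicontinuity of $x \mapsto m_{\f}(x)$ --- which does follow from continuity of $\f$ on $\dom(\f)$ together with closedness of $\graph(\limiting{\f})$ and local boundedness of the subgradients --- is correct and is exactly the content of their Remark 3.2. For the critical case you correctly identify the two load-bearing inputs: sub-analyticity of $\graph(\limiting{\f})$, and a one-dimensional {\L}ojasiewicz-type growth estimate extracted from Puiseux expansions.

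That said, as written the critical-case argument is a plan rather than a proof, and two of its steps would fail in the stated form. (i) The set $\{x : |\f(x)-\f(a)|^{\theta} \le c\, m_{\f}(x)\}$ is sub-analytic only for rational $\theta$, and your curve-selection step is phrased in terms of sequences ``violating the desired inequality,'' which presupposes the very exponent you are trying to construct. The standard repair --- and what Bolte et al.\ actually do --- is to introduce the one-variable marginal function $r \mapsto \inf\{ m_{\f}(x) : x \in \bar{B}(a,\varepsilon),\ \f(x)-\f(a) = r\}$, prove that it is sub-analytic in $r$ and strictly positive for small $r \neq 0$, and then apply the one-dimensional growth lemma to that function; curve selection enters in establishing its properties, not in scanning over violating sequences. (ii) Strict positivity of that marginal function (equivalently, boundedness of the ratio) requires showing that critical points of $\f$ near $a$ all share the value $\f(a)$; if some nearby $x$ had $m_{\f}(x)=0$ but $\f(x)\neq \f(a)$, the ratio would be unbounded. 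Your sketch does not address this local constancy of $\f$ on its critical set, which is itself a nontrivial consequence of the sub-analytic structure. Deferring these points wholesale to ``the machinery that Bolte et al.\ develop'' is defensible here, since the paper itself only cites the result, but you should be explicit that what you have written is an outline whose hard steps live entirely in the cited reference.
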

\noindent Using Lemma~\ref{LemBolte}, we now argue that sub-analytic
functions, under the conditions of Theorem~\ref{ThmGradSubanal}
or Theorem~\ref{ThmProxSubanal}, satisfy Assumption KL.
\begin{lems}
\label{LemSubanalKl1}
Any sub-analytic function $\f$ satisfying Assumption GR also satisfies Assumption KL.
 \end{lems}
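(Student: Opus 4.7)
The plan is to reduce the lemma to Bolte et al.'s result (Lemma~\ref{LemBolte}) by identifying the right sub-analytic function to which it should be applied. The key subtlety is that Bolte's lemma bounds the ratio $|\f - \f(a)|^\theta / m_{\f}$ where $m_{\f}(x) = \dist(0, \limiting{\f(x)})$ involves the limiting subdifferential of $\f$ itself, whereas Assumption KL demands a bound with denominator $\|\grad \f(x)\|_2$, which under the paper's convention equals $\inf_{u \in \subgrad \ftwo(x)} \|\grad \fone(x) - u\|_2$. Reconciling the two requires care because $\f = \fone - \ftwo$, with $\fone$ smooth and $-\ftwo$ concave, is not directly in the form handled by the standard smooth$+$convex calculus for $\limiting{(\cdot)}$.

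The reconciliation uses the identification already recorded in the ``Comments on limiting sub-differential'' paragraph preceding Lemma~\ref{LemBolte}: under Assumption GR, the function $-\f = \ftwo - \fone$ is the sum of a continuous convex function and a smooth function, so the standard sum rule gives $\limiting{(-\f)(x)} = \subgrad \ftwo(x) - \grad \fone(x)$, and therefore $m_{-\f}(x) = \dist\!\left(0, \limiting{(-\f)(x)}\right) = \|\grad \f(x)\|_2$. This suggests applying Bolte's lemma to $-\f$ rather than to $\f$.

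Concretely, the steps I would execute are as follows. First, observe that sub-analyticity is preserved under negation, so $-\f$ is sub-analytic whenever $\f$ is. Second, check the hypotheses of Lemma~\ref{LemBolte} for $-\f$: its domain agrees with $\dom(\f)$ and is closed (on $\range$ or on all of $\real^{\usedim}$), and $-\f$ is continuous on this domain because Assumption GR guarantees that both $\fone$ and $\ftwo$ are continuous. Third, apply Lemma~\ref{LemBolte} to $-\f$ at the point $a = \bar{x} \in \dom(\f)$ to obtain an exponent $\theta \in [0,1)$ and a neighborhood $V$ of $\bar{x}$ on which $|(-\f)(x) - (-\f)(\bar{x})|^{\theta} / m_{-\f}(x)$ is bounded above. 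Fourth, substitute $|(-\f)(x) - (-\f)(\bar{x})| = |\f(x) - \f(\bar{x})|$ and $m_{-\f}(x) = \|\grad \f(x)\|_2$ into this bound to obtain exactly the statement in Assumption KL.

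The main (mild) obstacle is the careful bookkeeping in step two of the outlined identification $\limiting{(-\f)(x)} = \subgrad \ftwo(x) - \grad \fone(x)$; this is precisely where Assumption GR (smoothness of $\fone$ together with convexity and continuity of $\ftwo$) is used, and it is the reason one must pass to $-\f$ rather than attempt to apply Bolte's lemma directly to $\f$. Once that identity is in hand the remaining argument is essentially a notational translation between Bolte's framework and the gradient-norm convention used throughout the paper.
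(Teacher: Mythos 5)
Your proposal is correct and follows essentially the same route as the paper: negate $\f$, apply Bolte et al.'s result (Lemma~\ref{LemBolte}) to the continuous sub-analytic function $-\f$, and use the identity $\limiting{(-\f)(x)} = \partial \ftwo(x) - \grad \fone(x)$ to relate $m_{-\f}(x)$ to $\|\grad \f(x)\|_2$. The only cosmetic difference is that the paper needs (and states) only the inequality $m_{-\f}(x) \leq \|\grad \f(x)\|_2$ --- since $\grad \f(x)$ denotes an arbitrary element $\grad\fone(x) - u$ with $u \in \partial\ftwo(x)$ rather than the minimal-norm one --- but this does not affect the validity of your argument.
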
 
\begin{proof}
First, note that the function $\f$ is continuous by Assumption GR;
suppose $\f$ is sub-analytic, then from
properties of sub-analytic functions, we have that the function $-\f$ is
also sub-analytic. Furthermore, the function $-\f$ is continuous in the
closed domain $\range$ ---which by Lemma~\ref{LemBolte} guarantees
that, for any $a \in \range$, there exists $\theta \in [0,1)$ such that
  the ratio $\frac{|-\f - (-\f(a))|^{\theta}}{ m_{(-\f)}}$ is bounded
  above in a neighborhood of the point $a$.  Since $|-\f - (-\f(a))| =
  |\f - \f(a)|$, proving satisfiability of Assumption KL reduces to
  showing that $m_{(-\f)}(x)$ is upper bounded by $\|\grad \f(x)\|_2$.
 To this end, note that from the discussion about limiting
  subdifferential in the paragraph above Lemma~\ref{LemBolte},
  we have
  \begin{align}
    \| \grad \f(x) \|_2 = \| \limiting{(-\f)(x)} \|_2 \;
    \stackrel{(i)}{\geq} \; m_{(-\f)}(x),
  \end{align} 
  where step (i) follows from the definition of $m_{(-\f)}(x)$.
  Putting together the pieces, we conclude that any sub-analytic 
  function $\f$ which satisfies Assumption GR, also satisfies
  Assumption KL.
\end{proof}

\begin{lems}
\label{LemSubanalKl2}
Suppose that, in addition to the conditions on the
functions $(\fone, \ftwo, \nonsmoothf)$ from Theorem~\ref{ThmProx}, 
the function $\f \mydefn \fone - \ftwo + \nonsmoothf$ is continuous and sub-analytic 
in its domain $\dom(\f)$, and the domain
$\dom(\f)$ is closed. Then the function $\f$ satisfies Assumption KL.
 \end{lems}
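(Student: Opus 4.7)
The plan is to mirror the strategy of Lemma~\ref{LemSubanalKl1}: invoke Lemma~\ref{LemBolte} of Bolte et al.\ to obtain a \L ojasiewicz-type bound in terms of the distance $m_{\f}(x) \mydefn \dist_{\|\cdot\|_2}(0, \limiting{\f(x)})$, and then translate that bound into the paper's gradient-norm notion $\|\grad \f(x)\|_2$ using the limiting-subdifferential calculus recalled in Appendix~\ref{AppFrechetSubdiff}. The substantive difference from Lemma~\ref{LemSubanalKl1} is the presence of the extra non-smooth convex summand $\nonsmoothf$, which enlarges the limiting subdifferential of $\f$ but does not change the skeleton of the argument.

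First, since by hypothesis $\f$ is continuous and sub-analytic on its closed domain $\dom(\f)$, Lemma~\ref{LemBolte} applied directly to $\f$ supplies, for each $a \in \dom(\f)$, an exponent $\theta \in [0,1)$ and a neighborhood $U$ of $a$ on which the ratio $\frac{|\f(x) - \f(a)|^{\theta}}{m_{\f}(x)}$ is bounded above. This is entirely parallel to the first step of the proof of Lemma~\ref{LemSubanalKl1}; no new input beyond continuity, sub-analyticity and closedness of the domain is needed here.

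Second, I would pass from $m_\f(x)$ to $\|\grad \f(x)\|_2$. Under Assumption PR, together with the local smoothness of $\ftwo$ that accompanies the KL portion of the paper (see the bullet points in Appendix~\ref{AppFrechetSubdiff} and the hypotheses of Theorem~\ref{ThmProxSubanal}), properties (b) and (c) of the limiting subdifferential apply to the grouping $\f = (\fone - \ftwo) + \nonsmoothf$, in which the first summand is locally smooth and the second is proper convex lsc. This yields
\begin{align*}
\limiting{\f(x)} \; = \; \grad \fone(x) - \grad \ftwo(x) + \subgrad \nonsmoothf(x).
\end{align*}
By the paper's convention, $\grad \f(x)$ denotes an arbitrary element of the right-hand set, hence $m_\f(x) \leq \|\grad \f(x)\|_2$ for every admissible choice of subgradients. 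The upper bound on $\frac{|\f(x) - \f(a)|^{\theta}}{m_\f(x)}$ therefore immediately implies an upper bound on $\frac{|\f(x) - \f(a)|^{\theta}}{\|\grad \f(x)\|_2}$ on $U$, which is precisely Assumption KL.

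The main (mild) obstacle I foresee is the sum rule used in the second step: because $-\ftwo$ is concave rather than convex, the limiting-subdifferential calculus does not apply verbatim to $\f = \fone - \ftwo + \nonsmoothf$ under the bare Assumption PR; without further regularity of $\ftwo$, $\limiting{(-\ftwo)}$ need not equal $-\subgrad \ftwo$. Invoking local smoothness of $\ftwo$ (which is already imposed in Theorem~\ref{ThmProxSubanal}, where this lemma is used) removes the obstruction by letting $\fone - \ftwo$ be treated as a single locally smooth function, after which the clean additive decomposition above--and hence the reduction of $m_\f(x)$ to $\|\grad \f(x)\|_2$--is valid.
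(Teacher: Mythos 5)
Your proposal is correct and follows essentially the same route as the paper's proof: apply Lemma~\ref{LemBolte} to $\f$ on its closed domain to get the bound in terms of $m_{\f}$, then use the limiting-subdifferential sum rule (valid once $\ftwo$ is locally smooth, so that $\fone-\ftwo$ is locally smooth and $\limiting{\f(x)} = \grad\fone(x) - \grad\ftwo(x) + \subgrad\nonsmoothf(x)$) to conclude $m_{\f}(x) \leq \|\grad\f(x)\|_2$. Your explicit remark about why local smoothness of $\ftwo$ is needed for the sum rule is exactly the point the paper relies on implicitly via the bullet in Appendix~\ref{AppSubAnalKL}.
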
 
\begin{proof}
  Since the function $\f|_{\dom(\f)}$ is continuous and sub-analytic by assumption, from
  Lemma~\ref{LemBolte}, we have that for any $a \in \dom(\f)$ there
  exists a $\theta \in [0,1)$ such that, the ratio $\frac{|\f -
      \f(a)|^{\theta}}{m_{\f}}$ is bounded above in a neighborhood
      of the point $a$. In order to justify
    satisfiability of Assumption KL, it suffices to prove that
    $m_{\f}(x)$ is upper bounded by $\|\grad \f(x)\|_2$. To this end,
    note that the function $\ftwo$ is locally smooth by assumptions of
    Theorem~\ref{ThmProx} part (b). Hence, from the discussion about
    limiting subdifferential in the paragraph above
    Lemma~\ref{LemBolte}, we have
  \begin{align}
    \| \grad \f(x) \|_2 & = \| \limiting{\f(x)} \|_2 \;
    \stackrel{(i)}{\geq} \; m_{\f}(x),
  \end{align} 
where step (i) follows from the definition of $m_{\f}(x)$.  Putting
together the pieces, guarantees that the function $\f$ satisfies
Assumption KL.
\end{proof}


\subsection{Instances of sub-analytic functions}
\label{AppSubanal}

In Appendix~\ref{AppSubAnalKL}, we proved that continuous sub-analytic functions satisfy Assumption KL, and in those cases,---by
Theorems~\ref{ThmGradSubanal} and~\ref{ThmProxSubanal}---we have a
faster rate of convergence of Algorithms~\ref{AlgoGradient}
and~\ref{AlgoProx}. In this appendix, we provide examples of
functions which are sub-analytic. We start by providing definitions of
sub-analytic functions following the definition of Bolte et al.~\cite{bolte2007Lojasiewicz}.

A subset $S \subset \real^{\usedim}$ is called \emph{semi-analytic},
if each point of $\real^{\usedim}$ admits a neighborhood $\nbhd$
such that the set $\genericset \cap \nbhd$ has the form
\begin{align*}
     \genericset \cap \nbhd = \cup_{i=1}^{p} \cap_{j=1}^{q}
     \big \{ x \in \nbhd \mid \reanaone_{ij} = 0, \reanatwo_{ij}
     > 0 \big \},
\end{align*}
where the functions $\reanaone_{ij}, \reanatwo_{ij} : \nbhd \mapsto
\real$ are real-analytic.

A set \genericset~ is called \emph{sub-analytic}, if each point of
$\real^{\usedim}$ admits a neighborhood $\nbhd$ such that
\begin{align*}
\genericset \cap \nbhd = \big \{ x \in \real^{\usedim} : \big( x,y
\big) \in B \big \},
\end{align*}
where $B$ is a bounded semi-analytic subset of $\real^{\usedim} \times
\real^{m}$ for some $m \geq 1$. A function $\f$ is called sub-analytic
if the graph of $\f$, defined by $\graph(\f) \mydefn \big
\{ (x,y) \in \real^{\usedim} \times \real : f(x)=y \big \}$,
is sub-analytic.

The class of sub-analytic functions is quite large. In order to motivate the
reader, we provide few examples here.
The following results can be found in~Bolte et
al.~\cite{bolte2014proximal} and Chapter 6 in the
book~\cite{facchinei2007finite}.
\begin{itemize}
\item[(a)] Any real-valued polynomial or analytic function is
  sub-analytic.
\item[(b)] Any real-valued semi-algebraic or semi-analytic
  function is sub-analytic.
\item[(c)] Indicator function of a semi-algebraic set is sub-analytic.
\item[(d)] Sub-analytic functions are closed under finite linear
  combinations, and the product of two sub-analytic functions is
  sub-analytic.
\item[(e)] Pointwise maximum and minimum of a finite collection
  of sub-analytic functions are sub-analytic.
\item[(f)] \emph{Composition rule:} If $g_1$ and $g_2$ 
are two sub-analytic functions with
  the function $g_1$ being continuous, then
  the composition function $g_2 \circ g_1$ is sub-analytic. In
  fact, the class of continuous sub-analytic functions are
  \emph{closed under algebraic operations}.
 \end{itemize}


\section{Proofs related to Algorithm~\ref{AlgoGradient}}


In this appendix, we collect the proofs of various results related to
the gradient-based Algorithm~\ref{AlgoGradient}, including
Theorem~\ref{ThmGradient}, Corollaries~\ref{CorBacktracking}
and~\ref{CorSaddlePoint}, and
Proposition~\ref{PropCCCP}.


\subsection{Proof of Theorem~\ref{ThmGradient}}
\label{AppThmGradient}

Our proof of this theorem, as well as subsequent ones, depends on the
following descent lemma:
\begin{lems}
\label{LemDescentStep}
Under the conditions of Theorem~\ref{ThmGradient}, we have
\begin{align}
  \label{EqnDescentCondGrad}
  \x{k} \in \intr(\range) \quad \mbox{and} \quad \f(\x{k+1}) 
  \leq \f{(\x{k})} -  \frac{\alpha}{2} \| 
  \grad{\f(\x{k})}\|_2^2 \quad
  \mbox{for all $k = 0, 1, 2, \ldots$.}
 \end{align}
\end{lems}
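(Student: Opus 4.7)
The plan is to establish both claims simultaneously by induction on $k$, where the descent inequality provides the mechanism by which the iterates remain trapped in the sublevel set $\Level(\f(\x{0}))$, which in turn keeps them inside $\intr(\range)$.

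I would first derive the one-step descent inequality by combining two standard facts. Using the $\M_{\fone}$-smoothness of $\fone$, the quadratic upper bound gives
\begin{align*}
\fone(\x{k+1}) \leq \fone(\x{k}) + \inprod{\grad \fone(\x{k})}{\x{k+1}-\x{k}} + \frac{\M_{\fone}}{2}\|\x{k+1}-\x{k}\|_2^2.
\end{align*}
Since $\ftwo$ is convex with $\gradftwo \in \subgrad \ftwo(\x{k})$, the subgradient inequality yields
\begin{align*}
\ftwo(\x{k+1}) \geq \ftwo(\x{k}) + \inprod{\gradftwo}{\x{k+1}-\x{k}}.
\end{align*}
Subtracting the second from the first and recalling that $\grad \f(\x{k}) = \grad \fone(\x{k}) - \gradftwo$, I obtain
\begin{align*}
\f(\x{k+1}) \leq \f(\x{k}) + \inprod{\grad \f(\x{k})}{\x{k+1}-\x{k}} + \frac{\M_{\fone}}{2}\|\x{k+1}-\x{k}\|_2^2.
\end{align*}
Substituting the update $\x{k+1}-\x{k} = -\alpha \grad \f(\x{k})$ produces
\begin{align*}
\f(\x{k+1}) \leq \f(\x{k}) - \alpha\Bigl(1-\frac{\alpha \M_{\fone}}{2}\Bigr)\|\grad \f(\x{k})\|_2^2,
\end{align*}
and the step-size choice $\alpha \in (0, 1/\M_{\fone}]$ makes the bracketed factor at least $1/2$, delivering the claimed bound.

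For the inclusion $\x{k} \in \intr(\range)$, I would proceed by induction. The base case is the hypothesis that $\x{0} \in \intr(\range)$ and $\Level(\f(\x{0})) \subseteq \intr(\range)$. Assuming $\x{0},\ldots,\x{k}$ all lie in $\intr(\range)$ with $\f(\x{j}) \leq \f(\x{0})$ for $j \leq k$, the descent inequality above gives $\f(\x{k+1}) \leq \f(\x{k}) \leq \f(\x{0})$, so $\x{k+1} \in \Level(\f(\x{0})) \subseteq \intr(\range)$.

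The main delicate point is the standard chicken-and-egg issue: applying the $\M_{\fone}$-smoothness upper bound at the pair $(\x{k}, \x{k+1})$ presumes that $\x{k+1}$ already lies in the region of smoothness, yet this is precisely what we are trying to establish. I would handle this by the usual continuity/maximality argument: since $\x{k} \in \intr(\range)$, the line segment $\x{k} + t(\x{k+1}-\x{k})$ remains in $\intr(\range)$ for $t$ in a maximal interval $[0,t^*]$ on which the quadratic upper bound is valid, so the descent inequality applies for the endpoint of that segment; if $t^* < 1$, the resulting decrease in $\f$ together with the assumption $\Level(\f(\x{0})) \subseteq \intr(\range)$ contradicts maximality, so $t^*=1$. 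Under the common convention that $\fone$ is $\M_{\fone}$-smooth on all of $\real^{\usedim}$ (as typically meant by Assumption GR(a)), this subtlety collapses entirely and the induction proceeds directly.
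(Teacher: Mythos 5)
Your proof is correct and follows essentially the same route as the paper's: combine the quadratic upper bound from $\M_{\fone}$-smoothness of $\fone$ with the subgradient inequality for the convex $\ftwo$, substitute the update, and use $\alpha \le 1/\M_{\fone}$, then conclude $\x{k+1}\in\Level(\f(\x{0}))\subset\intr(\range)$ by monotonicity of the function values. The extra maximality discussion is unnecessary here since Assumption GR takes $\fone$ to be $\M_{\fone}$-smooth globally, exactly as you note.
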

\noindent See Appendix~\ref{AppLemDescentStep} for the proof of this
lemma. \\


\noindent We now prove Theorem~\ref{ThmGradient} using Lemma~\ref{LemDescentStep}.
\paragraph{Convergence of function values:}
We first prove that the function value sequence $\{ f(\x{k}) \}_{k \geq 0}$
is convergent.  Since $\fstar \mydefn \min \limits_{x \in \range}
f(x)$ is finite by assumption, 
and $\x{k} \in \intr(\range)$ for all $k \geq 0$  by
Lemma~\ref{LemDescentStep}, the sequence $\{ f(\x{k}) \}_{k
  \geq 0}$ is bounded below.  For any non-stationary $\x{k}$,
inequality~\eqref{EqnDescentCondGrad} also ensures that $f(\x{k}) >
f(\x{k+1})$; hence, there must exist some scalar $\bar{f}$ such that $\lim
\limits_{k \rightarrow \infty} f(\x{k}) = \bar{f}$.


\paragraph{Stationarity of limit points:}  Next, we establish that
any limit point of the sequence $\{\x{k}\}_{k \geq 0}$ must be stationary.
Consider a subsequence $\{ \x{k_j}\}_{j \geq 0 }$ of $
\big \{ \x{k} \big \}_{ k \geq 0 }$ such that $\x{k_j}
\rightarrow \bar{x}$, and let $\{ u^{k_j} \}_{j \geq 0}$ be
the associated sequence of subgradients.  It suffices to exhibit a
sub-gradient $\bar{u} \in \subgrad \ftwo(\bar{x})$ such that $\grad
\fone(\bar{x}) - \bar{u} = 0$.

Since the sequence $\{\x{k_j}\}_{j \geq 0}$ converges to $\bar{x}$,
we must have ${\| \grad
\f(\x{k_j}) \|_2 = \|\grad \fone(\x{k_j}) - u^{k_j}\|_2 \rightarrow 0}$;
The function $\fone$ is continuously differentiable
by assumption, and we have
${\grad{\fone(\x{k_j})} \rightarrow \grad \fone(\bar{x})}$. Combining
these we find that $u^{k_j} \rightarrow \grad
\fone(\bar{x})$. Furthermore, by continuity of the function
$\fone$, we have
$\fone(\x{k_j}) \rightarrow \fone(\bar{x})$. Putting together the pieces
we have established above that 
${\big( \x{k_j}, u^{k_j}, \fone(\x{k_j}) \big) \rightarrow \big( \bar{x},
\bar{u}, \fone(\bar{x}) \big)}$, where $\bar{u} \mydefn \grad \fone(\bar{x})$.
Consequently, the graph continuity of limiting-sub-differentials (see
Appendix~\ref{AppFrechetSubdiff}) guarantees that $\bar{u} =
\grad{\fone (\bar{x})} \in \subgrad{\ftwo(\bar{x})}$.  Overall, we
conclude that ${\nabla \f(\bar{x}) \mydefn 
\grad \fone(x) - \bar{u} = 0}$,
so that $\bar{x}$ is a stationary point as claimed.


\paragraph{Establishing the bound~\eqref{EqnAlgoGradientGradBound}:}

Finally, we prove the claimed bound~\eqref{EqnAlgoGradientGradBound}
on the averaged squared gradient.  Recalling that $\fstar \mydefn \min
\limits_{x \in \range} f(x)$ is finite, we have
\begin{align*}
f(\x{0}) - \fstar \geq f(\x{0}) - f(x^{k+1}) & =
\sum_{j=0}^{k}f(x^{j}) - f(x^{j+1})\nonumber \\
& \stackrel{(i)}{\geq} \frac{\alpha}{2}\sum_{j=0}^{k}\|
\grad{\f}(\x{k}) \|_2^2 \\ & = \frac{\alpha(k+1)}{2} \avg{ \|\grad
  \f{(\x{k})} \|_2^2 },
\end{align*}
where step (i) follows from equation~\eqref{EqnDescentCondGrad}.
Rearranging yields the claimed bound~\eqref{EqnAlgoGradientGradBound}
on the averaged squared gradient.

\vspace{8pt}


\subsubsection{Proof of Lemma~\ref{LemDescentStep}}
\label{AppLemDescentStep}

Recall that by assumption, the function $\fone$ is continuously
differentiable and $\M_{\fone}$-smooth, and the function $\ftwo$ is
convex. As a consequence, for any vector $\x{k} \in \range$ and
subgradient $u^k \in \subgrad{\ftwo(\x{k})}$, we have
\begin{subequations}
  \begin{align}
    \label{EqnSmoothUB} 
    \fone(x) & \leq \fone(\x{k}) + \inprod{\grad \fone(\x{k})}{ x -
      \x{k}} + \frac{\M_{\fone}}{2} \| x - \x{k} \|_2^2 \\
\label{EqnCvxUB}
 \ftwo(x) & \geq \ftwo(\x{k}) + \inprod{\gradftwo}{x - \x{k}}.
\end{align}
\end{subequations}
Combining inequalities~\eqref{EqnSmoothUB} and~\eqref{EqnCvxUB} yield
\begin{align}
 \label{EqnCvxMajorGrad}
\f(x) = \fone(x) - \ftwo(x) \leq f(\x{k}) + \inprod{\nabla
  \fone(\x{k}) - \gradftwo}{x - \x{k}} + \frac{\M_{\fone}}{2}\| x -
\x{k} \|_2^2.
\end{align}
Substituting $x = \x{k+1} \mydefn \x{k} - \alpha \big(
\grad{\fone}(\x{k}) - \gradftwo \big)$ in
equation~\eqref{EqnCvxMajorGrad} and simplifying yields
\begin{align*}
\f(\x{k}) - \f(\x{k+1}) \geq \big(\frac{1}{\alpha} -
\frac{\M_{\fone}}{2} \big) \|\x{k+1} - x^{k}\|_2^2 & = \alpha \big( 1 -
\frac{\alpha \M_{\fone}}{2} \big) \|\grad{\fone}(\x{k}) - \gradftwo
\|_2^{2} \\
& \stackrel{(i)}{\geq} \frac{\alpha}{2} \|\grad{\f}(\x{k}) \|_2^{2},
\end{align*}
where inequality (i) follows from the upper bound $\alpha \leq
\frac{1}{\M_{\fone}}$. This proves the second part of the stated
lemma.  As for the claim that the sequence remains in the interior of
the set $\range$, note that 
${\f(\x{k+1}) \leq \f(\x{k}) \leq \f(\x{0})}$, which
ensures that \mbox{$\x{k+1} \in \Level(\f(\x0)) \subset
  \intr(\range)$,} as claimed.

\subsection{Proof of Corollary~\ref{CorBacktracking}}
\label{AppCorBacktracking}
The first part of the proof builds on a simple 
application of Theorem~\ref{ThmGradient} and the definition 
of effective smoothness constant $\M^*_{\f}$. The second part 
of the proof utilizes a relation between the backtracking step size 
and the effective smoothness constant. For sake of completeness, we 
first describe the gradient descent backtracking algorithm.
\begin{algorithm}
\caption{ $\;\;\;\;\;$ Gradient descent with backtracking}
\label{AlgoBacktracking}
\begin{algorithmic}[1]
\STATE{Given an initial point $\x{0} \in \intr(\range)$ and parameter
  $\beta \in (0, 1)$:}
\FOR{ $k = 0, 1, 2, \ldots$ }
\STATE{Choose the smallest nonnegative integer $i_k$ such that the
  step size $\step{k} \mydefn \beta^{i_k}$ satisfies:
\vspace*{-0.5\baselineskip}
\begin{align}
\label{EqnBacktrackDescent}
    \f \big (\x{k} - \step{k} \grad \f(\x{k}) \big) & \leq \f(\x{k}) -
    \frac{\step{k}}{2} \| \grad \f(\x{k}) \|^2.
\end{align}
\vspace*{-0.5\baselineskip}
}
\STATE{Update $x^{k+1}=x^{k}- \step{k}\grad{\f(\x{k})}$.}
\ENDFOR
\end{algorithmic}
\end{algorithm}

\paragraph{Establishing the bound in~\eqref{EqnFastGradBound}:}
For any step size $\alpha$ in the interval $\big(0,
\frac{1}{\M_{\f^*}} \big)$, the definition of the effective smoothness
constant $\M_{\f^*}$ ensures the following property. There exists a
$\M_{\fone}$-smooth function $\fone$ and a convex-differentiable
function $\ftwo$ with $\f = \fone - \ftwo$, and the scalar
$\M_{\fone}$ satisfies ${\alpha < \frac{1}{\M_{\fone}} \leq
  \frac{1}{\M_{\f^*}}}$.  Since the function $\f$ is differentiable,
applying Algorithm~\ref{AlgoGradient} on the function $\f$ with the
decomposition $\f = \fone - \ftwo$ is equivalent to applying gradient
descent on $\f$. Furthermore, the step size $\alpha$ satisfies the
upperbound $\alpha \leq \frac{1}{\M_{\fone}}$, and applying the
bound~\eqref{EqnAlgoGradientGradBound} from Theorem~\ref{ThmGradient}
yields:
\begin{align}
\label{EqnGradBoundFast}
\avg{ \| \grad \f(\x{k}) \|_2^2 } & \leq \frac{2 \big(\f(\x{0})-
  \fstar \big)}{\alpha (k+1)}.
\end{align}

\paragraph{Establishing the backtracking bound~\eqref{EqnBacktrackingBdd}:}
For any fraction $\beta \in (0, 1)$, the definition of the effective
smoothness constant $\M_{\f^*}$ guarantees the following.  There
exists a $\M_{\fone}$-smooth function $\fone$ and a convex and
differentiable function $\ftwo$ with $\f = \fone - \ftwo$, and the
scalar $\M_{\fone}$ satisfies ${\beta \M_{\fone} \leq \M_{\f^*} \leq
  \M_{\fone}}$.  Comparing the descent step~\eqref{EqnDescentCondGrad}
from Lemma~\ref{LemDescentStep} and step~\eqref{EqnBacktrackDescent}
in Algorithm~\ref{AlgoBacktracking}, we conclude that the step size
$\step{k}$ satisfies the lower bound $\step{k} \geq \min \big\{ 1,
\frac{\beta}{\M_{\fone}} \big\} \geq \min \big\{ 1,
\frac{\beta^2}{\M^*_{\f}} \big\}$.  Applying the descent
step~\eqref{EqnBacktrackDescent} in Algorithm~\ref{AlgoBacktracking}
repeatedly and then utilizing the last lower bound on step size
$\step{k}$, we find that for all $k = 0,1,2\ldots$
\begin{align*}
    \f(\x{0}) - \f(\x{k+1}) \geq \sum_{i = 0}^{k} \frac{\step{k}}{2}
    \| \grad \f(\x{k}) \|^2 \geq \min \Big \{ \frac{1}{2},
    \frac{\beta^2 }{2\M^*_{\f}} \Big \} \sum_{i = 0}^{k} \| \grad
    \f(\x{k}) \|^2.
\end{align*}
Rearranging the last inequality yields:
\begin{align}
    \avg{\| \grad \f(\x{k}) \|^2} & \leq \frac{2\max \big\{
    1, \frac{\M^*_{\f}}{\beta^2} \big\} \big( \f(\x{0}) 
    - \f(\x{k+1}) \big)}
    {(k+1)} \nonumber \\
    & \stackrel{(i)}{\leq}
    \frac{2\max \big\{ 1, \M^*_{\f} \big\}
    \big( \f(\x{0}) - \fstar \big)}{\beta^2 (k+1)},
\end{align}
where step (i) follows since $\beta \in (0, 1)$, along with the lower
bound $\f(\x{k+1}) \geq \fstar$.


\subsection{Proof of Corollary~\ref{CorSaddlePoint}}
\label{AppCorSaddlePoint}

Based on Theorem 4 of Lee et al.~\cite{Jason-16}, it suffices to show
that the gradient map ${G(x) \mydefn x - \alpha \grad{f}(x)}$ is a
diffeomorphism for any step size $\alpha \in \big(0,
\frac{1}{\M_{\fone}} \big)$.  Recall that a map $G: \real^{\usedim}
\mapsto \real^{\usedim}$ is a diffeomorphism if the map $G$ is a bijection,
and both the maps $G$ and $G^{-1}$ are continuously differentiable.

\paragraph{Injectivity:}
We first prove that $G$ is an injective map. Consider a pair of vectors $x, y$
such that $G(x) = G(y)$; our aim is to prove that $x = y$.  The
condition $G(x) = G(y)$ is equivalent to ${x - y = \alpha \big(
\grad{f(x)} - \grad{f(y)} \big)}$, and we have that
\begin{align*}
\| x - y \|_2^2 & = \alpha \inprod{x - y}{\grad{f(x)} - \grad{f(y)}}
\\
& = \alpha \inprod{ x - y}{\grad{\fone(x)} - \grad{\fone(y)}} - \alpha
\inprod{x-y}{\grad{\ftwo(x)} - \grad{\ftwo(y)}} \\
& \stackrel{(i)}{\leq} \alpha \M_{\fone} \| x- y\|_2^2- \alpha
\inprod{x-y}{\grad{\ftwo(x)} - \grad{\ftwo(y)}} \\
& \stackrel{(ii)}{\leq} \alpha \M_{\fone} \| x- y\|_2^2.
\end{align*}
Here inequality (i) follows because the gradient $\grad{\fone}$ is
$\M_\fone$-Lipschitz by assumption; inequality (ii) follows from
the convexity of the function $\ftwo$, which implies the monotonicity
of the gradient $\grad{\ftwo}$.  Finally, since the step size 
$\alpha < \frac{1}{M_\fone}$ by
assumption, the inequality 
$\| x - y \|_2^2 \leq \alpha\M_{\fone} \| x - y \|_2^2 $ 
can hold only when $x = y$.
\paragraph{Surjectivity:}
For any fixed vector $y \in \real^\usedim$, consider the following
problem
\begin{align}
 \label{EqnOntoProof}  
  \arg \min_{x \in \real^\usedim } \; \Big \{ \frac{1}{2} \| x - y
  \|_2^2- \alpha \fone(x) + \alpha \ftwo(x) \Big \}.
\end{align}
Observe that for any step size $\alpha \in \big(0,
\frac{1}{\M_{\fone}} \big)$ and any fixed vector $y \in
\real^{\usedim}$, the map $x \mapsto \frac{1}{2} \| x - y \|_2^2-
\alpha \fone(x)$ is strongly convex, whence the map $x \mapsto
       {\frac{1}{2} \| x - y \|_2^2- \alpha \fone(x) + \alpha
         \ftwo(x)}$ is also strongly convex. Consequently, the convex
       problem~\eqref{EqnOntoProof} has a unique minimizer, and we
       denote it by $x_y$. In order to prove surjectivity of the map
       $G$, it suffices to show the point $x_y$ is mapped to the point
       $y$. Recalling the KKT conditions of the
       problem~\eqref{EqnOntoProof}, we have that
\begin{align*}
y = x_y - \alpha \nabla f(x_y) = G(x_y),
\end{align*}
 which completes the proof of surjectivity
of the map $G$. \\

Combining the injectivitivty and the surjectivity of
the map $G$, we conclude
that the inverse map $G^{-1}$ exists.
Next, let $DG(\cdot)$ denote the
Jacobian of the map $G$, then ${DG(x) = \text{I} - \alpha \nabla
^{2} \fone(x) +\alpha \nabla ^{2} \ftwo(x)}$. 
Since the function $\fone$ is $\M_{\fone}$-smooth, and the map
$G$ is continuously differentiable, standard application  
of the inverse-function theorem guarantees that for 
all step size $\alpha < \frac{1}{
\M_{g}}$, the inverse map $G^{-1}$ 
is continuously differentiable. 
Putting together the pieces, we conclude
that map $G^{-1}$ exists, and both the maps
$(G,G^{-1})$ are continuously differentiable.
Overall, we have established that the map $G$ is a
diffeomorphism, as claimed.


\subsection{Proof of Proposition~\ref{PropCCCP}}
\label{AppPropCCCP}

The CCCP update at step $(k+1)$ is given by $\x{k+1} = \arg \min
\limits_{x \in \range} \convmajor(x,\x{k})$, where
\begin{align}
  \label{EqnDefnH}
\convmajor(x,\x{k}) & \mydefn \fone(x) - \ftwo(\x{k}) - \inprod{\grad
  \ftwo(\x{k})}{x - \x{k}}.
\end{align}
Observe that step $(k+1)$ of Algorithm~\ref{AlgoGradient} is equivalent to
a gradient descent update with step size $\alpha$ on the map $x \mapsto
\convmajor(x, \x{k})$. Accordingly, if we define $y^{k+1} = \x{k} - \alpha
\nabla \convmajor(x, \x{k})$, then we have \mbox{$\convmajor(y^{k+1},\x{k}) \geq
  \convmajor(\x{k+1},\x{k})$}; moreover
 \begin{align}
\f(\x{k}) - \f(\x{k+1}) & \stackrel{(i)}{\geq} \convmajor(\x{k},\x{k}) -
\convmajor(\x{k+1},\x{k}) \nonumber \\
& \stackrel{(ii)}{\geq} \convmajor(\x{k},\x{k}) - \convmajor(y^{k+1},\x{k}) \nonumber \\
& \stackrel{(iii)}{\geq} \frac{1}{2\M_{\fone}} \| \grad \f(\x{k})
\|_2^2. \label{EqnCCCPDescentStep}
\end{align}
 Here inequality (i) follows from the equality $\convmajor(\x{k},\x{k}) =
 \f(\x{k})$ combined with the lower bound $\convmajor(x,\x{k}) \geq \f(x)$.
 Inequality (ii) follows since $\convmajor(y^{k+1},\x{k}) \geq
 \convmajor(\x{k+1},\x{k})$, and inequality (iii) follows from
 Lemma~\ref{LemDescentStep} with step size $\alpha =
 \frac{1}{\M_{\fone}}$. Note that equation~\eqref{EqnCCCPDescentStep}
 guarantees that the function value sequence $\{ \f(\x{k}) \}_{k \geq 0}$
 is decreasing. Since the
 function $\f$ is bounded below, we have that the
 sequence $\{ \f(\x{k}) \}_{k \geq 0}$ converges. 
 In order to prove that all limit points
 of the sequence $\big \{ \x{k} \big \}_{ k \geq 0 }$ are
 critical points, we follow the corresponding argument in proof of
 Theorem~\ref{ThmGradient}. This completes the proof of part (a) in
 Proposition~\ref{PropCCCP}.

Turning to part (b), unwrapping the recursive lower
bound~\eqref{EqnCCCPDescentStep} and re-arranging yields
inequality~\eqref{EqnCCCPBasic}. Finally, we turn to the proof of
inequality~\eqref{EqnCCCPStrong} under the additional strong convexity
condition. Under this condition, the map $x \mapsto \convmajor(x,\x{k})$ in
equation~\eqref{EqnDefnH} is $\mu$-strongly convex, so that
\begin{align}
\f(\x{k}) - \f(\x{k+1}) \geq \convmajor(\x{k},\x{k}) - \convmajor(\x{k+1},\x{k})
\stackrel{(i)}{\geq} \frac{\mu}{2}\| \x{k} - \x{k+1} \|_2^2,
  \end{align}
where inequality (i) follows from the strong convexity of 
the map $x \mapsto \convmajor(x,\x{k})$
and the fact that ${\grad{\convmajor(\x{k+1},\x{k})} = 0}$. Using this
equation repeatedly, we find that
\begin{align*}
\f(\x{0}) - \fstar \geq \f(\x{0}) - \f(\x{k+1}) & = \sum_{ j = 0 }^{k}
\big \{ \f(\x{j}) - \f(\x{j+1}) \big \} \\
& \geq \frac{\mu}{2} \sum_{ j = 0 }^{k} \| \x{j} - \x{j+1} \|_2^2 \\
& = \frac{\mu (k+1)}{2} \avg{ \| \x{k} - \x{k+1} \|_2^2 }.
\end{align*}
Rearranging the last inequality yields the
bound~\eqref{EqnCCCPStrong}.  Finally, let us reiterate that bounds
similar to~\eqref{EqnCCCPStrong} are known in the literature; see the
paper~\cite{Lanckreit09} for example. We provide the proof of
bound~\eqref{EqnCCCPStrong} for completeness.


\section{Proof of Theorem~\ref{ThmProx}}
\label{AppThmProx}


This proof shares some important steps with Theorem~\ref{ThmGradient},
but it requires a more refined argument due to the presence of a
non-smooth and non-continuous function $\nonsmoothf$.  We start by
stating an auxiliary lemma that underlies the proof of
Theorem~\ref{ThmProx}. In the proof, the subgradients of the convex
functions $\ftwo$ and $\nonsmoothf$ at a point $\x{k}$ are denoted by
$u^k$ and $v^k$, respectively.
\begin{lems}
\label{LemProxDescentStep}
Under the conditions of Theorem~\ref{ThmProx}, we have
   \begin{subequations}
   \begin{align}
       \x{k+1} = \x{k} - \alpha(\grad\fone(\x{k}) + v^{k+1} - u^{k} ),
       \qquad \mbox{and}
       \label{EqnProxUpdate}\\
       \f(\x{k}) - \f{(\x{k+1})} \geq \frac{1}{2\alpha} \| \x{k} - \x{k+1} \|_2^2
     \label{EqnDescentCondProx},
   \end{align} 
   \end{subequations}
valid for all $k = 0, 1, 2, \ldots$.  Furthermore, for any convergent
subsequence $\big\{ \x{k_j} \big\}_{j \geq 0}$ of the sequence $\big\{
\x{k} \big\}_{k \geq 0}$ with $\x{k_j} \rightarrow \bar{x}$, we have
   \begin{align*}
   \underset{ j \rightarrow \infty }{ \text{lim} } \;\;
   \nonsmoothf(\x{k_j+1}) = \nonsmoothf(\bar{x}).
   \end{align*}  
 \end{lems}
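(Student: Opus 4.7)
The plan is to derive the three assertions in order, exploiting at each step the $\M_{\fone}$-smoothness of $\fone$, the convexity of $\ftwo$ and $\nonsmoothf$, and the first-order optimality condition of the proximal subproblem on Line~3 of Algorithm~\ref{AlgoProx}.

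For the update identity~\eqref{EqnProxUpdate}, I would unwrap the definition of the proximal operator: the iterate $\x{k+1}$ minimizes the strongly convex map $y \mapsto \nonsmoothf(y) + \frac{1}{2\alpha}\| y - \x{k} + \alpha(\grad \fone(\x{k}) - u^k)\|_2^2$, so the first-order optimality condition immediately produces a subgradient $v^{k+1} \in \subgrad\nonsmoothf(\x{k+1})$ satisfying $v^{k+1} + \frac{1}{\alpha}(\x{k+1} - \x{k}) + \grad \fone(\x{k}) - u^k = 0$, which rearranges to the claimed identity.

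For the descent inequality~\eqref{EqnDescentCondProx}, I would combine three elementary bounds: the quadratic upper bound $\fone(\x{k+1}) \leq \fone(\x{k}) + \inprod{\grad \fone(\x{k})}{\x{k+1}-\x{k}} + \frac{\M_{\fone}}{2}\|\x{k+1}-\x{k}\|_2^2$ coming from smoothness of $\fone$; the convex subgradient inequality $-\ftwo(\x{k+1}) \leq -\ftwo(\x{k}) - \inprod{u^k}{\x{k+1}-\x{k}}$; and the convex subgradient inequality $\nonsmoothf(\x{k+1}) \leq \nonsmoothf(\x{k}) - \inprod{v^{k+1}}{\x{k}-\x{k+1}}$. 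Summing these three yields an upper bound on $\f(\x{k+1})$; substituting the identity $\grad \fone(\x{k}) - u^k + v^{k+1} = -\frac{1}{\alpha}(\x{k+1}-\x{k})$ from~\eqref{EqnProxUpdate} collapses the linear terms in $\x{k+1}-\x{k}$ into $-\frac{1}{\alpha}\|\x{k+1}-\x{k}\|_2^2$, producing $\f(\x{k}) - \f(\x{k+1}) \geq \bigl(\frac{1}{\alpha} - \frac{\M_{\fone}}{2}\bigr)\|\x{k+1}-\x{k}\|_2^2$. The step-size restriction $\alpha \leq 1/\M_{\fone}$ then delivers the coefficient $\frac{1}{2\alpha}$.

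For the final limit claim on $\nonsmoothf$, I would proceed via matching $\liminf$ and $\limsup$ bounds. Since $\f$ is bounded below on $\real^{\usedim}$, telescoping~\eqref{EqnDescentCondProx} gives $\sum_{k}\|\x{k+1}-\x{k}\|_2^2 < \infty$, so $\x{k_j+1} \to \bar{x}$ as well; lower semi-continuity of $\nonsmoothf$ then yields $\liminf_j \nonsmoothf(\x{k_j+1}) \geq \nonsmoothf(\bar{x})$. The matching $\limsup$ bound is the main obstacle. I would plug the test point $y = \bar{x}$ into the proximal minimization that characterizes $\x{k_j+1}$, obtaining $\nonsmoothf(\x{k_j+1}) \leq \nonsmoothf(\bar{x}) + R_j$, where $R_j$ is $\frac{1}{2\alpha}$ times the difference of two squared Euclidean norms whose linear parts involve $\bar{x} - \x{k_j}$ and $\x{k_j+1} - \x{k_j}$, respectively, together with the common vector $\alpha(\grad \fone(\x{k_j}) - u^{k_j})$. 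Since the convergent subsequence $\{\x{k_j}\}$ is bounded and $\ftwo$ is continuous convex, the subgradient sequence $\{u^{k_j}\}$ is bounded by local boundedness of $\subgrad \ftwo$; after passing to a subsubsequence I may assume $u^{k_j} \to \bar{u}$ with $\bar{u} \in \subgrad \ftwo(\bar{x})$ by graph closedness of the convex subdifferential. Using $\x{k_j} \to \bar{x}$, $\x{k_j+1} \to \bar{x}$ and continuity of $\grad \fone$, both squared norms in $R_j$ tend to the common limit $\alpha^2 \|\grad \fone(\bar{x}) - \bar{u}\|_2^2$, so $R_j \to 0$. Combining the two bounds gives $\lim_j \nonsmoothf(\x{k_j+1}) = \nonsmoothf(\bar{x})$.
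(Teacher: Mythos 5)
Your proof is correct and follows essentially the same route as the paper: the prox optimality condition for~\eqref{EqnProxUpdate}, the combination of the $\M_{\fone}$-smoothness bound with the two convex subgradient inequalities for~\eqref{EqnDescentCondProx} (the paper packages this as a quadratic majorant, which is the same computation), and lower semi-continuity plus testing the prox subproblem at $\bar{x}$ for the final limit. The only cosmetic difference is that in the last step you pass to a subsubsequence to make $u^{k_j}$ converge, whereas boundedness of $\{\grad\fone(\x{k_j}) - u^{k_j}\}$ alone already forces your remainder $R_j \to 0$, which is what the paper uses.
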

\noindent See Appendix~\ref{AppLemProxDescentStep} for the proof 
of this lemma.\\

\noindent We now prove Theorem~\ref{ThmProx} using
Lemma~\ref{LemProxDescentStep}.
 \paragraph{Convergence of function value:}
 We first prove that the sequence of function values
 $\{ f(\x{k}) \}_{k \geq 0}$
is convergent. Since $\fstar
\mydefn \min \limits_{x \in \mathbb{R}^{\usedim}}
f(x)$ is finite by assumption, the sequence $\{ f(\x{k}) \}_{k
\geq 0}$ is bounded below. If $\x{k} = \x{k+1}$ for some $k$, 
the convergence of
the sequence $\big\{ f(\x{k}) \big\}_{k \geq 0}$ is trivial. Hence, we
may assume without loss of generality that 
$\x{k} \neq \x{k+1}$ for all $k = 0,1,2,..$. In that case,
inequality~\eqref{EqnDescentCondProx}  ensures that $f(\x{k}) >
f(\x{k+1})$, and consequently, there must 
exist some scalar $\bar{f}$ such that $\lim
\limits_{k \rightarrow \infty} f(\x{k}) = \bar{f}$.


\paragraph{Stationarity of limit points:} Next, we 
establish that any limit point 
of the sequence $\big\{ \x{k} \big\}_{k\geq 0}$ must be
stationary. Consider a subsequence $\big\{ \x{k_j}
\big\}_{j\geq 0}$ such that $\x{k_j} \rightarrow \bar{x}$. Let
$\big\{ v^{k_j} \big\}_{j \geq 0}$ and $\big\{
u^{k_j} \big\}_{j \geq 0}$ be the associated sequence of
subgradients. It suffices to exhibit subgradients $\bar{v}\in \partial
\nonsmoothf(\bar{x})$ and $\bar{u}\in \partial \ftwo(\bar{x})$ such
that, $\grad \fone(\bar{x}) + \bar{v} - \bar{u} = 0$.\\

\noindent \underline{Step 1:} \emph{Existence of subgradient
  $\bar{u}$}: Since the sequence $\big\{ \x{k_j}
\big\}_{j \geq 0}$ is convergent, we may assume that the
sequence $\big\{ \x{k_j} \big\}_{j \geq 0}$ is bounded,
and it lies in a compact set $\CompactSet$. The function $\ftwo$ is
convex continuous, and we have that 
$\ftwo(\x{k_j}) \rightarrow \ftwo(\bar{x})$, 
and the subgradient sequence
$\big\{ u^{k_j} \big\}_{j \geq 0}$ is bounded; see
example 9.14 in the book~\cite{rockafellar2009variational}.  Passing
to a subsequence if necessary, we may assume that 
the sequence $\big\{
u^{k_j} \big\}_{j \geq 0}$ converges to $\bar{u}$. 
Putting together these pieces,
we conclude that
$(\x{k_j}, u^{k_j}, \ftwo(\x{k_j})) \rightarrow 
(\bar{x}, \bar{u}, \ftwo(\bar{x}))$ as $j \rightarrow
\infty$; consequently,
the graph continuity of limiting sub-differentials guarantees
that $\bar{u} \in \partial \ftwo(\bar{x})$ (see
Appendix~\ref{AppFrechetSubdiff} for graph continuity). \\

\noindent \underline{Step 2:} \emph{Existence of subgradient
  $\bar{v}$}: In order to complete the proof, it suffices to show that
the vector \mbox{$\bar{v} \mydefn -\grad \fone(\bar{x}) + \bar{u}$}
belongs to the subgradient set $\partial \nonsmoothf (\bar{x})$.
Since the norm of successive difference ${\| \x{k_j} - \x{k_j+1}
  \|_2}$ converges to zero, Lemma~\ref{LemProxDescentStep} yields
  $\| \grad \fone(\x{k_j}) + v^{k_j+1} - u^{k_j} \|_2
\rightarrow 0$, and $\x{k_j+1} \rightarrow \bar{x}$. Furthermore,
continuity of the gradient $\grad \fone$ yields $\grad \fone(\x{k_j})
\rightarrow \grad\fone(\bar{x})$, and step 1 above guarantees $u^{k_j}
\rightarrow \bar{u}$.  Combining these two facts with $\| \grad
\fone(\x{k_j}) + v^{k_j+1} - u^{k_j} \|_2 \rightarrow 0$, we obtain
$v^{k_j+1} \rightarrow \bar{v} := -\grad \fone(\bar{x}) +
\bar{u}$, and by Lemma~\ref{LemProxDescentStep}, we have
$\nonsmoothf(\x{k_j+1}) \rightarrow \nonsmoothf(\bar{x})$.
Putting together the pieces, we conclude that 
$(\x{k_j+1}, v^{k_j+1}, \nonsmoothf(\x{k_j+1})) \rightarrow
(\bar{x}, \bar{v}, \nonsmoothf(\bar{x}))$. Consequently,
the graph continuity of limiting subdifferentials
guarantees that $\bar{v} \in \partial \nonsmoothf(\bar{x})$ (see
Appendix~\ref{AppFrechetSubdiff} for graph continuity). \\

\noindent Finally, the subgradients $\bar{u} \in \partial \ftwo(\bar{x})$ and
$\bar{v} \in \partial \nonsmoothf(\bar{x})$ obtained from steps 1 and
and 2 respectively satisfy the relation \mbox{$\grad \fone(\bar{x}) + \bar{v}
  - \bar{u} = 0$}, which establishes the claimed stationarity of $\bar{x}$.


\paragraph{Establishing the bound~\eqref{EqnAlgoProxDiffBound}:}

Next, we establish the claimed bound~\eqref{EqnAlgoProxDiffBound} on
the averaged squared successive difference. Recalling that $\fstar
\mydefn \min \limits_{x \in \mathbb{R}^{\usedim} } f(x)$ is finite, we
have
\begin{align}
f(\x{0}) - \fstar \geq f(\x{0}) - f(x^{k+1}) & =
\sum_{j=0}^{k} f(x^{j}) - f(x^{j+1}) \nonumber \\
& \stackrel{(i)}{\geq} \frac{1}{2\alpha} \sum_{j=0}^{k} \|
\x{j} - \x{j+1} \|_2^2 \nonumber  \\ 
& = \frac{(k+1)}{2\alpha} 
\avg{ \|\x{k} - \x{k+1} \|_2^2 },
\label{EqnProxDiffBoundStep}
\end{align}
where step (i) follows from equation~\eqref{EqnDescentCondProx}.
Rearranging the last inequality
yields the claimed bound~\eqref{EqnAlgoProxDiffBound}
on the averaged squared successive difference.


\paragraph{Establishing the bound~\eqref{EqnAlgoProxGradBound}:}
 In order to establish the bound~\eqref{EqnAlgoProxGradBound} on
the averaged squared gradient, we start
by establishing the following upper bound on the gradient-norm
$\| \grad \f(\x{k+1}) \|_2$:
\begin{align}
\label{EqnProxGradUB1}
  \| \grad \f(\x{k+1}) \|_2 \leq 
  \big(\M_{\fone} + \M_{\ftwo} + 
  \frac{1}{\alpha}\big) \| \x{k} - \x{k+1} \|_2. 
\end{align}
Recall that the function $\ftwo$ is $\M_{\ftwo}$
smooth by assumption, and  we have
\begin{align*}
\| \grad \fone(\x{k+1}) - \grad \ftwo(\x{k+1}) + v^{k+1} \|_2 &
\stackrel{(i)}{=} \| \grad \fone(\x{k+1}) - \grad \ftwo(\x{k+1}) +
\big( \grad \ftwo(\x{k}) - \grad \fone(\x{k}) + \frac{1}{\alpha} \big(
\x{k} - \x{k+1} \big) \big) \|_2 \\ 
& \stackrel{(ii)}{\leq} \| \grad{\fone} (\x{k}) -
\grad \fone(\x{k+1}) \|_2 + \| \grad{\ftwo} (\x{k}) - \grad
\ftwo(\x{k+1}) \|_2 + \frac{1}{\alpha} \| \x{k} - \x{k+1} \|_2 \\ &
\stackrel{(iii)}{\leq} \big( \M_{\fone} + \M_{\ftwo} + \frac{1}{\alpha}
\big) \| \x{k} - \x{k+1} \|_2.
\end{align*}
Here step (i) follows from the update equation of $\x{k+1}$ in
 Lemma~\ref{LemProxDescentStep} 
 and from differentianility of the function $\fone$; 
step (ii) follows from triangle inequality, and
step (iii) follows from the smoothness of the functions  $\fone$ and
$\ftwo$. Putting together
 the bounds~\eqref{EqnProxGradUB1} 
 and~\eqref{EqnProxDiffBoundStep}, 
 we obtain
 the desired bound~\eqref{EqnAlgoProxGradBound}.


\subsection{Proof of Lemma~\ref{LemProxDescentStep}}
\label{AppLemProxDescentStep}

Here we prove the claims of Lemma~\ref{LemProxDescentStep}.

\paragraph{Establishing update equation~\eqref{EqnProxUpdate}:}
 Recalling the convex majorant defined in
equation~\eqref{EqnCvxMajorGrad}, we define a convex majorant
$\convmajor(\cdot,\x{k})$ of the function $\f$ as follows:
\begin{align}
\label{EqnCvxMjrProx}
\convmajor(x,\x{k} ) = \fone(\x{k}) - \ftwo(\x{k}) +
\inprod{\grad{\fone(\x{k}) } - \gradftwo}{x - \x{k}} +
\frac{1}{2\alpha}\| x - \x{k} \|_2^2 + \nonsmoothf(x),
\end{align}
where subgradient $\gradftwo \in \partial \ftwo(\x{k})$,
and the step size $\alpha$ 
satisfies  $0 < \alpha \leq
\frac{1}{\M_{\fone}}$. Observe that minimizer
of the convex function 
$ x \mapsto \convmajor(x,\x{k} )$ over $x \in \real^{\usedim}$ 
is same as 
$\text{prox}_{1/\alpha}^{\nonsmoothf} \big( \x{k}
 - \alpha( \grad \fone(\x{k}) - u^{k} ) \big) $,
which implies that $\x{k+1}$ is a minimizer of
the convex function
$x \mapsto\convmajor(x,\x{k} )$ over $x \in \real^{\usedim}$.
Consequently, the optimality condition of 
$\x{k+1}$ guarantees that there exists subgradient 
$\gradg{k+1} \in \subgrad{g(\x{k+1})}$ satisfying the following equation: 
\begin{align}
\label{EqnProxOptCond}
\grad{\fone(\x{k})} - u^{k} + v^{k+1} + \frac{1}{\alpha} \big(
\x{k+1} - \x{k} \big) = 0.
\end{align}
Rewriting the above equation yields the update 
equation~\eqref{EqnProxUpdate}.
\paragraph{Establishing the descent step~\eqref{EqnDescentCondProx}:} Note that
\begin{align}
\f(\x{k}) - \convmajor(\x{k+1},\x{k}) & \stackrel{(i)}{ \geq}
\fone(\x{k}) - \ftwo(\x{k}) + \nonsmoothf(\x{k+1}) +
\inprod{\gradg{k+1}} {\x{k} - \x{k+1}} - \convmajor(\x{k+1},\x{k})
\nonumber \\
& \stackrel{(ii)}{\geq} \inprod{\grad{\fone(\x{k})} - \gradftwo +
  \gradg{k+1}}{\x{k} - \x{k+1}} - \frac{1}{2\alpha}\| \x{k} - \x{k+1}
\|_2^2 \nonumber \\
& \stackrel{(iii)}{\geq} \frac{1}{2\alpha}\| \x{k}- \x{k+1} \|_2^2.
\end{align}
Here step (i) \mbox{follows}
from the convexity of the function $\nonsmoothf$; step (ii) follows by
substituting $\convmajor(\x{k+1},\x{k})$ from equation~\eqref{EqnCvxMjrProx}.
 In step (iii), we  use the
relation \mbox{$\grad{\fone(\x{k})} - \gradftwo + \gradg{k+1} =
  \frac{1}{\alpha} \big( \x{k} - \x{k+1} \big)$}, which follows from
equation~\eqref{EqnProxOptCond}. 
Finally, recall that the function
$ x \mapsto \convmajor(x,\x{k})$ is a
majorant for the function $\f$, and we deduce that
\begin{align}
  \f(\x{k}) - \f(\x{k+1}) &\geq f(\x{k}) - q(\x{k+1},\x{k}) \nonumber
  \\
\label{EqnProxDescentStep}  
  & \geq \frac{1}{2\alpha}\| \x{k} - \x{k+1} \|_2^2.
\end{align}
\paragraph{Limit of the sequence 
$\big\{ \nonsmoothf(\x{k_j+1}) \big\}_{j \geq 0}$:}
Consider any convergent subsequence $\big\{ \x{k_j} \big\}_{j \geq 0}$ 
of the sequence $\big\{ \x{k} \big\}_{k \geq 0}$  with 
${\x{k_j} \rightarrow \bar{x}}$. 
Recall that $\f^{*} = \inf_{x \in \real^{\usedim}} \f(x)$
is finite by assumption; combining this with 
 step~\eqref{EqnDescentCondProx} in
Lemma~\ref{LemProxDescentStep}, we have that
$\| \x{k} - \x{k+1} \|_2 \rightarrow 0$, 
and that $\x{k_j + 1} \rightarrow \bar{x}$. 
The function  $\nonsmoothf$  is lower semi-continuous, and we have
 \begin{align}
 \label{EqnProxLimInf}
 \underset{ j \rightarrow \infty }{\text{lim inf}} \;\;
 \nonsmoothf(\x{k_j+1}) \geq \nonsmoothf(\bar{x}).
\end{align}
Since we already proved $\x{k_j+1}$ is a minimizer of the  
convex function $x \mapsto \convmajor(x,\x{k_j})$,
we have ${\convmajor(\x{k_j+1},\x{k_j} ) \leq \convmajor(\bar{x},\x{k_j})}$.
Unwrapping the last inequality and taking $\text{lim sup}$ yields 
\begin{align}
  \underset{ j \rightarrow \infty }{ \text{lim sup} }\;\;
  \nonsmoothf(\x{k_j+1}) & \stackrel{(i)}{\leq} \nonsmoothf(\bar{x}) +
  \underset{ j \rightarrow \infty }{ \text{lim sup} }\;\; \Big(
  \inprod{\bar{x} - \x{k_j}}{\grad \fone(\x{k_j}) - u^{k_j}} +
  \frac{1}{ 2 \alpha} \| \x{k_j} - \bar{x} \|_2^2 \Big) \nonumber \\ &
  \stackrel{(ii)}{=} \nonsmoothf(\bar{x}) \label{EqnProxLimSup}.
\end{align}
Here step (i) holds since $\|\x{k_j} - \x{k_j+1}\|_2 \rightarrow
0$, and the sequence $\big\{ \grad{\fone(\x{k_j})} \big\} - u^{k_j} \big
\}_{j \geq 0}$ is bounded---which we prove shortly; step (ii) above follows from $\x{k_j} 
\rightarrow \bar{x}$ and boundedness of the sequence $\big \{ \grad{\fone(\x{k_j})} -
u^{k_j} \big \}_{j \geq 0}$. Combining
equations~\eqref{EqnProxLimInf} and~\eqref{EqnProxLimSup} we obtain
the claimed result.
\paragraph{Boundedness of the sequence $\big\{ \grad
 \fone(\x{k_j}) - u^{k_j} \big\}_{ j \geq 0}$:} 
In order to prove the boundedness of the sequence 
$\big\{ \grad
 \fone(\x{k_j}) - u^{k_j} \big\}_{ j \geq 0}$,
 it suffices to show that the gradient sequence 
 $\big\{ \grad
 \fone(\x{k_j}) \big\}_{ j \geq 0 }$ and the 
 sub-gradient sequence $\big\{ u^{k_j} 
 \big\}_{j \geq 0}$ are bounded. Recall that
 $\x{k_j} \rightarrow \bar{x}$, 
 and we have that the sequence $\{\x{k_j}\}_{j \geq 0}$ is bounded. 
Consequently, from the smoothness of
the function $\fone$, we find that the sequence $\big\{ \grad
 \fone(\x{k_j}) \big\}_{ j \geq 0 }$ is bounded. 
Finally, note that
the function $\ftwo$ is convex continuous, and
we already argued that the sequence 
$\{\x{k_j}\}_{j \geq 0}$ is bounded.
Combining this with example 9.14
in the book~\cite{rockafellar2009variational}, we conclude that the
subgradient sequence $\big\{ u^{k_j}
\big\}_{j \geq 0}$ bounded. 


\section{Proofs related to Algorithm~\ref{AlgoFW}}

In this appendix, we provide the proof of Theorem~\ref{ThmFW}, which
applies to the Frank-Wolfe based method (Algorithm~\ref{AlgoFW}).
We also provide an upper bound on the
generalized curvature constant $\curvatureconst{\f}$, which is stated
in Lemma~\ref{LemCurvatureUB}.

\subsection{Proof of Theorem~\ref{ThmFW}}
\label{AppThmFW}

Let $\x{\gamma} \mydefn \x{k} + \gamma d^{k}$, where the
difference $d^{k}$ is defined as $d^{k} 
\mydefn s^{k} - \x{k}$, and the vector $s^{k}$ is
the Frank-Wolfe direction defined in \mbox{Algorithm~\ref{AlgoFW}}.
Unpacking the
definition~\eqref{EqnCurvConst} of the generalized curvature constant
$\curvatureconst{\f}$, we find that for any scalar 
$\gamma \in (0,1)$ and subgradient $\gradftwo
\in \subgrad{\ftwo(\x{k})}$, we have the following:
\begin{align}
f(\x{\gamma}) & \leq f(\x{k}) + \gamma \inprod{\grad{\fone}(\x{k}) -
  \gradftwo}{d^{k}} + \frac{\gamma ^{2}}{2} \curvatureconst{\f}
\nonumber \\
\label{EqnFWDescentStep}
& \stackrel{(i)}{\leq} f(\x{k}) - \gamma \fwgap{k} +
\frac{\gamma^{2}}{2} C_0.
\end{align}
Here inequality (i) is obtained by substituting $\fwgap{k} =
\inprod{d^{k}}{\gradftwo - \grad{\fone}(\x{k})}$ and using $C_0 \geq
\curvatureconst{\f}$. Substituting $\gamma = \gamma^{k} \mydefn \min \big \{
\frac{\fwgap{k}}{C_0},1 \big \}$ in
equation~\eqref{EqnFWDescentStep} yields
\begin{align}
f(x^{k+1}) \leq f(x^{k}) - \min \big\{ \frac{(\fwgap{k})^2}{2
  C_0}, \fwgap{k} - \frac{C_0}{2}\mathbbm{1}_{ \big \{ \fwgap{k}
  > C_0 \big \} } \big\}, \label{EqnFWDescentStep2}
\end{align}
where  $\x{k+1} = \x{k} + \gamma^{k} d^k$.   Let $\minfwgap{k} \mydefn
\min_{0\leq \j \leq k} \fwgap{\j}$ denote the minimum FW gap up to iteration $k$,
then repeated application of equation~\eqref{EqnFWDescentStep2} yields
\begin{align}
\f(\x{0}) - \f(\x{k+1}) & \geq \sum_{\j=0}^{k} \min \big\{
\frac{(\fwgap{\j})^2}{2C_0}, \fwgap{\j} - \frac{C_0}{2}\mathbbm{1}_{
  \big \{ \fwgap{\j} > C_0 \big \} } \big\}
\nonumber \\ & \geq (k+1) \min \big\{ \frac{( \minfwgap{k}
  )^2}{2C_0}, \minfwgap{k} - \frac{C_0}{2} \mathbbm{1}_{ \big \{
  \minfwgap{k} > C_0 \big \} } \big\} \label{EqnFWDescentStep3}.
\end{align}
Rewriting the last equation yields the following upper bound
\begin{align*}
\min \big\{ \frac{( \minfwgap{k} )^2}{2C_0},\minfwgap{k} -
\frac{C_0}{2} \mathbbm{1}_{ \big \{ \minfwgap{k} > C_0
  \big \} } \big\} \stackrel{(i)}{\leq} \frac{ \f(\x{0}) - \fstar }{k+1},
\end{align*}
where step (i) follows from the lower bound $\f(\x{k+1}) \geq
\fstar \mydefn \min_{x \in \range}\;\f(x)$.
Considering the cases where $\minfwgap{k} \leq C_0$ and $\minfwgap{k}>
C_0$ separately, it can be shown following Lacoste-Julien~\cite{Julien16} that
\begin{align*}
\minfwgap{k}\leq
\begin{cases}
\frac{2 (\f(\x{0}) - \fstar)}{\sqrt{k+1}}\;\;\;\;\;\;\;\;
\text{for}\;\; k+1 \leq \frac{2 (\f(\x{0}) -
  \fstar)}{C_0}\\ \;\;\\ \sqrt{\frac{2C(\f(\x{0}) - \fstar)}{k+1}}\;\;
\text{ \; otherwise }.\\
\end{cases}
\end{align*}
Finally, note that $\sqrt{2C_0(\f(\x{0}) - \fstar)} \leq
\max\{2(\f(\x{0}) - \fstar), C_0 \}$ and we conclude that
\begin{align*}
  \minfwgap{k} \leq\frac{\max\big\{ 2 \big( \f(\x{0}) - \fstar
    \big),C_0 \big\} }{\sqrt{k+1}}.
\end{align*}


\subsection{Upper bound on generalized curvature constant}

In this section, we provide an upper bound on the generalized
curvature constant $\curvatureconst{\f}$, where the function $\f$ is a
difference of a differentiable function $\fone$ and a continuous
function $\ftwo$. For better readability, we use
$\curvatureconst{\fone - \ftwo}$ instead of $\curvatureconst{\f}$ in
the following lemma.
\begin{lems}
  \label{LemCurvatureUB}
Suppose that the function $\fone$ is continuously differentiable and
function $\ftwo$ is convex, then we have $\curvatureconst{\fone -
  \ftwo} \leq \curvatureconst{\fone}$.  Furthermore, if the function
$\fone$ is $\M_{\fone}$-smooth, and the function $\ftwo$ is a $\mu$
strongly convex function with $0 \leq \mu < \M$, then
\begin{align}
\label{EqnCurvUB}
  \curvatureconst{\fone - \ftwo} \leq \big( \M - \mu\big)
  \times\big(\diam_{\| \cdot \|_2}(\range) \big)^{2},
\end{align}
  where $\diam_{\| \cdot \|_2 }$ denote the diameter of the set
  $\range$, measured in $\ell_2$ norm.
  \end{lems}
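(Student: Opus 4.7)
\textbf{Proof proposal for Lemma~\ref{LemCurvatureUB}.}

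The plan is to directly manipulate the quantity inside the supremum in the definition~\eqref{EqnCurvConst} of $\curvatureconst{\fone - \ftwo}$ and bound it, pointwise in $(x,y)$ and in the subgradient $u \in \subgrad \ftwo(x)$. Writing $\f = \fone - \ftwo$, the bracketed quantity decomposes cleanly as
\begin{align*}
f(y) - f(x) - \inprod{y-x}{\grad \fone(x) - u}
&= \underbrace{\bigl[\fone(y) - \fone(x) - \inprod{y - x}{\grad \fone(x)} \bigr]}_{T_1(x,y)} \\
&\qquad - \underbrace{\bigl[\ftwo(y) - \ftwo(x) - \inprod{y-x}{u} \bigr]}_{T_2(x,y,u)}.
\end{align*}
This decomposition is the backbone of both claims; it separates the curvature contribution of the differentiable part from that of the convex part.

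For the first claim, observe that $\ftwo$ is convex and $u \in \subgrad \ftwo(x)$, so the subgradient inequality immediately gives $T_2(x,y,u) \geq 0$. Hence the full bracket is upper bounded by $T_1(x,y)$. Multiplying through by $2/\gamma^2$ and taking the supremum over $(x,y) \in S_\gamma$ and $u \in \subgrad\ftwo(x)$ yields
\begin{align*}
\curvatureconst{\fone - \ftwo}
\;\leq\; \sup_{(x,y) \in S_\gamma} \frac{2}{\gamma^2}\, T_1(x,y)
\;=\; \curvatureconst{\fone},
\end{align*}
where the last equality is just the definition of $\curvatureconst{\fone}$ (recovered from~\eqref{EqnCurvConst} by setting $\ftwo \equiv 0$, per the remark immediately after the curvature constant is introduced).

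For the second claim, I would refine both bounds quantitatively. Smoothness of $\fone$ gives the standard quadratic upper bound $T_1(x,y) \leq \tfrac{\M}{2}\|y - x\|_2^2$, while $\mu$-strong convexity of $\ftwo$ sharpens the subgradient inequality to $T_2(x,y,u) \geq \tfrac{\mu}{2}\|y-x\|_2^2$. Combining,
\begin{align*}
f(y) - f(x) - \inprod{y-x}{\grad \fone(x) - u} \;\leq\; \frac{\M - \mu}{2}\,\|y-x\|_2^2.
\end{align*}
Now use the parametrization built into $S_\gamma$: there exists $w \in \range$ with $y - x = \gamma(w - x)$, so $\|y - x\|_2^2 = \gamma^2 \|w - x\|_2^2 \leq \gamma^2 \bigl(\diam_{\|\cdot\|_2}(\range)\bigr)^2$. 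Dividing the inequality by $\gamma^2/2$ cancels the $\gamma^2$ factor and gives the pointwise bound $(\M - \mu)\bigl(\diam_{\|\cdot\|_2}(\range)\bigr)^2$, and since this holds uniformly over $(x,y,u)$, the supremum is bounded by the same quantity, establishing~\eqref{EqnCurvUB}.

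There is no real obstacle here; the only mild care needed is to track the two different uses of the symbol $u$ in the paper (as a subgradient in the definition of $\curvatureconst{\f}$ versus as a point in $\range$ in the definition of $S_\gamma$), and to note that the hypothesis $\mu < \M$ ensures $\M - \mu > 0$ so the bound is meaningful (and automatically consistent with the known case $\mu = 0$ recovering Jaggi's bound~\cite{jaggi-11}).
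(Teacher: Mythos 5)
Your proposal is correct and follows essentially the same route as the paper: the same decomposition of the bracket into the $\fone$-curvature term and the term $\ftwo(y)-\ftwo(x)-\inprod{y-x}{u}$, which is dropped via the subgradient inequality for the first claim and sharpened to $\tfrac{\mu}{2}\|y-x\|_2^2$ via strong convexity for the second, combined with the smoothness bound and the $y-x=\gamma(w-x)$ parametrization. No gaps; your handling of the diameter step is if anything slightly more careful than the paper's.
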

\paragraph{Comments:} The first upper bound on
$\curvatureconst{\fone - \ftwo}$ in Lemma~\ref{LemCurvatureUB} posits
that the curvature constant of the difference function $\fone - \ftwo$ is
upper bounded by curvature constant of the function $\fone$, whenever
the second function~$\ftwo$ is convex. Let us try to understand
an implication of
this result through an example. One of the well-known upper bound of
curvature constant for $\M_{\fone}$-smooth 
function $\fone$ is $\M_{\fone}
\times\big(\diam_{\| \cdot \|_2}(\range) \big)^{2}$; see the paper by
Jaggi~\cite{jaggi-11}.  Now consider continuously differentiable
functions $\fone$ and $\ftwo$ such that the function $\fone$ is
$\M_{\fone}$-smooth and the function
$\ftwo$ is non-smooth and convex. It
can be verified that the difference function $\fone - \ftwo$
is \emph{not} smooth in this case; consequently, the earlier
bound on curvature constant
$\curvatureconst{\fone - \ftwo}$ is $\infty$, whereas
Lemma~\ref{LemCurvatureUB} ensures that
\begin{align*}
\curvatureconst{\fone - \ftwo} \leq \curvatureconst{\fone } \leq
\M_{\fone} \times\big(\diam_{\| \cdot \|_2}(\range) \big)^{2}.
\end{align*}

\paragraph{Proof of the upper bound
  $\curvatureconst{\fone - \ftwo} \leq \curvatureconst{\fone}$:}
Unwrapping the definition of $\curvatureconst{\fone - \ftwo}$,
we have
\begin{align}
\curvatureconst{\fone - \ftwo} & = \sup_{ \substack{ x,y \in
    c_{\gamma} \\ u \in \subgrad{\ftwo(x)} } }\frac{2}{\gamma^{2}}
\big[ f(y) - f(x)- \inprod{y - x}{\grad{\fone(x)} - u}\big] \nonumber \\
\label{EqnDefnCurvConst} 
& = \sup_{ \substack{ x,y \in c_{\gamma} \\ u \in \subgrad{\ftwo(x)} }
}\frac{2}{\gamma^{2}}\big[ \fone(y) - \fone(x)-
  \inprod{y - x}{\grad{\fone(x)}} - \Delta_{\ftwo}(y,x,u) \big] \\
& \stackrel{(i)}{\leq} \underbrace{\sup_{ \substack{ x,y \in
      c_{\gamma} } }\frac{2}{\gamma^{2}}\big[ f(y) - f(x) -
    \inprod{y - x}{\grad{\fone(x)}} \big]}_{\curvatureconst{\fone}},
\nonumber
\end{align}
where $\Delta_{\ftwo}(y,x,u) \mydefn \ftwo(y) - \ftwo(x)-
\inprod{y - x}{u}$. Here inequality (i) follows by noting
that, for any pair of points $x,y \in
\range$, and for any convex function $\ftwo$ with $u \in \partial
\ftwo(x)$, we have $\Delta_{\ftwo}(y,x,u) \geq 0$ .


\paragraph{Proof of upper bound~\eqref{EqnCurvUB}:}

Suppose in addition, the function $\fone$ is $\M_{\fone}$-smooth, and
the function $\ftwo$ is $\mu$-strongly convex with $\mu \geq 0$. Then
we have ${\Delta_{\ftwo}(y,x,u) \geq \frac{\mu}{2}\| x - y\|_2^2}$,
and equation~\eqref{EqnDefnCurvConst} yields
\begin{align*}
\curvatureconst{\fone - \ftwo} & \leq \sup_{ \substack{ x,y \in
    c_{\gamma} } }\frac{2}{\gamma^{2}}\big[ \fone(y) - \fone(x) -
  \inprod{y - x}{\grad{\fone(x)}} - \frac{\mu}{2}\| x - y\|_2^2 \big] \\
& \stackrel{(i)}{\leq} \sup_{ \substack{ x,y \in c_{\gamma} }
}\frac{2}{\gamma^{2}}\bigg[ \frac{\M_{\fone} - \mu}{2}\| x - y\|_2^2
  \bigg],
\end{align*}
where step (i) follows since the function $\fone$ is 
$\M_{\fone}$-smooth.
Substituting $y - x = \gamma s$ with $s \in \range$, we obtain
the claimed upper bound
\begin{align*}
\curvatureconst{\fone - \ftwo} \leq (\M_{\fone} - \mu) \times
\big(\diam_{\| \cdot \|_2}(\range) \big)^{2}.
\end{align*}


\section{Proofs of faster rates under Assumption KL}

In this appendix, we prove our results on improved convergence rates
for functions which satisfy Assumption KL---as stated in
Theorems~\ref{ThmGradSubanal} and~\ref{ThmProxSubanal}.  We begin by
stating an auxiliary lemma that underlies the proofs of
Theorems~\ref{ThmGradSubanal} and~\ref{ThmProxSubanal}.

\begin{lems}
\label{LemUnifKL_Ineq}
Under assumptions of either Theorem~\ref{ThmGradSubanal} or
Theorem~\ref{ThmProxSubanal}, there exists constants $\theta \in
[0,1)$, $\C{} > 0$ and positive integer $k_1$ such that for all $k
  \geq k_1$, we have
\begin{align*}
|\f(\x{k}) - \bar{f}|^{\theta} \leq \C{}\| \grad \f(\x{k}) \|_2,
\end{align*} 
where $f(\x{k}) \downarrow \bar{f}$. Furthermore, if $\x{k}
\rightarrow \bar{x}$, then the parameters $(\theta,C)$, obtained from
KL-inequality of the function $\f$ at the point $\bar{x}$, satisfy the
above inequality.
\end{lems}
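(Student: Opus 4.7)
The plan is to (i) show that $\f(\x{k}) \downarrow \bar{f}$ and that $\f \equiv \bar{f}$ on the set of limit points of $\{\x{k}\}$, (ii) cover the (compact) limit set by finitely many neighborhoods on which the KL inequality holds with possibly different parameters, and (iii) uniformize the exponent and the constant over this finite cover, exploiting that $|\f(\x{k}) - \bar{f}| \to 0$.

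First, I would use boundedness of $\{\x{k}\}$ to deduce that the limit-point set $\Omega$ is non-empty and compact. By Theorems~\ref{ThmGradient} and~\ref{ThmProx}, every $\bar{x} \in \Omega$ is a critical point of $\f$ and the decreasing sequence $\{\f(\x{k})\}$ converges to some $\bar{f}$. I would then verify $\f \equiv \bar{f}$ on $\Omega$: under Assumption GR this is immediate because $\f = \fone - \ftwo$ is continuous; under Assumption PR the function $\nonsmoothf$ is only lower semi-continuous, but Lemma~\ref{LemProxDescentStep} gives $\nonsmoothf(\x{k_j+1}) \to \nonsmoothf(\bar{x})$ along any subsequence with $\x{k_j} \to \bar{x}$, and combined with $\|\x{k} - \x{k+1}\|_2 \to 0$ (a consequence of~\eqref{EqnAlgoProxDiffBound}) this also yields $\nonsmoothf(\x{k_j}) \to \nonsmoothf(\bar{x})$, so that continuity of $\fone - \ftwo$ forces $\f(\bar{x}) = \bar{f}$.

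Next, Assumption KL supplies, for each $\bar{x} \in \Omega$, an exponent $\theta_{\bar{x}} \in [0,1)$, a constant $C_{\bar{x}} > 0$, and an open neighborhood $V_{\bar{x}}$ such that $|\f(x) - \bar{f}|^{\theta_{\bar{x}}} \leq C_{\bar{x}} \|\grad \f(x)\|_2$ on $V_{\bar{x}}$ (using $\f(\bar{x}) = \bar{f}$ from the previous step). Compactness of $\Omega$ produces a finite subcover $V_{\bar{x}_1}, \ldots, V_{\bar{x}_m}$, and I would set $\theta \mydefn \max_i \theta_{\bar{x}_i} \in [0,1)$ and $C \mydefn \max_i C_{\bar{x}_i}$. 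A standard Bolzano--Weierstrass argument shows $\dist(\x{k}, \Omega) \to 0$, and together with $\f(\x{k}) \to \bar{f}$ one can pick $k_1$ such that $\x{k} \in \bigcup_{i=1}^{m} V_{\bar{x}_i}$ and $|\f(\x{k}) - \bar{f}| \leq 1$ for every $k \geq k_1$. For such $k$, picking $i$ with $\x{k} \in V_{\bar{x}_i}$ and using the monotonicity of $t \mapsto t^{\alpha}$ on $[0,1]$ in $\alpha$,
\[
|\f(\x{k}) - \bar{f}|^{\theta} \;\leq\; |\f(\x{k}) - \bar{f}|^{\theta_{\bar{x}_i}} \;\leq\; C_{\bar{x}_i} \|\grad \f(\x{k})\|_2 \;\leq\; C \|\grad \f(\x{k})\|_2,
\]
which is the claimed uniform KL bound. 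When $\x{k} \to \bar{x}$, $\Omega = \{\bar{x}\}$ and the finite cover collapses to the single neighborhood $V_{\bar{x}}$, so the parameters $(\theta_{\bar{x}}, C_{\bar{x}})$ supplied by Assumption KL at $\bar{x}$ will work directly.

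The main subtlety, in my view, is establishing $\f \equiv \bar{f}$ on $\Omega$ in the proximal setting where $\nonsmoothf$ is only lower semi-continuous; fortunately, the subsequence-continuity of $\nonsmoothf$ along iterates furnished by Lemma~\ref{LemProxDescentStep} is exactly what is needed. The remaining steps---the finite cover by KL neighborhoods, the tail bound $\dist(\x{k}, \Omega) \to 0$, and the monotone-power comparison on $[0,1]$---are standard once that continuity is in hand.
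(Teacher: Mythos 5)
Your proposal is correct and follows essentially the same route as the paper's proof: establish that $\f$ equals $\bar{f}$ on the compact limit set, extract a finite subcover of KL neighborhoods, and take the maximum exponent and constant over that cover. You are in fact slightly more careful than the paper in two places---explicitly handling the lower semi-continuity of $\nonsmoothf$ via Lemma~\ref{LemProxDescentStep} when verifying $\f \equiv \bar{f}$ on the limit set, and noting that the comparison $|\f(\x{k}) - \bar{f}|^{\theta} \leq |\f(\x{k}) - \bar{f}|^{\theta_{\bar{x}_i}}$ requires $|\f(\x{k}) - \bar{f}| \leq 1$ for large $k$---both of which the paper leaves implicit.
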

\noindent See Appendix~\ref{EqnUnifKL} for the proof of this lemma.


\subsection{Proof of Theorem~\ref{ThmGradSubanal}}
\label{AppThmGradSubanal}

Now we prove Theorem~\ref{ThmGradSubanal} using
Lemma~\ref{LemUnifKL_Ineq}.


\paragraph{Convergence of the sequence $\big\{ \x{k} \big\}_{k \geq 0}:$}

We demonstrate the convergence of the sequence $\big\{ \x{k} \big\}_{k
  \geq 0}$ by proving that the sequence has finite length property;
  more precisely, we show that
 ${\sum_{k = 0}^{\infty} \| \x{k} - \x{k+1} \|_2 <
  \infty}$. First, note that for any scalar $0 \leq \theta < 1$, the
function $t \mapsto t^{1 - \gamma\theta}$ is concave for $0 < \gamma
< \frac{1}{\theta} $; consequently, for iteration $k \geq k_1$ we
have
\begin{align}
    \big( f(\x{k}) - \bar{f}\big)^{1 - \gamma\theta} - \big( f(\x{k+1})
    - \bar{f}\big)^{1 - \gamma\theta} &\stackrel{}{\geq} \big(
    1 - \gamma\theta \big) \big( f(\x{k}) - \bar{f} \big)^{-\gamma
      \theta} \big( f(\x{k}) - f(\x{k+1}) \big) \nonumber \\ &
    \stackrel{(i)}{\geq} \big( 1 - \gamma\theta \big) \big( | f(\x{k})
    - \bar{f} | \big)^{-\gamma\theta} \times \frac{1}{2 \alpha}\|
    \x{k} - \x{k+1} \|_2^2\nonumber \\ & \stackrel{(ii)}{\geq}
    \frac{(1 - \gamma\theta)}{ \C{} \| \grad{\f(\x{k})} \|_2^{\gamma} }
    \times \frac{1}{2\alpha}\| \x{k} - \x{k+1} \|_2^2 \nonumber \\ &
    \stackrel{(iii)}{=} \frac{(1 - \gamma\theta)}{2 \C{}
      \alpha^{1 - \gamma}} \| \x{k} - \x{k+1} \|_2^{2-\gamma}
    \label{EqnGradDescentStep2}.
  \end{align}
Here inequality (i) follows from the descent property in
equation~\eqref{EqnDescentCondGrad} and from the fact that $f(\x{k})
\downarrow \bar{f}$. Inequality (ii) follows from
Lemma~\ref{LemUnifKL_Ineq}, and equality (iii) follows from the
relation ${\x{k} -\x{k+1} = \alpha \big( \grad{\fone(\x{k})} -
  \gradftwo \big) = \alpha \grad \f(\x{k})}$.  Substituting $\gamma =
1$ and summing both side of inequality~\eqref{EqnGradDescentStep2}
from index $k =k_1$ to $k = \infty$, we obtain
\begin{align*}
 \big( f \big( \x{k_1} \big) - \bar{f} \big)^{1 - \theta } & = \sum_{k
   = k_1 }^{ \infty } \big( f \big( \x{k} \big) - \bar{f} \big)^{1 -
   \theta } - \big( f \big( \x{k+1} \big) - \bar{f} \big)^{1 - \theta
 } \nonumber \\ & \geq \sum_{k = k_1 }^{ \infty } \frac{(1- \theta)}{2
   \C{} } \| \x{k} - \x{k+1} \|_2,
\end{align*}
which proves the finite length property of the sequence $\big\{ \x{k}
\big\}_{k \geq 0}$.  Consequently, we are guaranteed to have a vector
$\bar{x}$ such that $\x{k} \rightarrow \bar{x}$ as $k \rightarrow
\infty$. \\

\paragraph{Rate of convergence of $\avg{\| \grad \f(\x{k}) \|_2}$:}
Rewriting equation~\eqref{EqnGradDescentStep2}, we have the following:
\begin{align}
  C_{\gamma} & \mydefn \sum_{\ell = 0 }^{ k_1 } \frac{(1-
    \gamma\theta)}{2 \C{} \alpha^{1 - \gamma } } \| \x{\ell} -
  \x{\ell+1} \|_2^{2-\gamma} + \big( f \big( \x{k_1} \big) - \bar{f}
  \big)^{(1 - \gamma\theta) } \nonumber \\ & \stackrel{(i)}{\geq}
  \sum_{ \ell = 0 }^{ k-1 } \frac{(1- \gamma\theta)}{2 \C{} \alpha^{1
      -\gamma } } \| \x{\ell} - \x{\ell+1} \|_2^{2-\gamma}
  \nonumber\\ & = \frac{k (1 - \gamma\theta) }{2C \alpha^{1 - \gamma }}
  \avg{ \| \x{k} - \x{k+1} \|_2^{2-\gamma} } \label{EqnKeyEqn},
\end{align}
where step (i) above follows from
equation~\eqref{EqnGradDescentStep2}, and ${\avg{\| \x{k} - \x{k+1}
    \|_2^{2-\gamma}}:= \frac{1}{k}\sum_{\ell = 0}^{k-1} \| \x{\ell} -
  \x{\ell+1} \|_2^{2-\gamma}}$ denote the running arithmetic
average. Since $0 \leq \theta < 1$, we can take $\gamma = 1$ in
equation~\eqref{EqnKeyEqn}, and we obtain the following rate:
\begin{align*}
\avg{\| \grad \f(\x{k}) \|_2} = \frac{1}{\alpha} \avg{\| \x{k} -
  \x{k+1} \|_2} \leq \frac{c_{1}}{k},
\end{align*}
where $c_{1} = \frac{2 \C{} C_{\gamma}}{\alpha (1 -
  \theta)}$. Finally, note that the last equality holds trivially for
iteration $k \leq k_1$ with the given choice of the constant $c_1$.

\paragraph{Rate of convergence of \Gavg{\| \grad \f(\x{k}) \|_2}: }

Since we proved that the sequence $\big\{ \x{k} \big\}_{k \geq 0}$ is
convergent to the point $\bar{x}$, we have that the parameter
$\theta$ in
Lemma~\ref{LemUnifKL_Ineq} can be taken to be the KL-exponent of the
function $\f$ at point $\bar{x}$. Suppose $\frac{1}{2} \leq \theta <
\frac{ \rate }{ 2\rate - 1 } $, then substituting $\gamma =
\frac{2\rate-1}{\rate}$ in equation~\eqref{EqnKeyEqn} yields,
\begin{align*}
\Gavg{ \| \grad \f(\x{k}) \|_2 } & = \frac{1}{\alpha} \Gavg{ \| \x{k}
  - \x{k+1} \|_2 } \\
& \stackrel{(i)}{\leq} \frac{1}{\alpha} \big\{ \avg{ \| \x{k} -
  \x{k+1} \|_2^{\frac{1}{r}} } \big\}^{r} \nonumber \\
& \stackrel{(ii)}{\leq} \frac{c_2}{k^r},
\end{align*}
where $c_2 = \frac{1}{\alpha} \big(\frac{2 \C{} C_{\gamma}
  \alpha^{1 - \gamma\theta} }{1 - \gamma\theta} \big)^r $ with $\gamma
= \frac{2r -1}{r}$, and $\Gavg{\| \x{k} - \x{k+1} \|_2^{2-\gamma}}
\mydefn \prod_{\ell = 0}^{k - 1} \big(\| \x{\ell} - \x{\ell+1}
\|_2\big)^{\frac{1}{k}}$, the geometric average of the sequence
$\big\{ \| \x{\ell} - \x{\ell+1} \|_2 \big\}_{l = 0}^{k-1}$. Here step
(i) above follows from arithmetic-geometric mean (AM/GM) inequality;
step (ii) follows from the bound in equation~\eqref{EqnKeyEqn} 
and from the fact that $\gamma = \frac{2r - 1}{r}$.  
Finally, note that the last equality holds
trivially for iteration $k \leq k_1$ with the given choice of constant
$c_2$.


\subsection{Proof of Theorem~\ref{ThmProxSubanal}}
\label{AppThmProxSubanal}

The proof of Theorem~\ref{ThmProxSubanal} builds on the techniques
used in the proof of Theorem~\ref{ThmGradSubanal} but requires additional
technical care due to the presence of possibly non-continuous function
$\nonsmoothf$.


\paragraph{Convergence of the sequence $\big\{ \x{k} \big\}_{k \geq 0}$:}

The proof of Theorem~\ref{ThmProxSubanal} has two steps. First, we
prove a descent condition similar to
equation~\eqref{EqnGradDescentStep2}. We then leverage this descent
condition and weighted AM-GM inequality to obtain the desired result.

\paragraph{Step 1:}
Following the proof of Theorem~\ref{ThmGradSubanal}, we prove the
convergence of the sequence $\big\{ \x{k} \big\}_{k \geq 0}$ by
showing that the sequence $\big\{ \x{k} \big\}_{k \geq 0}$ has finite
length property.  First, note that for scalars $0 \leq \theta < 1$ and
$0 < \gamma < \frac{1}{\theta}$, the function $t \mapsto t^{1 - \gamma
  \theta}$ is concave. Consequently, 
  for iteration $k \geq k_1$, from Lemma~\ref{LemUnifKL_Ineq} we have
\begin{align}
\big( f(\x{k}) - \bar{f}\big)^{1 - \gamma\theta} - \big( f(\x{k+1}) -
\bar{f}\big)^{1 - \gamma\theta} &\stackrel{}{\geq} \big( 1 - \gamma\theta
\big) \big( f(\x{k}) - \bar{f} \big)^{-\gamma\theta} \big( f(\x{k})
- f(\x{k+1}) \big) \nonumber \\
& \stackrel{(i)}{\geq} \big( 1 - \gamma\theta \big) \big( | f(\x{k}) -
\bar{f} | \big)^{-\gamma\theta} \times \frac{1}{2 \alpha}\| \x{k} -
\x{k+1} \|_2^2 \nonumber \\
& \stackrel{(ii)}{\geq} \frac{(1 - \gamma\theta)}{ \C{} \|
  \grad{\f(\x{k})} \|_2^{\gamma} } \times \frac{1}{2\alpha}\| \x{k} -
\x{k+1} \|_2^2.
\label{EqnLojaDescent}
\end{align}
Here step (i) follows from the descent property in
equation~\eqref{EqnProxDescentStep} and from the fact that \mbox{
  $f(\x{k}) \downarrow \bar{f}$ }; step (ii) follows from
Lemma~\ref{LemUnifKL_Ineq}. The function $\ftwo$ is locally smooth by
assumption; as a result, we have that the difference function $\fone -
\ftwo$ is locally smooth.  We also assumed  that the sequence $\big\{ \x{k}
\big\}_{k \geq 0}$ is bounded (lies in a compact set $\CompactSet$);
consequently, we may assume that the difference function $\fone -
\ftwo$ is smooth in the compact set $\CompactSet$ with a smoothness
parameter $\M_{\fone - \ftwo}$(say). Borrowing the argument of
Theorem~\ref{ThmProx} part(b), it follows that:
\begin{align}
\label{EqnProxGradUB}  
\| \grad{\fone(\x{k})} - \grad \ftwo(\x{k}) + v^{k} \|_2 \leq \big(
\M_{\fone - \ftwo} + \frac{1}{\alpha} \big) \| \x{k} - \x{k-1} \|_2.
\end{align}   
Combining the last inequality with inequality~\eqref{EqnLojaDescent}
yields the following descent property
\begin{align}
  \label{EqnProxDescent_KL}  
 \big( f(\x{k}) - \bar{f}\big)^{1 - \gamma\theta} - \big( f(\x{k+1}) -
 \bar{f}\big)^{1 - \gamma\theta} \geq \frac{(1 - \gamma\theta)}{ 2\alpha
   \C{} \big( \M_{\fone - \ftwo} + \frac{1}{\alpha} \big)^\gamma }
 \times \frac{ \| \x{k} - \x{k+1} \|_2^2 }{ \| \x{k} - \x{k-1}
   \|_2^{\gamma} }.
\end{align}

\paragraph{Step 2:}

We now leverage the descent condition obtained from step 1 to prove
finite length property of the sequence $\big\{ \x{k} \big\}_{k \geq
  0}$.  In order to facilitate further discussion, we use
$\deltak{k}{\gamma}$ to denote the following:
\begin{align*}
\deltak{k}{\gamma} \mydefn C_3 \Big( \big( f(\x{k}) -
\bar{f}\big)^{1 - \gamma\theta} - \big( f(\x{k+1}) -
\bar{f}\big)^{1 - \gamma\theta} \Big),
\end{align*} 
where the constant $C_3 \mydefn \frac{ 2\alpha \C{} \big( \M_{\fone -
    \ftwo} + \frac{1}{\alpha} \big)^\gamma }{(1 - \gamma\theta)}$.
With this notation, we can rewrite the
equation~\eqref{EqnProxDescent_KL} as
\begin{align}
\label{EqnProxKeyEqn2}  
\deltak{k}{\gamma} \| \x{k-1} - \x{k}\|_2^{\gamma} \geq \| \x{k} -
\x{k+1} \|_2^2.
\end{align}
Combining equation~\eqref{EqnProxKeyEqn2} with the weighted AM-GM
inequality, we obtain
\begin{align}
\big(1 + \frac{ \gamma }{2-\gamma} \big) \times \sum_{j = k_1+1}^{k}
\| \x{j} - \x{j+1} \|_2^{2 - \gamma} & \stackrel{(i)}{\leq} \big(1 +
\frac{ \gamma }{2-\gamma} \big) \times \sum_{k=k_1+1}^{k} \Big( \sqrt{
  \deltak{j}{\gamma} \; \| \x{j-1} - \x{j} \|_2^{\gamma} } \Big)^{
  \frac{2 -\gamma }{ 2 } } \nonumber \\
& \stackrel{(ii)}{\leq} \sum_{j = k_1+1}^{k} \big( \deltak{j}{\gamma}
  + \frac{\gamma}{2 - \gamma} \| \x{j-1} - \x{j} \|_2^{2 - \gamma} \big)
  \nonumber \\
\label{EqnProxKeyEqn3}  
& \stackrel{(iii)}{\leq} C_3 \big( f(\x{k_1}) - \bar{f}\big)^{1 - \gamma
  \theta} + \sum_{j = k_1+1}^{k} \frac{\gamma}{2 - \gamma} \| \x{j-1}
- \x{j} \|_2^{2 - \gamma}.
\end{align}
Here step (i) follows from equation~\eqref{EqnProxKeyEqn2}, and step
(ii) is implied by applying weighted AM-GM inequality as follows:
\begin{align*}
\frac{\deltak{j}{\gamma} + \frac{\gamma}{2 - \gamma} \| \x{j-1} -
  \x{j} \|_2^{2 - \gamma} }{ 1+ \frac{\gamma}{2 - \gamma} } \geq \Big(
\deltak{j}{\gamma}\|\x{j-1} - \x{j} \|_2^{\gamma}
\Big)^{\frac{1}{1+\frac{\gamma}{2 - \gamma}}}.
\end{align*}
Step (iii) in equation~\eqref{EqnProxKeyEqn3} follows from the
following observation
\begin{align*}
  \sum_{j = k_1}^{k} \deltak{j}{\gamma} & = C_3 \sum_{j = k_1}^{k} \big( f(\x{j}) -
  \bar{f}\big)^{1 - \gamma\theta} - \big( f(\x{j+1}) -
  \bar{f}\big)^{1 - \gamma\theta} \\
  & \leq C_3 \big( f(\x{k_1}) - \bar{f}\big)^{1 - \gamma\theta}.
\end{align*}
Rewriting inequality~\eqref{EqnProxKeyEqn3}, we have for all $k \geq
k_1 + 2$
\begin{align}
  \label{EqnSumFinite}
\sum_{ j = k_1 + 1}^{k-1} \| \x{j} - \x{j+1} \|_2^{2 - \gamma} & \leq
C_3 \big( f(\x{k_1}) - \bar{f}\big)^{1 - \gamma\theta} +
\frac{\gamma}{2 - \gamma} \| \x{k_1} - \x{k_1+1} \|_2^{2 - \gamma} - \big(1
+ \frac{\gamma}{2 - \gamma}\big) \| \x{k} - \x{k+1} \|_2^{2 - \gamma}
\nonumber \\ & \leq C_3 \big( f(\x{k_1}) - \bar{f}\big)^{1 - \gamma
  \theta} + \frac{\gamma}{2 - \gamma} \| \x{k_1} - \x{k_1+1}
\|_2^{2 - \gamma} < \infty.
\end{align}
Finally, by substituting $\gamma = 1$ and letting $k \rightarrow
\infty$ in the last equation, we deduce the finite length property of
the sequence $\big\{ \x{k} \big\}_{k \geq 0}$.

\paragraph{Rate of convergence of $\avg{\| \grad \f(\x{k}) \|_2}$
  and $\Gavg{\| \grad \f(\x{k}) \|_2}$:} The proof of this part
follows from the corresponding proof in Theorem~\ref{ThmGradSubanal}
and using the inequality~\eqref{EqnSumFinite} and upper
bound~\eqref{EqnProxGradUB}.


\subsection{ Proof of Lemma~\ref{LemUnifKL_Ineq}}
\label{EqnUnifKL}
Since the sequence $\big\{ \x{k} \big\} _{k \geq 0}$ is bounded by
  assumption, without loss of generality, we may assume that the set of limit points of the sequence $\big\{ \x{k} \big\} _{k \geq 0}$ --- which we denote by
  $\mathcal{\bar{X}}$ --- is a compact set. From Theorem~\ref{ThmGradient} (respectively
Theorem~\ref{ThmProx}), we have that all the limit points of the sequence $\big\{ \x{k} \big\} _{k \geq 0}$ are critical points of the function $\f$; furthermore, since
${\f(\x{k}) \downarrow \bar{f}}$, we also have 
that the function $\f$ is constant
on the set of limit points $\mathcal{\bar{X}}$, 
and the function value on $\mathcal{\bar{X}}$ equals $\bar{f}$.
Combining this with
Assumption KL, we have for all $z \in \mathcal{\bar{X}}$, there
exists constants $\theta(z) \in [0,1) $, $ \;r_{z} > 0$ and $\C{}(z)
  >0$ such that, ${\mid \f(x) - \bar{f} \mid^{\theta(z)} \le \C{}(z)
  \times \| \grad \f(x) \|_2}$ for all $x \in B(z,r_{z})$. 
  Now, consider the open cover ${\{B(z,r_{z}):z\in \mathcal{\bar{X}} \}}$ of the set $\mathcal{\bar{X}}$. From compactness of the set $\mathcal{\bar{X}}$, we are guaranteed to have a finite subcover;
  more precisely, there exists $\{ z_{1},\ldots z_{p}\} 
  \subseteq \mathcal{\bar{X}}$ such that
  $\mathcal{\bar{X}}
  \subseteq\bigcup_{i=1}^{p}B(z_{i},r_{z_{i}})$. Define constants
  $\theta \mydefn \max\{\theta(z_{i}):1\leq i\leq p\}$, $\C{} \mydefn
  \max\{\C{}(z_{i}):1\leq i\leq p\} $, and ${r \mydefn \min\big\{
  \frac{r_{z_{i}}}{2}:1\leq i\leq p\big\}}$. 
  Utilizing the result $\|\x{k} -\x{k+1}\|_2 \rightarrow 0$ from 
  Theorem~\ref{ThmGradient} (respectively Theorem~\ref{ThmProx}),
  one can show that, there exists positive integer $k_1$
  such that for all $k \geq k_1$ we have $\| \x{k} - \x{k+1} \|_2< \frac{r}{2}$, and $x^{k}\in\bigcup_{i=1}^{p}B(z_{i},r_{z_{i}})$.
 Putting together these pieces, we conclude that
  for all $k\geq k_{1}$
\begin{align*}
x^{k} \in \bigcup_{i=1}^{p}B(z_{i},r_{z_{i}}), \;\;\;\; \mbox{and} \quad
\mid f(x^{k}) - \bar{f} \mid^{\theta}\leq \C{} \| \grad \f \|_2,
\end{align*}
which proves the first part of claimed lemma. Now suppose the sequence
$\big\{ \x{k} \big\}_{k \geq 0}$ converges to a point
$\bar{x}$, then we have that the set of limit points
$\bar{\mathcal{X}} = \{ \bar{x} \}$, is a singleton set.  The rest of
the proof is immediate by repeating the argument so far, with the additional information that $\mathcal{\bar{X}} = \{\bar{x}\}$.


\section{Proofs of Corollaries}

In this appendix, we collect the proofs of Corollaries~\ref{CorTukey},
\ref{CorBestSubset} and~\ref{CorMixDensity} from
Section~\ref{SecApplication}.


\subsection{Proof of Corollary~\ref{CorTukey}}
\label{AppCorTukey}

First, note that in order to apply Theorem~\ref{ThmGradient} and
Theorem~\ref{ThmGradSubanal} to Corollary~\ref{CorTukey}, it is enough
to show that the function $\mu \mapsto \f(\mu)$ is $\M_{\f}$-smooth
(in this example, function $\ftwo \equiv 0$, and hence $\f \equiv
\fone$), and the function $\f$ satisfies Assumption KL. We verify that
Assumption KL is satisfied by proving that the objective function $\f$
in problem~\eqref{EqnTukeyObj} is continuous sub-analytic (see
Appendix~\ref{AppSubAnalKL}). For proving sub-analyticity, we heavily
use the properties mentioned in Appendix~\ref{AppSubanal}. In the
following proof, we assume without loss of generality that $\lambda = 1$.


\paragraph{The function $\f$ is continuous
  sub-analytic:}
First, we show that the function $\ROB$ is sub-analytic. We begin by
observing that $\ROB$ is piecewise polynomial.  Polynomials are
analytic functions and intervals are semi-analytic sets.
Since piecewise analytic functions with semi-analytic pieces are
semi-analytic (hence sub-analytic), we conclude that the function
$\ROB$ is sub-analytic. Now, the function $\mu \mapsto y_i -
\inprod{z_i}{\mu}$ is linear, and hence continuous
sub-analytic. Furthermore, since
continuous sub-analytic functions are
closed under composition, 
we have that the function
$\mu \mapsto \ROB \big(y_i - \inprod{z_i}{\mu} \big)$ 
is sub-analytic. Finally, note
that sub-analytic functions are closed under linear combination, and
we conclude that the function $\f$ is sub-analytic. The continuity of
the function $\f$ is immediate by inspection.

\paragraph{The function $\f$ is smooth:} Since the vectors
$\big \{(z_i, y_i)\big \}_{i=1}^{n}$ are fixed, it suffices to prove that
the function $\ROB$ is smooth. A straightforward calculation shows
that $\ROB$ is continuously differentiable and smooth; in particular,
it has a smoothness parameter $36$ when $\lambda = 1$.\\

\noindent Putting together the pieces, we conclude that
Theorem~\ref{ThmGradient} and Theorem~\ref{ThmGradSubanal} are
applicable for problem~\eqref{EqnTukeyObj}.  Convergence of the
sequence $\big \{ \mu^k \big \}_{k \geq 0 }$ to
a point $\bar{\mu}$
and the convergence rate of gradient norms follows from
Theorem~\ref{ThmGradSubanal}, and the stationary condition $\grad
\f(\bar{\mu}) = 0$ follows from Theorem~\ref{ThmGradient}.

\paragraph{Escaping strict saddle points:} 
Note that the functions $( \fone, \ftwo)$ are twice
continuously differentiable, and the
function $\fone$ is smooth.
Consequently, from Corollary~\ref{CorSaddlePoint},
it follows that with random initializations,
Algorithm~\ref{AlgoGradient} avoids strict 
saddle points almost surely.


\subsection{Proof of Corollary~\ref{CorBestSubset}}
\label{AppCorBestSubset}

We begin by providing a high-level outline of the proof.  First, note
that from Theorem~\ref{ThmProx}, we have the successive difference $\|
\x{k} - \x{k+1} \|_2 \rightarrow 0 $, and as a result, the set of limit point of
the sequnece $\big \{ \x{k} \big \}_{k \geq 0 } $---call it
$\mathcal{\bar{X}}$---is a connected
set~\cite{ostrowski2016solution}.
We prove that the connected-set $\mathcal{\bar{X}}$ is singleton by
showing that the set $\mathcal{\bar{X}}$ has an isolated point --- this 
also proves that sequence $\{ \x{k} \}_{k \geq 0}$ is convergent.
Next, we show that the objective-function $\f$, in the 
problem~\eqref{EqnLinRegDC},
satisfies Assumption KL with exponent $\theta = \frac{1}{2}$. Finally,
we show that condition $|\bar{x}|_{(r)} > |\bar{x}|_{(r+1)} \geq 0$
implies that function $x \mapsto \ftwo(x) \mydefn \sum_{i=d-s+1}^{d}
|x|_{(i)}$ is smooth in a neighborhood of point $\bar{x}$, and we use the
proof techniques of Theorem~\ref{ThmProxSubanal} to establish the
convergence rate of the gradient sequence. In order to obtain the rate
of convergence of the sequence $\big \{ \x{k} \big \}_{ k \geq 0 }$,
we use ideas similar to those in the paper~\cite{Jason-16}.


\paragraph{Convergence of the sequence $\{ \x{k} \}_{k \geq 0}$:}
For
notational convenience, let us use \mbox{$\fone(x) \mydefn \| y -
  \LinModMat x \|_2^2 $}, $\nonsmoothf(x) \mydefn \lambda \|x\|_1$,
and $\ftwo(x) \mydefn \lambda \sum_{i=d-s+1}^{d} |x|_{(i)}$. Since
 the point $\bar{x}$ satisfies the condition $|\bar{x}|_{(r)} >
|\bar{x}|_{(r+1)} \geq 0$ by assumption, there must exist a neighborhood
$B(\bar{x},r)$ such that the function $\ftwo$ is differentiable in the
neighborhood $B(\bar{x},r)$, and all points $x \in B(\bar{x},r)$ satisfy
$\sgn(x_{(i)}) = \sgn(\bar{x}_{(i)})$ for $1 \leq i \leq r$. We show that,
in a neighborhood of the point $\bar{x}$, it is the 
only critical point, thereby proving that
the point $\bar{x}$ is an isolated critical point.
To this end consider the  
convex sub-problem mentioned in Corollary~\ref{CorBestSubset}
\begin{align}
\label{ProbCvxSubprob}
\mathcal{P}(\bar{x}) \mydefn \min_{ x \in \real^{\usedim} } \fone(x) + \lambda \nonsmoothf(x) -
\lambda \inprod{\grad \ftwo(\bar{x})}{x}.
\end{align}
For any point $\xstar$ such that $\xstar \in B(\bar{x},r) \cap \mathcal{\bar{X}}$,
from Theorem~\ref{ThmProx}, we know that
\begin{align}
\label{EqnxAndxstarAreSoln}
\grad \fone(\bar{x}) + \lambda \bar{u} - \lambda \grad \ftwo(\bar{x}) = 0 \;\;\;
\mbox{and} \;\;\; \grad \fone(\x{*}) + \lambda u^{*} - \lambda \grad \ftwo(\x{*}) =
0,
\end{align}
where subgradients $u^* \in \subgrad \nonsmoothf(\x{*})$ and $\bar{u} \in
\subgrad \nonsmoothf(\bar{x}) $. Next, note that from the choice of neighborhood 
$B(\bar{x},r)$, it follows that for all $x \in B(\bar{x},r)$ we have
$\grad \ftwo(\x{}) = \grad \ftwo(\bar{x})$, and in particular, we deduce $\grad
\ftwo(\x{*}) = \grad \ftwo(\bar{\x{}})$. Combining this relation with 
equation~\eqref{EqnxAndxstarAreSoln} yields:
\begin{align*}
\grad \fone(\bar{x}) + \lambda \bar{u} - \lambda \grad \ftwo(\bar{x}) = 0 \;\;\;
\mbox{and} \;\;\; \grad \fone(\x{*}) + \lambda u^{*} - \lambda \grad \ftwo(\bar{x}) =
0,
\end{align*}
which implies both the points  $\x{*}$ and $\bar{x}$ are zero sub-gradient points
of convex problem~\eqref{ProbCvxSubprob}; this contradicts the assumption that
problem~\eqref{ProbCvxSubprob} has an unique solution. Hence, we conclude
that $\x{*} = \bar{x}$, and the point $\bar{x}$ is an isolated 
critical point of the sequence
$\big \{ \x{k} \big \}_{ k \geq 0 }$, 
and $\mathcal{\bar{X}}$. Putting together the pieces,
we conclude that
$\x{k} \rightarrow \bar{x}$.
\paragraph{Smoothness of function $\ftwo$ in a neighborhood of
  $\bar{x}$:}
We already argued above that for all $x \in B(\bar{x},r)$, the
function $\ftwo$ is differentiable and $\grad \ftwo(\x{}) =
\grad \ftwo(\bar{x})$. Consequently,  we have that in the neighborhood 
$B(\bar{x},r)$, the function $\ftwo$ is smooth with a
smoothness parameter
$\M_{\ftwo} = 0$.
\paragraph{The function $\f$ satisfies Assumption KL 
with exponent $\theta = \frac{1}{2}$:}
Recently, in the paper~\cite{li2016calculus} (Corollaries 5.1 and
5.2), the authors showed that if the functions $f_1,f_2,\ldots,f_T$
satisfy the KL-inequality with an exponent $\theta = \frac{1}{2}$, then
the function $f \mydefn \min \big \{ f_1, f_2, \ldots, f_T \big
\}$ also satisfies KL-inequality with the exponent $\theta = \frac{1}{2}$. 
Interestingly, the function $\f$ can be represented as is minimum
of finitely many functions as follows:
\begin{align}
\f(x) = \min_{a \in \mathcal{A}} \big \{ \| y - \LinModMat x
\|_2^2 + \lambda \|x\|_1 - \lambda a^\top x \big \},
\end{align}
where $\mathcal{A} \mydefn \big \{ a \in \big \{ -1,0,1 \big
\}^{\usedim} : \sum_{i=1}^{\usedim} |a_i| = r \big \} $. Note that
the set $\mathcal{A}$ has cardinality at most $3^{\usedim}$. It
is known that functions of the form $x \mapsto \frac{1}{2} x^\top A x
+ P(x) + b^\top x$ satisfy the KL-inequality with exponent $\theta =
\frac{1}{2}$, where $P$ is a proper closed polyhedral function, and 
$A$ is a positive semi-definite matrix; see
Corollaries 5.1 and 5.2 in the paper~\cite{li2016calculus}.
Putting together
these two observations, 
we conclude that the function $\f$
satisfies KL-assumption with KL-exponent
$\theta = \frac{1}{2}$.   
\paragraph{Combining the pieces:}
Since we proved $\x{k} \rightarrow \bar{x}$, we have that
for a suitable choice of $k_1$, the
tail sequence $\big \{ \x{k} \big \}_{ k \geq k_1 }$ 
lies in the neighborhood
$B(\bar{x},r)$.
Now, the function $\f$ satisfies Assumption KL
with exponent $\theta = \frac{1}{2}$,  and the function
$\ftwo$ is smooth in the neighborhood $B(\bar{x},r)$;
hence, following the argument in proof of 
Theorem~\ref{ThmProxSubanal} part(b), we conlcude that:
\begin{align*}
  \avg{\| \grad f (\x{k}) \|_2 } \leq  \frac{c_{1}}{k}.
\end{align*}
\paragraph{Rate of convergence of sequence $\big\{ \x{k}
  \big\}_{k \geq 0}$:} The KL-exponent for the
function $\f$ is $\theta = \frac{1}{2}$, and we may use $\gamma = 1$
in equation~\eqref{EqnSumFinite} which yields
\begin{align}
  \label{EqnSuccDiffBound}
   \sum_{\ell = k_1+1}^{\infty} \| \x{\ell} - \x{\ell+1} \|_2 & \leq
    \| \x{k_1} - \x{k_1+1} \|_2 + C_3 \big( \f(\x{k_1}) -
   \bar{\f} \big)^{\frac{1}{2}},
\end{align}
for some constant $C_3$. From Lemma~\ref{LemUnifKL_Ineq} and
equation~\eqref{EqnProxGradUB1}, we have
\begin{align}
\label{EqnFuncValBound}
\big( \f(\x{k_1}) - \bar{\f} \big)^{\frac{1}{2}} & \leq \C{} \|
\grad{\f(\x{k_1})} \|_2 \leq \C{} (\M + \M_{\ftwo} + \frac{1}{\alpha} )
\| \x{k_1} - \x{k_1-1} \|_2.
\end{align}
Combining equations~\eqref{EqnSuccDiffBound}
and~\eqref{EqnFuncValBound} we have
\begin{align}
\label{EqnFinalRecursion}  
\sum_{\ell = k_1}^{\infty} \| \x{\ell} - \x{\ell+1} \|_2 & \leq 2 \|
\x{k_1} - \x{k_1+1} \|_2 + C_3 \big( \f(\x{k_1}) - \bar{\f}
\big)^{\frac{1}{2}} \nonumber  \\
& \stackrel{(i)}{\leq} 2 \|
\x{k_1} - \x{k_1+1} \|_2  + \C{} C_3 (\M + \M_{\ftwo} + \frac{1}{\alpha} )
\| \x{k_1} - \x{k_1-1} \|_2 \nonumber \\
& \stackrel{(ii)}{\leq} \bar{\C{}} \| \x{k_1} - \x{k_1-1} \|_2,
\end{align}
where $\bar{\C{}}$ is a constant depending on $\M, \M_{\ftwo}, \alpha$, $C_3$ 
and $\C{}$, and step (i) above follows from equation~\eqref{EqnFuncValBound}. 
We justify step (ii) shortly, 
but let us first derive the linear rate of convergence of the sequence
$\big\{ \x{k} \big\}_{k \geq 0}$ using the derivation
in equation~\eqref{EqnFinalRecursion}. Denote 
$e_k = \sum_{\ell = k}^{\infty} \| \x{\ell} -
\x{\ell+1} \|_2$. Then equation~\eqref{EqnFinalRecursion} provides the
following recursion
\begin{align*}
e_{k_1} \leq \bar{\C{}}( e_{k_1-1} - e_{k_1} ).
\end{align*}
 Simple inspection of proof of Theorem~\ref{ThmProxSubanal} 
 and derivations so far
ensure that we can derive
the equations~\eqref{EqnSuccDiffBound} and~\eqref{EqnFuncValBound}
for all $k \geq k_1$; this provides us a recursion relation as above
with $k_1$ replaced by $k$. Furthermore, 
by choosing a larger value of the
constant $\bar{C}$ if necessary,
we may conclude that for all $k \geq 1$  we have
\begin{align*}
  e_{k} \leq \bar{\C{}}( e_{k-1} - e_{k}).  
\end{align*}
Rearranging the above inequality yields $e_{k} \leq
\frac{\bar{\C{}}}{\bar{\C{}}+1} e_{k-1}$, which guarantees
that the sequence $\big \{ e_{k} \big \}_{ k \geq 0 }$ converges
to zero at a linear rate. Finally,
observe that
$\| \x{k} - \x{*} \|_2 \leq \sum_{\ell=k}^{\infty} \| \x{\ell} -
\x{\ell+1} \|_2 = e_{k}$,
and
the linear rate of convergence of
the sequence $\big\{ \|\x{k} - \x{*} \|_2\big\}_{k \geq 0}$
to zero follows.



\paragraph{Justification for step (ii) in
  equation~\eqref{EqnFinalRecursion}:} Note that it suffices to show
that the object $\|\x{k_1} - \x{k_1+1} \|_2$ is upper bounded by a constant
multiple of $\| \x{k_1} - \x{k_1-1} \|_2 $, where the constant depends
only on $\M, \M_{\ftwo}, \alpha$ and $\C{}$. Recalling the decent
property proved in equation~\eqref{EqnProxDescentStep} we have:
\begin{align}
  \label{EqnUpperBound}
  \big( \f(\x{k_1}) - \bar{\f} \big)^{\frac{1}{2}} \geq \big( \f(\x{k_1}) - \f(\x{k_1+1}) \big)^{\frac{1}{2}}
   \geq \frac{1}{\sqrt{2\alpha}} \|\x{k_1} - \x{k_1+1} \|_{2}.
\end{align}
Combining equations~\eqref{EqnUpperBound} and~\eqref{EqnFuncValBound} we obtain the following upper and lower bound of 
$\big( \f(\x{k_1}) - \bar{\f} \big)^{\frac{1}{2}}$:
\begin{align*}
   \frac{1}{\sqrt{2\alpha}} \|\x{k_1} - \x{k_1+1} \|_2 \leq \big( \f(\x{k_1}) - \bar{\f} \big)^{\frac{1}{2}}  \leq 
    \C{} (\M + \M_{\ftwo} + 1/\alpha)\| \x{k_1} - \x{k_1-1} \|_2.
\end{align*}  
Rearranging the last equality proves the desired upper bound.
Finally, we reiterate that the above justification also 
hold for any iterate $k$ with $k\geq k_1$. 


\subsection{Proof of Corollary~\ref{CorMixDensity}}
\label{AppCorMixDensity}
The proof of this corollary is based on application 
of Theorems~\ref{ThmProx} and~\ref{ThmProxSubanal}.
We verify the assumptions of Theorems~\ref{ThmProx}
and~\ref{ThmProxSubanal}
with $\fone(\theta) = - \sum \limits_{i = 1 }^n
\log \big( \mixFun( y_i; \theta)\big)$, $\ftwo \equiv 0$, $\nonsmoothf =
\mathbbm{1}_{\Xspace}$ 
and function $\f \mydefn \fone - \nonsmoothf +
\ftwo$. Note that the
domain $\dom(\f) = \mathcal{X}$ is compact, which guarantees
that the iterate sequence $\{ \theta^{k} \}_{k \geq 0}$
obtained from Algorithm~\ref{AlgoProx} is
bounded. The function $\ftwo \equiv 0$ is smooth. 
The log-partition function $A$ is twice continuously 
differentiable by assumption, which
guarantees that the function $\fone$ is also twice
continuously differentiable, whence smooth in the compact domain
$\Xspace$. Finally, we verify that the function $\f$ satisfies Assumption
KL by proving that $\f$ is continuous sub-analytic in
its domain $\Xspace$, and the domain $\mathcal{X}$ is closed;
see Lemma~\ref{LemSubanalKl2}.  Clearly,
$\dom(\f) = \Xspace$ is closed, and the function $\f$ is continuous in
$\dom(\f)$. Finally, we show that the
functions $(\fone, \nonsmoothf)$ are
sub-analytic, and  invoking the property (d) of
sub-analytic functions form Appendix~\ref{AppSubanal},
we conclude that the 
function $\f \mydefn \fone + \nonsmoothf$
is sub-analytic.

\paragraph{The function $\nonsmoothf$  is
  sub-analytic:} 
  Here, we use a simple result by Attouch
et al.~\cite{attouch2010proximal}, which states that 
the indicator function of
a semi-algebraic set is a semi-algebraic 
function (hence a sub-analytic function). In order to show that the set
$\Xspace$ is semi-algebraic, we note the following
representation of the set $\Xspace$
\begin{align}
  \label{EqnMixDensitySetIsAlgebraic}
  \Xspace = \big \{ \sum\limits_{ i = 1}^{ \usedim} \theta_i^2 > R_1^2 \big
  \}^{c} \bigcap \big \{ \sum\limits_{ i = \usedim + 1 }^{2 \usedim}
  \theta_i^2 > R_2^2 \big \}^{c} \bigcap \big \{ \theta_{2\usedim+1} > 1 \big
  \}^c \bigcap \big \{ - \theta_{2\usedim+1} > 0 \big \}^c.
\end{align}
Each of the four sets in
representation~\eqref{EqnMixDensitySetIsAlgebraic} are semi-algebraic
by definition, and semi-algebraic sets are closed under finite
intersection and complements; see the book by
Coste~\cite{coste2002introduction}. 
Putting together these two observations, we
conclude that the set $\Xspace$ is semi-algebraic, 
and that $\mathbbm{1}_{\Xspace}$ is a sub-analytic function.

\paragraph{The function $\fone$ is sub-analytic:}
The log-partition function $A$ is sub-analytic by assumption.
For a fixed vector $y$, the
map $\eta \mapsto \eta^\top T(y)$ is linear, and hence sub-analytic.
Since sub-analytic
functions are closed under a finite linear combination, we conclude
that the map $\eta
\mapsto \eta^\top T(y) - A(\eta)$ is sub-analytic. Continuous
sub-analytic functions are closed under multiplication and
composition; since the $\exp(\cdot)$ function is continuous sub-analytic,
we have for every fixed vector $y$ the following map
\begin{align*}
 (\eta_0, \eta_1, p) \mapsto \mixFun(y; \eta_0, \eta_1, p) \mydefn p\exp(
\eta_0^\top T(y) - A(\eta_0) ) + (1-p)\exp( \eta_1^\top T(y) -
A(\eta_1) )
  \end{align*}
is sub-analytic. Furthermore, the $\log(\cdot)$ 
function analytic on the interval $(0, \infty)$, and
using the composition rule for continuous sub-analytic
functions, we obtain that the map 
${ \theta \mapsto \log(\mixFun(y_i;\theta)) }$ is
sub-analytic, where $\theta \mydefn (\eta_0,\eta_1,p)$. 
Finally, the target function $\fone$ is a linear
combination of sub-analytic 
functions $\log(\mixFun(y_i;\theta))$, and we conlcude that
the map $\theta \mapsto \fone(\theta)$ is sub-analytic.
\paragraph{Combining the pieces:} Putting together the pieces, we
conclude that the function $\f$ is
sub-analytic, with the function $\f$ being  continuous in $\dom(\f)$,
whereas $\dom(\f)$ is closed;
furthermore, the functions $\fone$ and $\ftwo$ are smooth.
This allows us to apply
Theorem~\ref{ThmProx} and Theorem~\ref{ThmProxSubanal} and the
corollary follows.

\paragraph{Sub-analyticity of the log-partititon functions $A$ in Table 1:}
  \label{AppTabMixDensity}
The sub-analyticity of the log-partition function $A$ mentioned in
Table 1 follows from the following two observations. First, note that
the functions $\exp, \text{ln}$ and $\Gamma$ are continuous and
analytic (hence sub-analytic).  Given two continuous sub-analytic
functions $g_1$ and $g_2$, the composition function $g_2 \circ g_1$ is also
continuous sub-analytic. Secondly, any linear combination of
sub-analytic functions is also sub-analytic function. See
Appendix~\ref{AppSubanal} for properties of sub-analytic functions.


\section{Characterizing ``smooth - convex" function class}
In Theorem~\ref{ThmGradient} and Theorem~\ref{ThmProx} we discussed a
class of non-smooth non-convex functions, where a gradient or a prox-type
algorithm provides satisfactory convergence to a critical point. One
possible deficiency of the theory discussed so far is that, in
Algorithm~\ref{AlgoGradient} (respectively Algorithm~\ref{AlgoProx}),
we need to specify a decomposition of the objective function $\f$ as
a difference of a smooth and a convex function (respectively, smooth +
convex - convex). Consequently, it is natural to wonder if we can
characterize the class of functions which has a decomposition needed
in Algorithms~\ref{AlgoGradient} and~\ref{AlgoProx}. Furthermore, if
a function has this a decomposition, how can we obtain such a
decomposition easily. It is worth pointing out that for the case of
Algorithm~\ref{AlgoProx}, the convex function $\nonsmoothf$ is known
in many cases. For instance, in the case of constrained optimization,
the function $\nonsmoothf$ is the indicator of the constraint
set; in many statistical estimation problems, $\nonsmoothf$ is a penalty
function on the parameters; a well-known example of such penalty
function is the $\ell_1$ penalty, which is used to obtain sparse
solutions. Hence, for all practical purposes, 
the task of characterizing
the function class mentioned in Theorems~\ref{ThmGradient} and
~\ref{AlgoProx} reduces to characterizing functions which can be
decomposed as a difference of a smooth function($\fone$) and a convex
function ($\ftwo$). In the next theorem, we characterize the class 
of of continuously differentiable functions 
that can  be written as a difference of a smooth function and a 
convex function.
\begin{theos}
\label{ThmNecSuffCond}
Given any continuously differentiable function $\f: \real^{\usedim} \mapsto \real$,  
the following two properties are equivalent:
\begin{enumerate}
    \item[(a)] There exists a $\M$-smooth function $\fone$, and 
    a convex continuously differentiable function $\ftwo$ such that:
    \begin{align*}
       \f(x) = \fone(x) - \ftwo(x) \quad \text{for all}\;\; x \in \real^{\usedim}.
    \end{align*}
    \item[(b)] The gradient of the function $\f$ satisfies the following inequality:
    \begin{align*}
           \big\langle \grad \f(x) - \grad \f(y), x - y  \big\rangle  
           \leq \M \| x - y \|^2 \quad \text{for all} \;\; x,y \in \real^{\usedim}. 
    \end{align*}       
  \end{enumerate}  
\end{theos}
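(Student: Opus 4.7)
}

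The plan is to prove both implications by exploiting a single key reformulation: condition (b) is equivalent to saying that the function $\phi(x) \mydefn \frac{M}{2}\|x\|_2^2 - f(x)$ is convex. Indeed, since $f$ (and hence $\phi$) is continuously differentiable, convexity of $\phi$ is equivalent to monotonicity of its gradient, and
\begin{align*}
\inprod{\grad \phi(x) - \grad \phi(y)}{x-y} \; = \; M\|x-y\|_2^2 - \inprod{\grad \f(x) - \grad \f(y)}{x-y},
\end{align*}
so this quantity is non-negative for all $x,y$ if and only if condition~(b) holds.

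For the implication (b) $\Rightarrow$ (a), I would exhibit the decomposition explicitly: set $\fone(x) = \frac{M}{2}\|x\|_2^2$ and $\ftwo(x) = \frac{M}{2}\|x\|_2^2 - \f(x)$. Then $\f = \fone - \ftwo$ by construction, the function $\fone$ has gradient $\grad \fone(x) = Mx$ which is $M$-Lipschitz (so $\fone$ is $M$-smooth), and $\ftwo$ is continuously differentiable because $\f$ is; convexity of $\ftwo$ is precisely the equivalence described above.

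For the converse (a) $\Rightarrow$ (b), assume $\f = \fone - \ftwo$ with $\fone$ being $M$-smooth and $\ftwo$ convex and continuously differentiable. Splitting and applying Cauchy--Schwarz together with the Lipschitz bound $\|\grad \fone(x) - \grad \fone(y)\|_2 \leq M \|x-y\|_2$ gives
\begin{align*}
\inprod{\grad \fone(x) - \grad \fone(y)}{x - y} \; \leq \; M\|x-y\|_2^2,
\end{align*}
while monotonicity of the gradient of the convex function $\ftwo$ gives $\inprod{\grad \ftwo(x) - \grad \ftwo(y)}{x-y} \geq 0$. Subtracting the two inequalities yields condition~(b).

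Neither direction should present a genuine obstacle; the only point requiring any care is the gradient characterization of convexity for continuously differentiable functions, which is a standard tool from convex analysis. The content of the result is therefore essentially the observation that the class of functions admitting a ``smooth minus convex'' decomposition coincides exactly with the class of $C^1$ functions whose gradient satisfies a one-sided Lipschitz (hypomonotonicity) condition, and the canonical decomposition is the one using the quadratic $\frac{M}{2}\|x\|_2^2$.
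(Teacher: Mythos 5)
Your proposal is correct and follows essentially the same route as the paper: the forward direction is identical (Cauchy--Schwarz plus monotonicity of $\grad \ftwo$), and for the converse the paper's decomposition $\fone(x) = f(x_0) + \inprod{\grad \f(x_0)}{x - x_0} + \frac{\M}{2}\|x - x_0\|_2^2$ differs from your $\frac{\M}{2}\|x\|_2^2$ only by an affine term, which affects neither smoothness nor convexity. The only substantive difference is that you invoke the standard gradient-monotonicity characterization of convexity for $C^1$ functions, whereas the paper re-derives the needed half of it via a first-order Taylor expansion; both are fine.
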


\begin{proof}
We establish the 
equivalence by proving the circle of 
implications $(a) \Longrightarrow (b) \Longrightarrow (a)$.

\paragraph{Implication (a) $\Longrightarrow$ (b):}
For any $\M$-smooth function $\fone$, we have the following:
\begin{align}
   \big\langle \grad \fone(x) - \fone(y), x - y \big\rangle 
   &\leq \| \grad \fone(x) - \fone(y) \|_2 
   \times \| x- y\|_2 \nonumber \\
   & \stackrel{(i)}{\leq} \M\|x - y\|_2^2, 
   \quad \text{for all} \;\; x,y \in \real^{\usedim}, 
   \label{EqnMSmooth}
 \end{align} 
 where step (i) follows since the gradient $\grad \fone$ is $M$ Lipschitz.
 Next note that the gradient of a differentiable convex 
 function is a monotone
 operator, and we have that for all $x,y \in \real^{\usedim}$:
 \begin{align}
    \big\langle\grad \ftwo(x) - \grad \ftwo(y), x -y\big\rangle
    \geq 0.
    \label{EqnMonotone}
  \end{align} 
Subtracting equation~\eqref{EqnMonotone} 
from equation~\eqref{EqnMSmooth}, we obtain the desired 
upper bound in part (b).

\paragraph{Implication (b) $\Longrightarrow$ (a):} We prove this implication 
by finding a $\M$--smooth function $\fone$ and a convex differentiable
function $\ftwo$ such that $\f = \fone - \ftwo$. To this end, we fix
any $x_0 \in \real^{\usedim}$ and consider the following two
functions:
\begin{subequations}
  \begin{align}
       \fone(x) & \mydefn f(x_0) + \big\langle \grad \f(x_0), x - x_0\big\rangle
        + \frac{\M}{2}\|x - x_0\|_2^2 
        \label{EqnDefFone} \\
        \ftwo(x) & \mydefn \fone(x) - \f(x).
    \label{EqnDefFtwo}
  \end{align}
\end{subequations}
The function $\fone$ in definition~\eqref{EqnDefFone} 
is $\M$-smooth by inspection. Since 
both the functions $\f$ and $\fone$ are continuously differentiable,
the function $\ftwo$ is continuously differentiable by construction.
In order to complete the proof, it suffices to show that
the function $\ftwo$ is convex. To this end, the first order Taylor series expansion of the function $\ftwo$ yields
\begin{align}
     \ftwo(x) &= \ftwo(y) + \big\langle \grad \ftwo(y + t(x-y)), x - y\big\rangle 
     \quad \text{for some}~t~\in [0,1] \nonumber \\
      & = \ftwo(y) + \big\langle \grad \ftwo(y),x - y\big\rangle
      + \big\langle \grad \ftwo(y + t(x - y)) - \grad \ftwo(y), x - y\big\rangle.
      \label{EqnTaylor}
\end{align} 
Expanding the term
${\big\langle \grad \ftwo(y + t(x - y)) - \grad \ftwo(y), 
x - y\big\rangle}$ above yields,
\begin{align*}
  \big\langle \grad \ftwo(y + t(x - y)) - \grad \ftwo(y), x -
  y\big\rangle & \stackrel{(i)}{=} \M \|x - y\|_2^2 - \frac{\big\langle
  \grad \f(y + t(x - y)) - \grad \f(y), t(x - y)\big\rangle}{t} \\
& \stackrel{(ii)}{\geq} \M  \|x - y\|_2^2 - \M t \|x - y\|_2^2 \\
& \stackrel{(iii)}{\geq} 0.
\end{align*}
Here step (i) follows by substituting the expression of the function
$h$; step (ii) follows from the gradient inequality of part (b), and
step (iii) follows from the inequality 
$0 \leq t \leq 1$. Since the vectors $x,y \in \real^{\usedim}$ 
were arbitrary, the 
inequality $\big\langle \grad \ftwo(y + t(x - y)) - \grad \ftwo(y), 
x - y\big\rangle \geq 0$  combined with equation~\eqref{EqnTaylor} proves the convexity of the function $\ftwo$, thereby proving the claimed
result in \mbox{part (a)}.

\end{proof}

\paragraph{Comments:}
It would be interesting to characterize the class of
DC-based functions mentioned in problem~\eqref{ProbGradient} when the
convex function $\ftwo$ is non-differentiable. Indeed,
we obtain a larger and more interesting non-differentiable class of functions.
It would interesting to see whether Theorem~\ref{ThmNecSuffCond} can be
suitably generalized in this setting.




\bibliography{koulikpaper}
\bibliographystyle{abbrv}


\end{document}